\begin{document}

\title{Universally Invariant Learning in Equivariant GNNs}

\author{
    \bfseries Jiacheng Cen$^{1\,2\,3}$, Anyi Li$^{1\,2\,3}$, Ning Lin$^{1\,2\,3}$, Tingyang Xu$^{4\,5}$, Yu Rong$^{4\,5}$ \\
    \bfseries Deli Zhao$^{4\,5}$, Zihe Wang$^{1\,2\,3}$, Wenbing Huang$^{1\,2\,3}$\thanks{Wenbing Huang is the corresponding author.} \\
    $^1$ Gaoling School of Artificial Intelligence, Renmin University of China \\
    $^2$ Beijing Key Laboratory of Research on Large Models and Intelligent Governance \\
    $^3$ Engineering Research Center of Next-Generation Intelligent Search and Recommendation, MOE \\
    $^4$ DAMO Academy, Alibaba Group, Hangzhou, China\quad
    $^5$ Hupan Lab, Hangzhou, China \\
    \texttt{\{jiacc.cn, li\_anyi, ninglin00\}@outlook.com};
    \texttt{\{xuty\_007, yu.rong\}@hotmail.com}; \\
    \texttt{zhaodeli@gmail.com};
    \texttt{wang.zihe@ruc.edu.cn};
    \texttt{hwenbing@126.com}
}

\maketitle

\begin{abstract}
Equivariant Graph Neural Networks (GNNs) have demonstrated significant success across various applications. To achieve completeness---that is, the universal approximation property over the space of equivariant functions---the network must effectively capture the intricate multi-body interactions among different nodes. Prior methods attain this via deeper architectures, augmented body orders, or increased degrees of steerable features, often at high computational cost and without polynomial-time solutions. In this work, we present a theoretically grounded framework called Uni-EGNN for constructing complete equivariant GNNs that is both efficient and practical. We prove that a complete equivariant GNN can be achieved through two key components: 1) a complete scalar function, referred to as the canonical form of the geometric graph; and 2) a full-rank steerable basis set. Leveraging this finding, we propose an efficient algorithm for constructing complete equivariant GNNs based on two common models: EGNN and TFN. Empirical results demonstrate that our model demonstrates superior completeness and excellent performance with only a few layers, thereby significantly reducing computational overhead while maintaining strong practical efficacy. 
\end{abstract}

\section{Introduction}
\label{sec:introduction}

Various types of scientific data, including chemical molecules, proteins, and other particle-based physical systems, are often represented as \emph{geometric graphs}~\citep{bronstein2021geometric,huang2026geometric}. This data structure not only captures node characteristics and edge information but also includes a 3D vector (such as position, velocity, etc.) for each node. To effectively process geometric graphs, equivariant Graph Neural Networks (GNNs) have been developed that enable equivariant message passing over nodes while adhering to the $\mathrm{E}(3)$ or $\mathrm{SE}(3)$ symmetries inherent in physical laws. These models have achieved significant success in various scientific tasks, including physical dynamics simulation, molecular property prediction, and protein design~\citep{liu2025rotation,wu2023equivariant,yuan2025nonstationary,han2024geometric,yang2025physics,wang2024ab,wang2024enhancing,watson2023novo,ingraham2023illuminating,frank2025modeling,li2025geometric,yan2025georecon,li2025size,wang2025polyconf,wang2025wgformer,liu2025denoisevae,xue2025se,wu2025siamese,li2024powder,lin2025tokenizing,liu2024equivariant,zheng2024relaxing,liu2025equivariant,liu2024segno,lianyi2025geometric,liao2025mofbfn,liu2025anisotropicnoise,xu2025dualequinet}.

In current literature, a key objective in the design of equivariant GNNs is to achieve completeness, which refers to the universal approximation property~\citep{han2024survey}. This concept was initially explored by \cite{dym2021on}, who proved the universality of high-degree steerable models, specifically TFN~\citep{thomas2018tensor}. Subsequent models, such as SEGNN~\citep{brandstetter2021geometric} and MACE~\citep{batatia2022mace}, can be similarly confirmed to be complete by establishing their relations with TFN. 
Regrettably, the theory proposed by \cite{dym2021on} applies only to point clouds (i.e., fully connected geometric graphs) and relies on sufficiently large values of the layer number, body order, and feature degree. 
\begin{wrapfigure}[28]{r}[0pt]{0.5\textwidth}
    \centering
    \includegraphics[width=\linewidth]{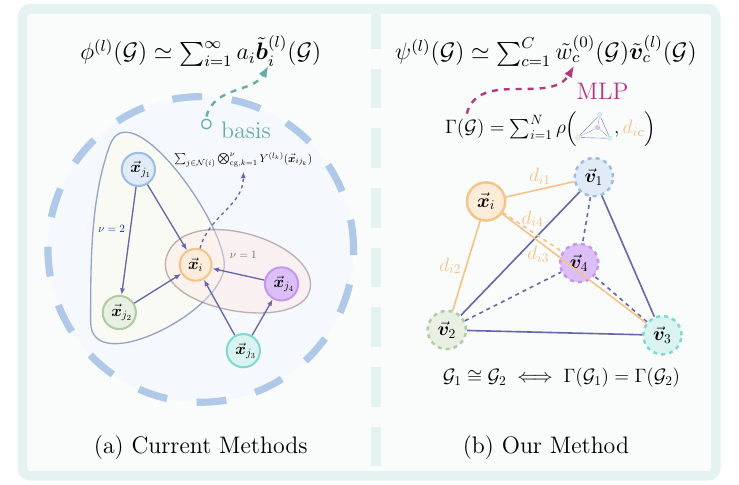}
    \vspace{-0.5cm}
    \caption{Difference between current methods and our Method. (a) Current methods can be understood as an expansion of multi-body high-degree bases $\svb_i^{(l)}$ coupled with the Clebsch–Gordan (CG) tensor product (\cref{eq:basis_set_sh}). The completeness is achieved with sufficiently large values of the body order and feature degree. (b) In contrast, our method proposes a novel expansion which does not necessarily require CG tensor product (\cref{eq:new-expansion}): the sum over a finite basis set with dynamic weights dependent on the input. The completeness is ensured if the weights (the scalar function $\Gamma(\gG)$) are complete and the basis set $\smV^{(l)}$ is full-rank. }
    \label{fig:title_figure}
    \vspace{-0.5cm}
\end{wrapfigure}
More recently, the introduction of the GWL-test~\citep{joshi2023expressive} and further researches~\citep{hordan2024weisfeiler,delle2023three,sverdlov2025on,li2023distance} have provided a new perspective by connecting the completeness of invariant models to the geometric isomorphism problem through an extension of the Weisfeiler-Lehman (WL) test~\citep{weisfeiler1968reduction}. Unfortunately, the GWL-test has only quasi-polynomial solutions~\citep{mckay2012practical}, inheriting the same weakness as the original WL-test.  

In this paper, we explore complete equivariant GNNs in a more effective and operable way. To begin with, we reformulate existing equivariant GNNs as an expansion of multi-body high-degree bases coupled with the Clebsch–Gordan (CG) tensor product (\cref{eq:basis_expand,eq:basis_set_sh}). This reformulation allows us to clearly identify the key limitations of current methods: they fail to achieve complete expansion when the degree and body order of CG tensor products are constrained, or incur prohibitively high computational costs otherwise. To overcome these weaknesses, we propose a novel expansion of equivariant GNNs (\cref{eq:new-expansion}), which does not necessarily require CG tensor product: the sum over a finite basis set with dynamic weights dependent on the input. The completeness is achieved if the weights (called the scalar function) are complete and the basis set is full-rank. 

Interestingly, constructing a complete scalar function is equivalent to solving the geometric isomorphism problem, which, unlike the traditional isomorphism task on topological graphs, can be resolved in polynomial time~\citep{mckay2012practical}. Actually, we have presented an algorithm operating with a complexity of $\mathcal{O}(N^6)$ where $N$ is the number of nodes, based on the four-point positioning principle, and further turned it into \emph{canonical form} for comprehensive embeddings of geometric graphs. Additionally, we theoretically prove that a full-rank basis set of any degree can always be constructed, if the input geometric graph is asymmetric. Building on this foundation, we propose a more efficient algorithm with a complexity of $\mathcal{O}(N^2)$ to construct the canonical form specifically for asymmetric graphs.

Thanks to our theoretical findings, we modify existing equivariant GNNs (such as EGNN~\citep{satorras2021en} and TFN~\citep{thomas2018tensor}) into complete models. Specifically, we introduce two complete implementations of EGNN, termed EGNN/TFN$_{\text{cpl}}$-global and EGNN/TFN$_{\text{cpl}}$-local. We conduct seven sets of experiments, validating our theoretical results and demonstrating the superior performance of our method compared to previous models. Our model achieves great performance with few layers, thereby significantly reducing computational overhead while maintaining strong practical efficacy\footnote{Code is available at \url{https://github.com/GLAD-RUC/Uni-EGNN}.}.

\section{Related Work}
\label{sec:Related Work}
\textbf{Equivariant GNNs. }
According to the operators used to build the model, equivariant GNNs can be categorized into two main classes: scalarization-based models and tensor-product-based models. Scalarization-based models (\emph{e.g.}, EGNN~\citep{satorras2021en} and PaiNN~\citep{schutt2021equivariant}) utilize scalars (\emph{e.g.} distance and angles) as coefficients to linearly combine 3D vectors during node updates. In contrast, tensor-product-based models (\emph{e.g.}, TFN~\citep{thomas2018tensor} and MACE~\citep{batatia2022mace}) utilize spherical harmonics to maintain the equivariance of message passing and realize interactions between steerable features of different degrees through CG tensor products. Historically, researchers maintained that tensor-product-based models could provide richer information, despite their significantly higher computational resource demands~\citep{an2025equivariant,li2025e2former,xie2025price}. However, the recent success of spherical-scalarization models such as SO3KRATES~\citep{frank2024euclidean}, HEGNN~\citep{cen2024high}, and GotenNet~\citep{aykent2025gotennet} has prompted a reevaluation of this assumption, suggesting a potential shift towards scalar learning. Our research provides an important conclusion in this direction: a complete $l$th-degree steerable model can be constructed using two components—a complete scalar model and a full-rank basis set of $l$th-degree steerable features.

\textbf{Completeness of Equivariant GNNs. }
Completeness in equivariant GNNs refers to their ability to approximate any continuous function with arbitrary precision on geometric graphs, thus sparking diverse research avenues. Initially, \cite{dym2021on} demonstrated the universality of tensor-product-based models, notably TFN~\citep{thomas2018tensor}, in fully connected geometric graphs. Subsequently, building on the work of \cite{villar2021scalars}, later studies began examining the completeness of scalar models. For instance, \cite{li2025completeness} established the completeness of models such as DimeNet~\citep{gasteiger2020directional}, GemNet~\citep{klicpera2021gemnet}, and SphereNet~\citep{liu2022spherical} under $\mathcal{A}$-unsymmetry conditions. Meanwhile, frame-based models~\citep{puny2022frame,kaba2023equivariance,baker2024an,wang2024rethinking} emerged, innovatively separating geometric structures into group operations and orbits for separate processing. However, this approach can disrupt permutation invariance in symmetric graphs. Furthermore, the GWL test framework, inspired by the WL-test on topological graphs, offers an upper bound on the expressive power of equivariant GNNs over sparse graphs, thereby influencing further researches~\citep{hordan2024weisfeiler,delle2023three,sverdlov2025on,li2023distance}. Nevertheless, it is worth noting that many of these methods rely on message passing and require multiple iterations, akin to multi-layer networks. Our method utilizes the canonical form to derive a complete scalar function, and when integrated with a  basis set, facilitates the construction of a fully equivariant neural network that can be effectively achieved with just a single-layer architecture.

\section{Method}
\label{sec:Method}

In this section, we first introduce necessary preliminaries in \cref{sec:Preliminaries}. Then in \cref{sec:RethinkingEqui-GNNs}, we review existing equivariant GNNs from the perspective of basis construction. Subsequently, we present our framework in \cref{sec:Reformulating_equi_GNNs}, which 
proposes to construct complete equivariant GNNs from the perspective based on output space. The two components to achieve completeness, including a canonical form of geometric graphs and a full-rank steerable basis set, are presented in~\cref{sec:reach_completeness}. Finally in~\cref{sec:implementation}, we demonstrate how to modify typical equivariant GNNs to be complete based on our theoretical findings. Comprehensive theoretical analysis and proofs are provided in \cref{sec:theoretical_analysis}.  

\subsection{Preliminaries}
\label{sec:Preliminaries}

\textbf{Geometric graph. } 
A geometric graph of $N$ nodes is defined as $\gG\coloneqq(\mH,\gmX;\mA)$, where $\mH\coloneqq\{\vh_i\in\sR^{C_H}\}_{i=1}^N$ and $\gmX\coloneqq\{\gvx_i\in\sR^{3}\}_{i=1}^N$ are node features and 3D coordinates, respectively; $\mA\in\R^{N\times N}$ is the adjacency matrix representing topological connections between different nodes and each edge could be assigned with an edge feature $\ve_{ij}$ if necessary.

\textbf{Equivariance. }
Let $\sX$ and $\sY$ be the input and output vector spaces, respectively. A function $\phi: \sX\to\sY$ is called \emph{equivariant} with respect to group $\mathfrak{G}$ if
\begin{equation}\label{eq:equ_with_rep}
    \forall \Gg\in \GG, \phi(\rho_{\sX}(\Gg) \vx) = \rho_{\sY}(\Gg)\phi(\vx),
\end{equation}
where $\rho_{\sX}$ and $\rho_{\sY}$ are the group representations in the input and output spaces, respectively. Since translation invariance can always be achieved by translating the center of all coordinates to the origin, we omit translation and focus solely on equivariance concerning $\mathrm{O}(3)$, the group consisting of rotation and inversion. This implies $\mathrm{O}(3)=\mathrm{SO}(3)\times C_i$, where $\mathrm{SO}(3)$ represents the rotation group, and $C_i = \{\mathfrak{e}, \mathfrak{i}\}$ denotes the inversion group, with $\mathfrak{e}$ as the identity and $\mathfrak{i}$ as the inversion. We define two additional groups: the Euclidean group $\mathrm{E}(3)$ that further involves translation into $\mathrm{O}(3)$, and permutation group $\mathrm{S}_N$ acting on a set of $N$ elements. 

\textbf{Irreducible representations and steerable features. }
For a group element $\Gr \in \mathrm{SO}(3)$, its representation is typically achieved through irreducible representations such as the Wigner-D matrix $\mD^{(l)}(\Gr) \in \sR^{(2l+1) \times (2l+1)}$, where $l$ denotes the degree.
We call features $\svv^{(l)}$ as $l$th-degree steerable features if they are transformable using $\mD^{(l)}$. The most common steerable features can be obtained via spherical harmonics, expressed as $Y^{(l)}(\gvx/\|\gvx\|)=[Y_m^{(l)}(\gvx/\|\gvx\|)]_{m=-l}^{l}$, which satisfy the property $Y^{(l)}(\mR_{\Gr}\gvx/\|\mR_{\Gr}\gvx\|) = \mD^{(l)}(\Gr)Y^{(l)}(\gvx/\|\gvx\|)$. According to \cite{weiler20183d}, spherical harmonics provide a \emph{complete} basis set for $\mathrm{SO}(3)$-equivariant functions on the sphere. Clebsch–Gordan (CG) tensor products are usually leveraged to facilitate interactions between steerable features of differing degrees. Specifically, given features $\svv^{(l_1)}$ and $\svv^{(l_2)}$, a new feature $\svv^{(l)}$ can be generated, for $|l_1 - l_2| \leq l \leq l_1 + l_2$. This calculation is abbreviated as $\svv^{(l)} = \svv^{(l_1)} \otimes_{\text{cg}} \svv^{(l_2)}$. By including parity~\citep{geiger2022e3nn}, the above concepts can be naturally extended to the orthogonal group $\mathrm{O}(3)$. A more detailed discussion is available in \cref{sec:Preliminaries_appendix}.

\subsection{Rethinking equivariant GNNs: a perspective based on basis construction}
\label{sec:RethinkingEqui-GNNs}

By definition, a complete equivariant model is able to express any equivariant function.
For instance, spherical harmonics provide complete bases for $\mathrm{SO}(3)$-equivariant functions on the sphere: $\phi(\gvx/\|\gvx\|)$, enabling the representation of all such functions. In 3D space beyond the sphere, a complete function is achieved by combining spherical harmonics with a learnable radial basis function $\varphi(\|\gvx\|)$. However, deriving complete equivariant functions on geometric graphs, namely $\svphi^{(l)}(\gG)$,\footnote{For clarity, our main text focuses exclusively on graph-level functions, where $\svphi^{(l)}(\gG)$ remains invariant to any node permutation. Notably, our discussion is generalizable to both node- and edge-level functions, which will be discussed in \cref{sec:subgraph_level_function}} is more complex due to the need to consider multi-body interactions and permutation symmetry.

To begin with, let us define a function set $\sF^{(l)}$, which encompasses all square-integrable $l$th-degree steerable functions on geometric graphs. Let $\sB^{(l)} \subset \sF^{(l)}$ denote the basis set that expands the function space of equivariant GNNs. The learning process of equivariant GNNs becomes an approximation:
\begin{equation}\label{eq:basis_expand}
    \textstyle\svphi^{(l)}(\gG)\approx\sum_{\svb_{i}^{(l)}\in\sB^{(l)}}a_i\svb_i^{(l)}(\gG),
\end{equation}
where $a_i\in\sR$ are weight coefficients independent of the input.
Enhancing the expressiveness of equivariant GNNs involves enlarging $\sB^{(l)}$ until it coincides with $\sF^{(l)}$. When $\Span(\sB^{(l)})=\sF^{(l)}$, the model is termed a \emph{complete} $l$th-degree steerable model, or simply a \emph{complete} model, and the basis set is denoted as $\sBcpl^{(l)}$. Specifically, complete 0th-degree steerable models are referred to as \emph{complete scalar models}.

Below, we demonstrate that standard equivariant GNNs share a common basis set construction, differing primarily in their body-order formulation. For simplicity and without loss of generality, we assume uniform node and edge features that can either be treated as identical or safely ignored. This abstraction allows us to focus purely on the geometric and topological aspects of the problem while maintaining the core theoretical insights.
\begin{example}[Basis Set of Common Equivariant GNNs]\label{exp:basis-set}
The basis set for common equivariant GNNs, such as EGNN~\citep{satorras2021en}, HEGNN~\citep{cen2024high}, TFN~\cite{thomas2018tensor}, and MACE~\citep{batatia2022mace}, can be unified into the following form:
\begin{equation}\label{eq:basis_set_sh}
    \textstyle\sB_{\nu}^{(l)}=\{\textstyle\sum_i\sum_{\vj\in\mathcal{N}(i)}\bigotimes_{\text{cg},k=1}^{\nu}Y^{(l_{k})}(\gvx_{ij_{k}}/\|\gvx_{ij_{k}}\|)\},
\end{equation}
where $\nu \geq 1$ denotes the body order, $\vj=(j_1,\dots, j_{\nu})$ represents all chosen ordered neighbors, and $\gvx_{ij_{k}}\coloneqq\gvx_i-\gvx_{j_k}$. When only single-body neighbor is considered (\emph{i.e.}, $\nu=1$), \cref{eq:basis_set_sh} forms the basis set for EGNN~\citep{satorras2021en} and HEGNN~\citep{cen2024high}, applicable to degrees $l=1$ and $l\geq 1$, respectively. In contrast, for multi-body interactions, \cref{eq:basis_set_sh} corresponds to TFN~\citep{thomas2018tensor} when the body order $\nu=1$, or MACE~\citep{batatia2022mace} with higher body orders $\nu\geq 1$.
\end{example}

Although these models adhere to the form of \cref{eq:basis_set_sh}, their expressive power differs markedly. Multi-body models can create entirely new bases through tensor products and achieve completeness by either utilizing a sufficiently high body order~\cite{dusson2019atomicce} or stacking multiple layers~\citep{dym2021on}, whereas single-body steerable models could not. However, multi-body models suffer from notable limitations: a) They may not encompass steerable features of all degrees, if input degrees are improperly chosen. b) The computational complexity increases sharply with higher degrees $l$ and body orders $\nu$, resulting in significant computational costs.

\subsection{Reformulating equivariant GNNs: a perspective based on output space}
\label{sec:Reformulating_equi_GNNs}

Expanding the basis set of a complete equivariant GNN is theoretically appealing but practically challenging. In this section, we adopt an alternative approach by deriving the necessary model design from the characteristics of the desired output results. We identify a key insight: a complete $l$th-degree steerable model can be constructed using two components—a complete scalar model and a full-rank basis set of $l$th-degree steerable features.

We denote the target $l$th-degree steerable function as $\svphi^{(l)}\in\sF^{(l)}$. Although it might be represented with infinite bases in $\sBcpl^{(l)}$, the output of $\svphi^{(l)}(\gG)$ is still a vector in $\sR^{2l+1}$. 
Let $\smV^{(l)}(\gG)=[\svv_1^{(l)}(\gG),\dots, \svv_C^{(l)}(\gG)]\in\sR^{(2l+1)\times C}$ define an $l$th-degree steerable model with $C$-channel outputs of $l$th-degree steerable feature ($C\geq 2l+1$). We have the following theorem:

\begin{restatable}[Dynamic Method]{theorem}{dynamicMethod}\label{theo:scalar_product_basis}
Given a geometric graph $\gG$, suppose there is a matrix $\smV^{(l)}(\gG)$ with $C$ channels of $l$th-degree steerable features denoted as $\svv_c^{(l)}(\gG)$ satisfying $\Span(\smV^{(l)}(\gG))=\sF^{(l)}(\gG)\coloneqq\{\svf^{(l)}(\gG)\mid\svf^{(l)}\in\sF^{(l)}\}\subset\sR^{2l+1}$. Then for any $l$th-degree steerable function $\svphi^{(l)}\in\sF^{(l)}$, there always exists $\svw^{(0)}(\gG)\coloneqq[\ssw_c^{(0)}(\gG)]_{c=1}^C$ with $C$-channel output scalars, such that 
\begin{equation}
\label{eq:new-expansion}
    \svphi^{(l)}(\gG)=\textstyle\sum_{c=1}^C\ssw_c^{(0)}(\gG)\svv_c^{(l)}(\gG).
\end{equation}
\end{restatable}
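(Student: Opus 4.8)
The claim is essentially a linear-algebra identity wrapped in an equivariance-bookkeeping check, so the plan is threefold: (i) pick a canonical solution of the (generally underdetermined) linear system $\smV^{(l)}(\gG)\svw=\svphi^{(l)}(\gG)$ by orthogonal projection; (ii) verify this solution reproduces $\svphi^{(l)}(\gG)$; and (iii) show that the coefficient map it defines is genuinely a scalar, i.e.\ $\mathrm{O}(3)$- and permutation-invariant, which is the only step with real content.

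First I would fix a geometric graph $\gG$ obeying the hypothesis. Since $\svphi^{(l)}\in\sF^{(l)}$, its value $\svphi^{(l)}(\gG)$ lies in $\sF^{(l)}(\gG)=\Span(\smV^{(l)}(\gG))$, i.e.\ in the column space of $\smV^{(l)}(\gG)\in\sR^{(2l+1)\times C}$; hence $\smV^{(l)}(\gG)\svw=\svphi^{(l)}(\gG)$ is solvable in $\svw\in\sR^{C}$. Because the solution is not unique when $C>2l+1$, I would single out the minimum-norm one and \emph{define} $\svw^{(0)}(\gG)\coloneqq\smV^{(l)}(\gG)^{+}\svphi^{(l)}(\gG)$ via the Moore--Penrose pseudoinverse. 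As $\smV^{(l)}(\gG)\smV^{(l)}(\gG)^{+}$ is the orthogonal projector onto that column space and $\svphi^{(l)}(\gG)$ already lies in it, one obtains $\sum_{c=1}^{C}\ssw_c^{(0)}(\gG)\svv_c^{(l)}(\gG)=\smV^{(l)}(\gG)\svw^{(0)}(\gG)=\svphi^{(l)}(\gG)$, which is exactly \cref{eq:new-expansion}.

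The substantive step is checking that $\svw^{(0)}$ is $0$th-degree steerable. For $\Gr\in\mathrm{O}(3)$, each channel being $l$th-degree steerable gives $\smV^{(l)}(\Gr\cdot\gG)=\mD^{(l)}(\Gr)\smV^{(l)}(\gG)$ and $\svphi^{(l)}(\Gr\cdot\gG)=\mD^{(l)}(\Gr)\svphi^{(l)}(\gG)$. I would then invoke orthogonality of the Wigner--D matrices: since $\mD^{(l)}(\Gr)$ is orthogonal, $(\mD^{(l)}(\Gr)\smV^{(l)}(\gG))^{+}=\smV^{(l)}(\gG)^{+}\mD^{(l)}(\Gr)^{\top}$, so $\svw^{(0)}(\Gr\cdot\gG)=\smV^{(l)}(\gG)^{+}\mD^{(l)}(\Gr)^{\top}\mD^{(l)}(\Gr)\svphi^{(l)}(\gG)=\smV^{(l)}(\gG)^{+}\svphi^{(l)}(\gG)=\svw^{(0)}(\gG)$. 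Permutation invariance is immediate: in the graph-level setting $\svv_c^{(l)}$ and $\svphi^{(l)}$ are already invariant under any $\sigma\in\mathrm{S}_N$, so $\svw^{(0)}$ inherits it. Hence $\svw^{(0)}$ is an admissible scalar output and the construction is complete.

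I expect the main obstacle to be conceptual rather than computational: one must be careful that the coefficients are required to be \emph{invariant scalar functions of $\gG$}, not merely real numbers attached to a single graph, and that an arbitrary solution of the linear system (e.g.\ least squares against a non-canonical basis) need not respect the symmetry---it is precisely the minimum-norm/pseudoinverse selection that commutes with the group action, because $\mathrm{O}(3)$ acts through the orthogonal matrices $\mD^{(l)}(\Gr)$. A secondary caveat worth a remark: when $C>2l+1$ the map $\gG\mapsto\svw^{(0)}(\gG)$ is continuous only where $\operatorname{rank}\smV^{(l)}(\gG)$ is locally constant, which under the standing hypothesis holds on the generic locus of graphs; the exceptional rank-drop set can be patched separately or ignored in this pointwise statement.
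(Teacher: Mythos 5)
Your proposal is correct, and it is in fact more complete than what the paper itself provides: the paper never writes out a standalone proof of \cref{theo:scalar_product_basis}, treating it as immediate from the spanning hypothesis (the appendix only records two corollaries about when the required scalar weights are expressible by a given invariant model). Your argument fills in exactly the two points the paper leaves implicit. The existence of coefficients is, as you say, just solvability of $\smV^{(l)}(\gG)\svw=\svphi^{(l)}(\gG)$ given $\svphi^{(l)}(\gG)\in\sF^{(l)}(\gG)=\Span(\smV^{(l)}(\gG))$; and your choice $\svw^{(0)}(\gG)=\smV^{(l)}(\gG)^{+}\svphi^{(l)}(\gG)$, combined with the identity $(\mD^{(l)}(\Gr)\smV^{(l)}(\gG))^{+}=\smV^{(l)}(\gG)^{+}\mD^{(l)}(\Gr)^{\top}$ for the orthogonal real Wigner-D matrices, is the cleanest way to certify that the coefficients are genuinely $\mathrm{O}(3)$-invariant (and permutation-invariant) functions of $\gG$ rather than numbers attached to one graph --- a point the paper glosses over but which is essential for the coefficients to be realizable by a ``complete scalar model.'' Your closing caveats are also well placed: the theorem is pointwise, so continuity of $\gG\mapsto\svw^{(0)}(\gG)$ across rank-drop loci is not needed for the statement as written, though it would matter for a uniform approximation claim.
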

A notable distinction from \cref{eq:basis_expand} is that the coefficient in front of the steerable features changes from $a_i$, which is independent of the input, to a scalar (function) that depends on the entire graph, denoted as $\ssw_c^{(0)}(\gG)$. A natural idea is that if we can get a complete scalar model (capturing all possible scalars), then we can always satisfy the requirements of \cref{theo:scalar_product_basis}. Another crucial consideration is that the validity of \cref{theo:scalar_product_basis} requires the basis matrix $\smV^{(l)}(\gG)$ to satisfy $\Span(\smV^{(l)}(\gG)) = \sF^{(l)}(\gG)$. We call such basis matrix that meets this criterion as \emph{$\sF^{(l)}(\gG)$-full-rank}. When $\sF^{(l)}(\gG) = \sR^{2l+1}$, the concept of \emph{$\sF^{(l)}(\gG)$-full-rank} is equivalent to the traditional definition of \emph{full-rank}. For simplicity, in contexts with no ambiguity, we refer to it as \emph{full-rank}.

By comparing \cref{eq:new-expansion} with \cref{eq:basis_expand}, our proposed dynamic method offers greater flexibility and simplicity, embodying the concept of complexity transfer. Instead of constructing the difficult complete basis set $\sBcpl^{(l)}$, we transform the process into obtaining a complete scalar function and then design the complete steerable model through \cref{theo:scalar_product_basis}. In the next section, we will detail how to acquire the complete scalar model and the full-rank basis set of $l$th-degree steerable features needed for \cref{theo:scalar_product_basis}.

\subsection{Reach completeness: canonical form and full-rank basis set}\label{sec:reach_completeness}
In this section, we introduce how to obtain the complete scalar function $\ssw_c^{(0)}(\gG)$ (referred to as the canonical form) and the full-rank basis set $\smV^{(l)}(\gG)$ required by \cref{theo:scalar_product_basis}. 

\textbf{Geometric isomorphism and canonical form. }
An equivalent problem to constructing a complete scalar function is determining the isomorphism of geometric graphs. To differentiate this from traditional graph isomorphism problem for  topological graphs, we specifically refer to our task involving geometric graphs as \emph{geometric isomorphism}. This concept can be defined similarly to the GWL-test~\citep{joshi2023expressive}, which assesses the equivalence of geometric graphs based on their geometric structures and embeddings.
\begin{restatable}[Geometric Isomorphism]{definition}{GeoIso}
\label{def:geo-iso}
Two geometric graphs $\gG(\gmX^{(\gG)}, \mA^{(\gG)})$ and $\gH(\gmX^{(\gH)}, \mA^{(\gH)})$ are called \emph{geometrically isomorphic} if they fulfill both of the following isomorphisms

\begin{enumerate}
    \item \textbf{Point Cloud Isomorphism:} The two point clouds $\gmX^{(\gG)}$ and $\gmX^{(\gH)}$ are isomorphic, \emph{i.e.}, $\exists\,\sigma\in\mathrm{S}_N,\Gg\in\mathrm{E}(3), \forall i, \gvx_i^{(\gG)}= \Gg\cdot\gvx_{\sigma(i)}^{(\gH)}$.  Here, all $\langle\sigma,\Gg\rangle$ make a nonempty set $\sM(\gG,\gH)$.
    \item \textbf{Topological Isomorphism:} The topological graphs associated with the point clouds are isomorphic, \emph{i.e.}, $\exists \langle\sigma,\Gg\rangle\in\sM(\gG,\gH), \forall i, \forall j, [\mA^{(\gG)}_{ij}]=[\mA^{(\gH)}_{\sigma(i)\sigma(j)}]$.
\end{enumerate}
Moreover, we denote the geometric isomorphism between $\gG$ and $\gH$ as $\gG\cong\gH$.
\end{restatable}
The most significant difference from the isomorphism problem in topological graphs is that, to date, it remains unclear whether there exists a polynomial-time algorithm capable of determining whether two topological graphs are isomorphic~\citep{mckay2012practical,babai2016graph}. In contrast, when it comes to distinguishing whether two geometric graphs are geometrically isomorphic in accordance with \cref{def:geo-iso}, the situation appears to be more favorable. There are indeed algorithms that can resolve this issue in polynomial time, as highlighted by \citep{evdokimov1997on}. For example, we present \cref{algo:point_clouds_isomorphic,algo:geograph_isomorphic}, which is based on the four-point positioning principle and operates with complexities of $\mathcal{O}(N^6)$ and $\mathcal{O}(N^8)$, respectively.

In most applications, the objective of an equivariant GNN is not to distinguish two geometric graphs, but rather to obtain a comprehensive embedding of a geometric graph. Consequently, we define the \emph{canonical form} similarly to the definition provided in \cite{evdokimov1997on} as follows:
\begin{restatable}[Canonical Form of Geometric Graph]{definition}{CanonoicalForm}
\label{def:canonical_form}
A canonical form of geometric graph is a graph-level scalar function $\Gamma:(\sR^{N\times 3}, \sR^{N\times N})\to\sR^H$, satisfy $\gG\cong\gH\iff\Gamma(\gG)=\Gamma(\gH)$.
\end{restatable}
Built upon \cref{algo:point_clouds_isomorphic,algo:geograph_isomorphic}, we have derived a canonical form $\Gamma(\cdot)$ as presented in \cref{algo:general_canonical_form}. In summary, the process involves three steps: a) traversing all ordered four-point sets to serve as reference points for four-point positioning, with a time complexity of $\mathcal{O}(N^4)$; b) transforming the original coordinates into distance vectors measured from the reference points, and converting both point sets and edge sets into scalar sets, with a time complexity of $\mathcal{O}(N)$; c) mapping these  scalar sets to the desired canonical form using DeepSet~\citep{zaheer2017deep}. 
\begin{restatable}[General Canonical Form]{theorem}{GeneralCanonicalForm}
\label{theo:general_canonical_form}
Given any geometric graph $\gG$, \cref{algo:general_canonical_form} provides a method to create canonical form with time complexity $\mathcal{O}(N^6)$.
\end{restatable}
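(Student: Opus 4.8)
The plan is to establish the two defining properties of a canonical form separately—invariance and completeness—together with the complexity bound. By \cref{def:canonical_form} it suffices to show that the scalar map $\Gamma$ produced by \cref{algo:general_canonical_form} satisfies $\gG\cong\gH\iff\Gamma(\gG)=\Gamma(\gH)$, and that it runs in $\mathcal{O}(N^6)$ time. I would first fix notation matching the three steps of the algorithm: for an ordered reference tuple $\tau=(\tau_1,\tau_2,\tau_3,\tau_4)$ of node indices (Step (a)), let $\Psi_\tau(\gG)$ be the \emph{relative representation}, namely the multiset over nodes $i$ of the $\mathrm{E}(3)$-invariant distance vector $(\|\gvx_i-\gvx_{\tau_1}\|,\dots,\|\gvx_i-\gvx_{\tau_4}\|)$ together with the node feature $\vh_i$, augmented by the incident edge scalars for the topological part (Step (b)); then $\Gamma(\gG)=\mathrm{DeepSet}(\{\!\!\{\Psi_\tau(\gG)\}\!\!\}_\tau)$, where the inner per-frame aggregation and the outer aggregation over all ordered $4$-tuples are both realized by DeepSet (Step (c)).

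\emph{Invariance ($\Rightarrow$).} Suppose $\gG\cong\gH$, witnessed by $\langle\sigma,\Gg\rangle\in\sM(\gG,\gH)$ as in \cref{def:geo-iso}. Since every $\Gg\in\mathrm{E}(3)$ preserves pairwise distances, $\|\gvx_i^{(\gG)}-\gvx_j^{(\gG)}\|=\|\gvx_{\sigma(i)}^{(\gH)}-\gvx_{\sigma(j)}^{(\gH)}\|$ for all $i,j$, and by topological isomorphism the edge scalars match under $\sigma$; re-indexing the inner multiset by $\sigma$ then gives $\Psi_\tau(\gG)=\Psi_{\sigma(\tau)}(\gH)$ with $\sigma(\tau)\coloneqq(\sigma(\tau_1),\dots,\sigma(\tau_4))$. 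As $\tau$ ranges over all ordered $4$-tuples and $\sigma$ permutes that index set bijectively, $\{\!\!\{\Psi_\tau(\gG)\}\!\!\}_\tau=\{\!\!\{\Psi_\tau(\gH)\}\!\!\}_\tau$, so $\Gamma(\gG)=\Gamma(\gH)$. This direction uses only $\mathrm{E}(3)$-invariance of distances and the permutation symmetry of DeepSet, so it should be routine.

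\emph{Completeness ($\Leftarrow$).} I would first invoke injectivity of the DeepSet construction on multisets of bounded cardinality over a compact scalar domain (the property assumed from \citep{zaheer2017deep}): from $\Gamma(\gG)=\Gamma(\gH)$ one gets $\{\!\!\{\Psi_\tau(\gG)\}\!\!\}_\tau=\{\!\!\{\Psi_\tau(\gH)\}\!\!\}_\tau$, and hence, again by injectivity, a frame-wise matching $\Psi_\tau(\gG)=\Psi_{\tau'}(\gH)$. The crux is a four-point positioning argument. Among all ordered $4$-tuples there is at least one \emph{good} frame $\tau$ whose reference nodes are affinely independent within the affine hull of $\gmX^{(\gG)}$; when $\dim\mathrm{aff}(\gmX^{(\gG)})=d<3$ one takes $d+1$ affinely independent nodes and pads $\tau$ with repeated indices. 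Inside $\Psi_\tau(\gG)$ these reference nodes are identifiable by their vanishing distance coordinates, their mutual distances fix their positions in $\sR^3$ up to $\mathrm{E}(3)$, and then each remaining node's four distances to them determine its position uniquely (in the degenerate cases $d<3$, uniquely up to a global reflection, which itself lies in $\mathrm{E}(3)$). Hence $\Psi_\tau(\gG)$ reconstructs $\gmX^{(\gG)}$ up to $\mathrm{E}(3)$, and with the recorded edge scalars it reconstructs $\mA^{(\gG)}$ up to the corresponding relabeling. Because $\Psi_{\tau'}(\gH)$ is the \emph{same} multiset, $\tau'$ is forced to be a good frame for $\gH$ with identical inter-frame distances, so the same reconstruction applies to $\gH$ and yields the identical point cloud and adjacency; this produces a common $\langle\sigma,\Gg\rangle$ realizing both point-cloud and topological isomorphism, i.e.\ $\gG\cong\gH$.

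\emph{Complexity and the main obstacle.} Step (a) enumerates $\mathcal{O}(N^4)$ ordered reference tuples; for each, Step (b) forms the relative distance data in $\mathcal{O}(N)$ time and the injective DeepSet encoding of Step (c) of the resulting scalar multiset costs $\mathcal{O}(N^2)$ (an $\mathcal{O}(N)$-sized multiset injectively encoded needs $\Theta(N)$-dimensional per-element features), while the outer DeepSet over the $\mathcal{O}(N^4)$ per-frame encodings adds only $\mathcal{O}(N^4)$; the dominant term is $\mathcal{O}(N^4)\cdot\mathcal{O}(N^2)=\mathcal{O}(N^6)$. I expect the main obstacle to be the rigorous handling of the good frame in the completeness direction: proving that an affinely independent reference tuple always exists whatever the (possibly planar, collinear, or coincident) configuration of $\gmX^{(\gG)}$, that the trilateration reconstruction is exact and correctly absorbs the residual reflection ambiguities into $\mathrm{E}(3)$, and that a good frame for $\gG$ necessarily matches a good frame for $\gH$ so the two reconstructions are directly comparable; by comparison the invariance direction and the complexity count are bookkeeping.
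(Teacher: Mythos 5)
Your proposal is correct and follows essentially the same route as the paper: four-point positioning to reconstruct the point cloud (and, via the recorded edge scalars, the adjacency) up to $\mathrm{E}(3)$ from a single reference frame, injectivity of the DeepSet encodings to pass between equality of $\Gamma$ and equality of the underlying multisets, and the $\mathcal{O}(N^4)\cdot\mathcal{O}(N^2)$ complexity count. The paper's own proof is a two-line appeal to DeepSet completeness plus the comments in \cref{algo:general_canonical_form}, so you are simply supplying the invariance and reconstruction details it leaves implicit --- and you are in fact more careful than the paper about degenerate (coplanar or lower-dimensional) configurations, for which the algorithm as written, enumerating only non-coplanar $4$-tuples, would produce an empty set $\sT$.
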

However, \cref{algo:general_canonical_form} remains impractical because of its high complexity. We now explore the feasibility of attaining a more efficient canonical form by sacrificing some versatility. The crux of \cref{algo:general_canonical_form} lies in the choice of four non-coplanar points, which contributes to a quartic complexity in traversal. If we reformulate this as a generative problem, by treating the non-coplanar points as learnable graph-level features, we may reduce this quartic complexity factor. Specifically, we consider applying a $\mathrm{E}(3)$-equivariant GNN $\zeta$ to generate four non-coplanar nodes (called virtual nodes below). We regard them as reference points in the four-point positioning principle, avoiding the fourth-order complexity cost by the original traversal. Thus, we derive \cref{algo:faster_canonical_form}, yielding more efficiency.
\begin{restatable}[Faster Canonical Form]{theorem}{FasterCanonicalForm}
\label{theo:faster_canonical_form}
Given an $\mathrm{E}(3)$-equivariant function $\zeta$ that generates four non-coplanar points on $\gG$, \cref{algo:faster_canonical_form} is able to create canonical form  with time complexity $\mathcal{O}(N^2)$.
\end{restatable}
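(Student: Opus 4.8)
The plan is to verify the two implications in \cref{def:canonical_form} for the map $\Gamma$ computed by \cref{algo:faster_canonical_form}, and then to count its running time. I will use two facts as black boxes. First, for any fixed bound $N$ on cardinality, a DeepSet \citep{zaheer2017deep} can realize an injective permutation-invariant map on multisets of at most $N$ elements drawn from a bounded subset of a Euclidean space. Second, the \emph{four-point positioning} fact: if $v_1,v_2,v_3,v_4\in\sR^3$ are non-coplanar, then $p\mapsto(\|p-v_1\|,\|p-v_2\|,\|p-v_3\|,\|p-v_4\|)$ is injective on $\sR^3$ — distances to the three non-collinear points $v_1,v_2,v_3$ pin $p$ down to its mirror image across their plane $\Pi$, and, since $\Pi$ is exactly the perpendicular bisector plane of the segment joining $p$ to its mirror image, the off-plane point $v_4$ has distinct distances to the two candidates and removes the ambiguity. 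By hypothesis $\zeta$ outputs four non-coplanar virtual nodes, so the second fact applies to them.

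\textbf{Forward direction} ($\gG\cong\gH\Rightarrow\Gamma(\gG)=\Gamma(\gH)$). Suppose some $\langle\sigma,\Gg\rangle\in\sM(\gG,\gH)$ also realizes the topological isomorphism of \cref{def:geo-iso}, so that relabelling the nodes of $\gH$ by $\sigma$ and acting by $\Gg\in\mathrm{E}(3)$ on the coordinates turns $\gH$ into $\gG$ exactly. Since $\zeta$ is $\mathrm{E}(3)$-equivariant and its virtual-node output is a graph-level (permutation-invariant) object, the virtual nodes of $\gG$ equal $\Gg$ applied to those of $\gH$. Rigid motions preserve pairwise distances, so every scalar produced by the algorithm — the real-node-to-virtual-node distances, the virtual-node configuration, and the edge scalars — is unchanged, while $\sigma$ only permutes which node (resp. edge) carries which scalar tuple. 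Hence the node-tuple multiset and the edge-tuple multiset fed to the DeepSets coincide for $\gG$ and $\gH$, and by permutation invariance $\Gamma(\gG)=\Gamma(\gH)$.

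\textbf{Reverse direction} ($\Gamma(\gG)=\Gamma(\gH)\Rightarrow\gG\cong\gH$). Taking the DeepSets injective (first black-box fact), $\Gamma(\gG)=\Gamma(\gH)$ forces the node-tuple multisets and edge-tuple multisets, together with the recorded virtual-node configuration, to be identical. Using the virtual nodes of $\gG$ as a reference frame and the four-point positioning fact, each real node of $\gG$ is reconstructed from its distance tuple to a unique location relative to that frame; doing the same for $\gH$ and matching the identical multisets yields a node bijection $\sigma$ and a rigid motion $\Gg\in\mathrm{E}(3)$ aligning the two frames, so $\gvx_i^{(\gG)}=\Gg\cdot\gvx_{\sigma(i)}^{(\gH)}$ for all $i$ and $\sM(\gG,\gH)$ is nonempty. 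Because each node-tuple additionally carries the node feature and the node's coordinates in the frame, it acts as a unique key (with multiplicities handled by the multiset), so equality of the edge-tuple multisets — edges recorded as pairs of keys plus adjacency value — forces $[\mA^{(\gG)}_{ij}]=[\mA^{(\gH)}_{\sigma(i)\sigma(j)}]$ under the same $\langle\sigma,\Gg\rangle$; this is exactly $\gG\cong\gH$. The hard part will be this direction: passing rigorously from ``equal DeepSet outputs'' to ``equal multisets'' needs the DeepSet to be expressive enough at cardinality $N$ over a bounded coordinate range (the standard caveat behind all such completeness claims), and the reconstruction must stay valid in degenerate configurations (coincident node positions, or distinct nodes sharing a distance tuple), which is precisely why the per-node tuple is augmented so that it serves as a consistent key for both the point-cloud and the topological parts of the isomorphism.

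\textbf{Complexity.} Calling the $\mathrm{E}(3)$-equivariant GNN $\zeta$ with a constant number of layers costs $\mathcal{O}(N^2)$ in the dense case (and $\mathcal{O}(N)$ on sparse graphs); computing the $4N$ node-to-virtual distances is $\mathcal{O}(N)$; forming the at most $N^2$ edge scalar tuples is $\mathcal{O}(N^2)$; and each DeepSet is linear in its input size. Summing these contributions gives the claimed $\mathcal{O}(N^2)$.
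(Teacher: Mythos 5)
Your proposal is correct and follows essentially the same route as the paper: sufficiency via $\mathrm{E}(3)$-equivariance of $\zeta$ plus invariance of the distance tuples and DeepSet permutation invariance, and necessity via DeepSet injectivity combined with the four-point positioning principle (the paper phrases this as a reduction to its \cref{algo:geograph_isomorphic}), with the same $\mathcal{O}(N^2)$ count dominated by the edge multiset. Your write-up is in fact more explicit than the paper's terse necessity argument about the caveats (DeepSet expressivity at cardinality $N$, degenerate/coincident configurations, and using the node tuple as a key for the topological part), but these are elaborations, not a different proof.
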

It should be noted that \cref{algo:general_canonical_form} can encode any geometric graph, while \cref{algo:faster_canonical_form} requires that four non-coplanar reference nodes can be learned through an equivariant GNN. In past studies, such as FastEGNN~\citep{zhang2024improving}, it is assumed that non-coplanar virtual nodes can always be found, but in the discussion later, we will see that this is not the case.

\textbf{Full-rank basis set. }
As previously addressed, the validity of \cref{eq:new-expansion} critically depends on the $\sF^{(l)}(\gG)$-full-rank basis set, a condition not easy to satisfy. 
The study in \cite{cen2024high} has demonstrated that, for symmetric geometric graphs (defined in \cref{def:symmetric_graph}), $\sF^{(l)}(\gG)$ will degenerate to zero functions for certain values of degree $l$, implying that it is hard to design a full-rank basis set in this case.  In practice, most scenarios study the case of asymmetric graphs, so we give priority to discussing them here. Our key inquiry is whether full-rank basis sets can always be constructed for asymmetric geometric graphs. We first present a pivotal theorem that significantly aids in our analysis.
\begin{restatable}[Coloring on Asymmetric Graph]{theorem}{colorExist}\label{theo:asym_color}
In an asymmetric geometric graph, each point can be assigned a unique color. This implies the existence of an $\mathrm{E}(3)$ invariant function that maps the features of the $i$th node to distinct values $\vh_i$. Similarly, for directed edges $ij$, their features are mapped to distinct values $\ve_{ij}$.
\end{restatable}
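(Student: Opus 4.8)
\emph{Proof plan.} The plan is to build the coloring from a \emph{rooted} version of the canonical form of \cref{theo:general_canonical_form}, and to use asymmetry only to upgrade the property ``canonical'' into ``injective''. First I would unpack the hypothesis: $\gG$ being asymmetric (i.e.\ not symmetric in the sense of \cref{def:symmetric_graph}) means that the only pair $\langle\sigma,\Gg\rangle\in\mathrm{S}_N\times\mathrm{E}(3)$ satisfying $\gvx_i=\Gg\cdot\gvx_{\sigma(i)}$ for all $i$ and $[\mA_{ij}]=[\mA_{\sigma(i)\sigma(j)}]$ for all $i,j$ is $\langle\mathrm{id},\mathfrak{e}\rangle$; I will write $\mathrm{Aut}(\gG)$ for this (now trivial) stabilizer. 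As a side remark that will be reused for the full-rank basis set, asymmetry forces $\gmX$ to span $\sR^3$: a coplanar cloud is fixed pointwise by the reflection across its affine hull, which lies in $\mathrm{O}(3)$ once coordinates are centered, so such a graph would be symmetric; hence the four-point positioning of \cref{algo:general_canonical_form} is always applicable on an asymmetric graph.

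Next, for each node index $i$ I would form the rooted geometric graph $\gG_i$ obtained from $\gG$ by marking node $i$ with a distinguished feature, and set $\vh_i\coloneqq\Gamma(\gG_i)$, where $\Gamma$ is the canonical form produced by \cref{algo:general_canonical_form} (which already processes node/edge features, so marking a single node stays in scope). Reading \cref{def:canonical_form} for rooted graphs, $\vh_i=\vh_j$ holds iff $\gG_i\cong\gG_j$, i.e.\ iff there is a geometric isomorphism $\gG\to\gG$ carrying the root $i$ to the root $j$; but a root-preserving self-isomorphism of $\gG$ is exactly an element $\langle\sigma,\Gg\rangle\in\mathrm{Aut}(\gG)$ with $\sigma(i)=j$, and asymmetry forces $\sigma=\mathrm{id}$, hence $j=i$. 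Thus $i\mapsto\vh_i$ is injective and the $N$ node colors are pairwise distinct. The map is $\mathrm{E}(3)$-invariant because $\Gamma$ depends only on the isomorphism class, and it is permutation-compatible in the required sense since $(\tau\cdot\gG)_{\tau(i)}\cong\gG_i$, so the color follows its node. The directed-edge claim is the same argument with doubly-rooted graphs: set $\ve_{ij}\coloneqq\Gamma(\gG_{i,j})$, where $\gG_{i,j}$ marks the \emph{ordered} pair $(i,j)$; then $\ve_{ij}=\ve_{kl}$ produces $\langle\sigma,\Gg\rangle\in\mathrm{Aut}(\gG)$ with $\sigma(i)=k$ and $\sigma(j)=l$, which by asymmetry is trivial, so $(i,j)=(k,l)$.

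I expect the main obstacle to be the middle chain of equivalences that turns ``$\vh_i=\vh_j$'' into ``$i=j$'', because three ingredients must be made to mesh exactly there: (i) that \cref{theo:general_canonical_form} and \cref{def:canonical_form} extend verbatim to rooted geometric graphs, which needs a short check that marking a node or an ordered pair only refines the scalar sets fed to the final permutation-invariant aggregator and leaves the $\mathcal{O}(N^6)$ construction intact; (ii) the identification of a root-preserving self-isomorphism of $\gG$ with an automorphism of $\gG$ moving one root to the other, which requires carefully combining both clauses of \cref{def:geo-iso}; and (iii) the translation of ``asymmetric'' into ``$\mathrm{Aut}(\gG)$ is trivial'', so that no nontrivial $\sigma$ can occur. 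Once these are settled the conclusion follows with no further computation; equivalently, since the nodes of an asymmetric graph are totally distinguished by the canonical form's internal per-node descriptors, the canonical node ordering it induces has no ties, and one may simply take the color of node $i$ to be its rank in that ordering.
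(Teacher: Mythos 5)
Your proposal is correct, but it reaches the conclusion by a genuinely different (and arguably more constructive) route than the paper. The paper first reduces the claim to pairwise distinguishability --- ``for every pair $i\neq j$ there is \emph{some} invariant function separating them'' --- by concatenating the pairwise separators $f^{ij}$ into a single coloring, and then proves the pairwise statement by contradiction: if no invariant function separates $i$ and $j$, the transposition $\mP$ swapping them must yield a point cloud isomorphic to $\gmX$ (detected via the canonical form), producing a nontrivial $\Gg\in\mathrm{E}(3)$ with $\gvx_i=\Gg\cdot\gvx_j$ and $\gvx_j=\Gg\cdot\gvx_i$, contradicting asymmetry. You instead exhibit the separating function directly as the rooted canonical form $\vh_i=\Gamma(\gG_i)$, so that $\vh_i=\vh_j$ immediately yields an automorphism carrying root $i$ to root $j$, which asymmetry kills. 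What your version buys is a single explicit, simultaneously-defined coloring (no $f^{ij}$-aggregation step) and a clean handling of the directed-edge claim via doubly-rooted graphs, which the paper's proof does not spell out at all; what it costs is the extension of \cref{def:canonical_form} and \cref{algo:general_canonical_form} to rooted (node-featured) graphs, which you correctly flag as a check rather than a triviality, since \cref{algo:general_canonical_form} as written only feeds distance vectors and edge features into the DeepSets. In fact, the paper's own contradiction step (``$\mP\gmX$ must be point cloud isomorphic with $\gmX$'') is only airtight if one has a node-level complete invariant in hand, which is essentially your rooted $\Gamma$; in that sense your argument fills in the step the paper leaves informal.

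One caveat you share with the paper but do not state: step (iii), ``asymmetric $\Rightarrow \mathrm{Aut}(\gG)$ trivial,'' fails if two nodes have coincident coordinates (and identical features/adjacency), because then $\langle(i\,j),\Ge\rangle$ is a nontrivial automorphism even though no nontrivial element of $\mathrm{O}(3)$ stabilizes $\gG$, so such nodes are genuinely indistinguishable by any invariant function. The paper's proof invokes ``we have assumed there are no overlapping points'' exactly at this juncture; your write-up should make the same assumption explicit before concluding $\sigma=\mathrm{id}$.
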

\begin{figure}[!t]
\vspace{-1.5cm}
\centering
\subfigure[Original]{\label{fig:chiral1original}\includegraphics[width=0.28\linewidth]{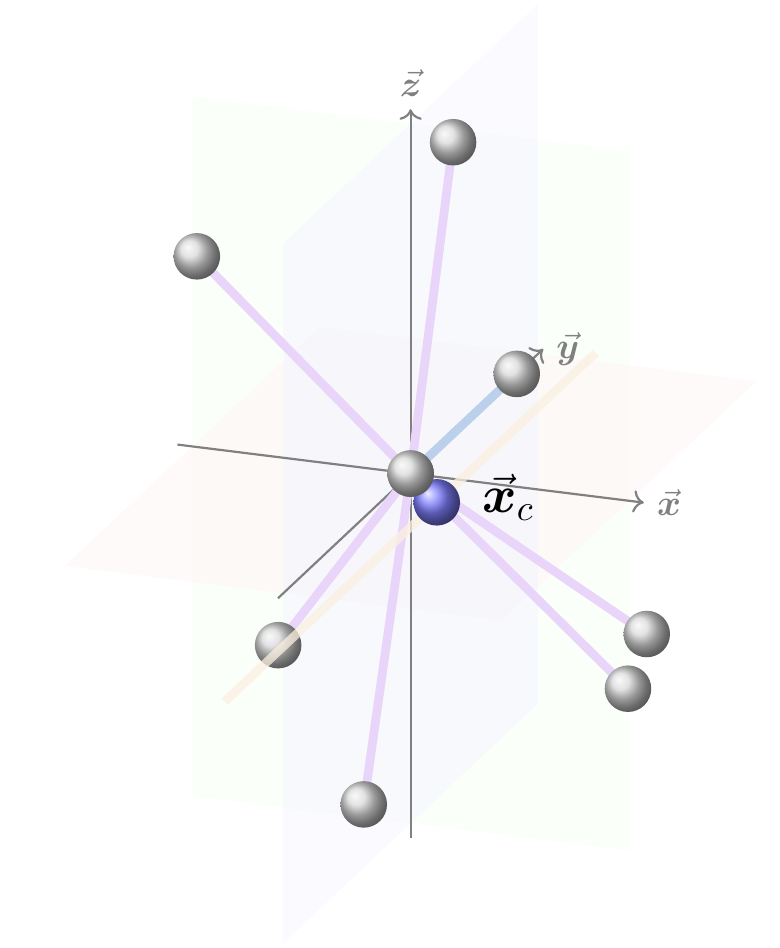}}
\subfigure[Center]{\label{fig:chiral1center}\includegraphics[width=0.28\linewidth]{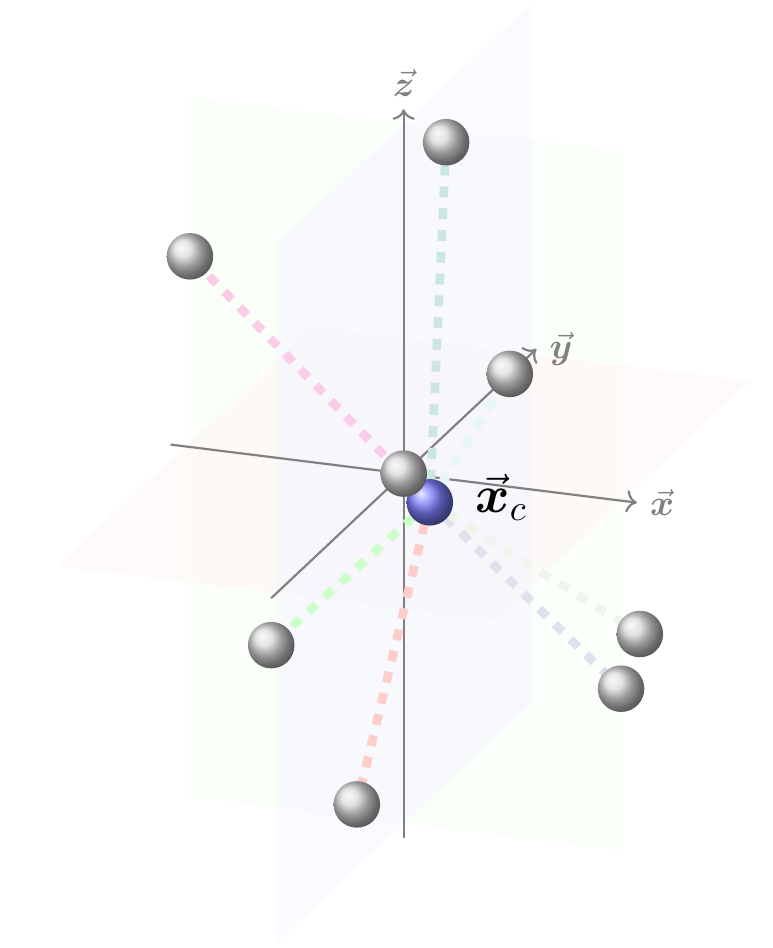}}
\subfigure[Colored]{\label{fig:chiral1colored}\includegraphics[width=0.28\linewidth]{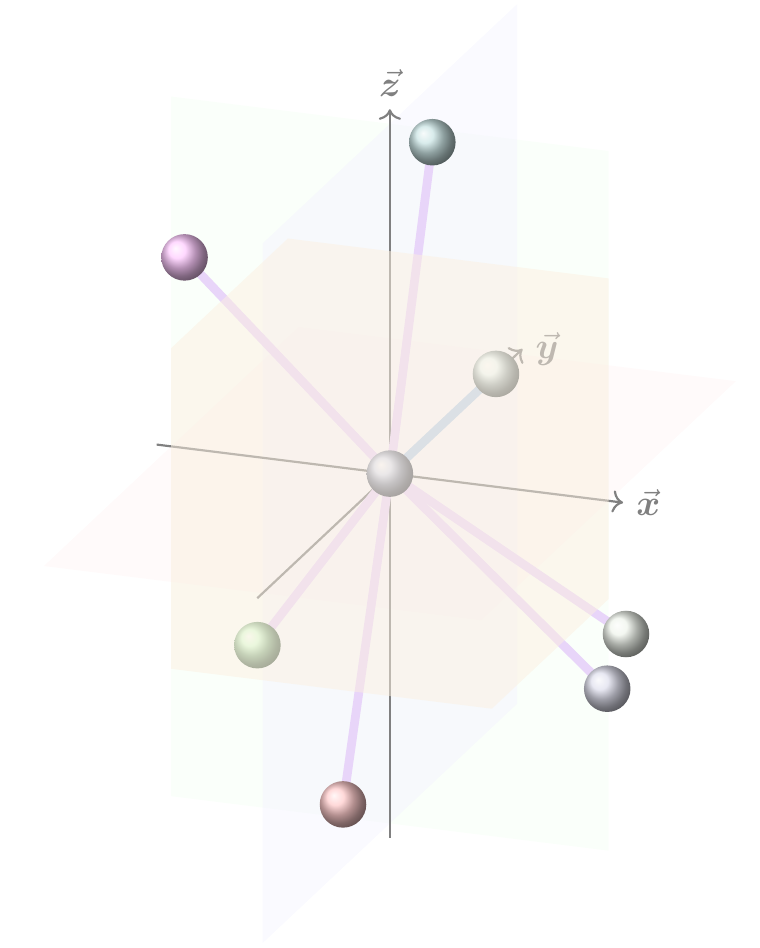}}
\vspace{-0.2cm}
\caption{Different vectors are decoupled by coloring, thereby expanding the space of virtual nodes. The yellow area in the figure represents the out space of 1st-degree functions. \cref{fig:chiral1original}: All nodes have the same features, and the output can only appear in a straight line in one-dimensional space. \cref{fig:chiral1center}: Calculate the distance to the center $\|\gvx_{ic}\|$ to update each node feature so that each node has a different color (feature). \cref{fig:chiral1colored}: After coloring, all vectors are decoupled, so the output can be generated in the entire 3D space.}
\label{fig:color_center}
\vspace{-0.2cm}
\end{figure}
We propose two node coloring methods: a) Distance to the center, denoted as $\oplus$; b) Tensor product, denoted as $\otimes$. We denote the uncolored model by $\varnothing$ and illustrate the coloring procedure in \cref{fig:color_center}, taking the $\oplus$ method as an example. A detailed description is provided in \cref{tab:colormethod}. This theorem is fundamental and forms the core of our subsequent analyses. It decouples the roles of different nodes and edges, simplifying our discussion. This framework enables us to enhance or suppress specific basis functions by selecting appropriate weight functions, or even to focus on a particular basis function by setting the weights of all other basis functions to zero. Given that the node coordinates of the entire geometric graph are non-coplanar, the coordinate differences $\{\gvx_{ij}\}$ form a full-rank matrix, which allows us to achieve the desired $\smV^{(l)}(\gG)$. Additionally, spherical harmonic functions facilitate the mapping of first-degree steerable features to higher-degree representations. We obtain the following theorem.
\begin{restatable}[Existence of Full-Rank Basis Set]{theorem}{fullRankBasisSetExist}\label{theo:full_rank_basis_set}
For any given asymmetric graph $\gG$, an $\sF^{(l)}(\gG)$-full-rank $\smV^{(l)}(\gG)$ can always be constructed for any degree $l$.
\end{restatable}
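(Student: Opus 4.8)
The plan is to replace the abstract notion of an $\sF^{(l)}(\gG)$-full-rank basis with the ordinary one and then construct it explicitly. First I would show that when $\gG$ is asymmetric its stabilizer inside $\mathrm{O}(3)$ (acting jointly with $\mathrm{S}_N$ on the nodes) is trivial, so the equivariance constraint $\svf^{(l)}(\gG)=\mD^{(l)}(\Gg)\,\svf^{(l)}(\gG)$ imposes nothing; conversely, since $\mathrm{O}(3)$ then acts freely on the orbit of $\gG$, setting $\svf^{(l)}(\Gg\cdot\sigma\cdot\gG)\coloneqq\mD^{(l)}(\Gg)\,\svv$ for $\Gg\in\mathrm{O}(3)$, $\sigma\in\mathrm{S}_N$ is well defined and extends to a steerable function attaining any prescribed $\svv\in\sR^{2l+1}$. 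Hence $\sF^{(l)}(\gG)=\sR^{2l+1}$, and it suffices to exhibit an equivariant model whose $C\ge 2l+1$ channels, evaluated at $\gG$, span $\sR^{2l+1}$. The construction rests on three ingredients: the unique coloring of \cref{theo:asym_color}, the observation that an asymmetric configuration is genuinely three-dimensional, and the fact that $l$th-degree spherical harmonics send a spanning set of directions in $\sR^3$ to a spanning set of $\sR^{2l+1}$.

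Next I would settle degree $l=1$. A configuration whose centered coordinates lie in a proper linear subspace $W\subsetneq\sR^3$ is fixed pointwise by the nontrivial orthogonal map acting as $+1$ on $W$ and $-1$ on $W^\perp$; this is a symmetry of $\gG$ that also leaves $\mH$ and $\mA$ unchanged, so an asymmetric $\gG$ must span $\sR^3$, and therefore $\{\gvx_{ij}\}$ spans $\sR^3$. Now fix any target $\svv\in\sR^3$. By \cref{theo:asym_color} every node carries a distinct invariant color, so the scalarization-based message $\sum_{i}\sum_{j}\psi(\vh_i,\vh_j)\,\gvx_{ij}$ can be made equal to $\svv$: choose the MLP weight $\psi$ to output, on the finitely many distinctly-colored ordered pairs $(i,j)$, a solution of the consistent linear system $\sum_{i,j}c_{ij}\,\gvx_{ij}=\svv$ (universality of MLPs on a finite set). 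This is a single channel of an $\mathrm{E}(3)$-equivariant GNN; picking three independent targets gives a rank-$3$ $\smV^{(1)}(\gG)$, and in fact shows that any prescribed $\svv\in\sR^3$ is the output of some channel.

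Finally I would lift $l=1$ to arbitrary $l$. Since $\Span\{\,Y^{(l)}(\svv/\|\svv\|):\svv\in\sR^3\setminus\{0\}\,\}$ is a nonzero $\mathrm{SO}(3)$-invariant subspace of the irreducible space $\sR^{2l+1}$, it equals $\sR^{2l+1}$, so there exist directions $\svv_1,\dots,\svv_{2l+1}$ with $\{Y^{(l)}(\svv_k/\|\svv_k\|)\}_k$ a basis of $\sR^{2l+1}$. By the previous step each $\svv_k$ is realized by a channel, and post-composing that channel with the equivariant map $\svv\mapsto Y^{(l)}(\svv/\|\svv\|)$ (equivalently the solid-harmonic polynomial) produces $2l+1$ channels spanning $\sR^{2l+1}$, i.e.\ the desired $\smV^{(l)}(\gG)$. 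For the opposite-parity copy of the degree-$l$ representation of $\mathrm{O}(3)$, I would multiply these channels by a pseudoscalar such as the signed volume $\det[\gvx_a-\gvx_d,\ \gvx_b-\gvx_d,\ \gvx_c-\gvx_d]$ of four colored nodes, which is nonzero because $\gG$ is non-coplanar and changes sign under inversion precisely because $\gG$ admits no improper symmetry; combining both sectors covers $\sF^{(l)}(\gG)$ in the $\mathrm{O}(3)$ setting.

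I expect the lifting step to be where care is needed: one must choose the $2l+1$ directions $\svv_k$ simultaneously realizable (automatic, since every $\svv\in\sR^3$ is realizable) and in sufficiently general position that $\{Y^{(l)}(\svv_k/\|\svv_k\|)\}_k$ stays linearly independent (an open, dense condition, by the irreducibility argument), and one must track the $\mathrm{O}(3)$ parity sectors so the chirality pseudoscalar genuinely yields the second irrep rather than vanishing. The remaining ingredients---turning ``there exist coefficients'' into ``there exists an MLP'' by universal approximation on a finite color set, and the reduction $\sF^{(l)}(\gG)=\sR^{2l+1}$---are routine once \cref{theo:asym_color} is in hand.
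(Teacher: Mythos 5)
Your proposal is correct and follows the same core strategy as the paper's own proof: use the unique node coloring of \cref{theo:asym_color} to equivariantly select individual node coordinates (equivalently, to assign arbitrary coefficients to the edge vectors $\gvx_{ij}$), observe that asymmetry forces the centered configuration to span $\sR^3$ so that these vectors yield a rank-$3$ set at degree $1$, and lift to degree $l$ via spherical harmonics. The difference is one of completeness rather than of method: the paper's written proof stops after constructing the full-rank $\smV^{(1)}$ and only gestures at higher degrees in the surrounding text, whereas you supply the missing pieces --- the identification $\sF^{(l)}(\gG)=\sR^{2l+1}$ via triviality of the stabilizer, the irreducibility argument showing that $2l+1$ suitably chosen directions have linearly independent images under $Y^{(l)}$, and the parity bookkeeping (multiplication by a nonvanishing pseudoscalar built from four colored, non-coplanar nodes) needed in the full $\mathrm{O}(3)$ setting. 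These additions land exactly where the paper's argument is thinnest, so your version is the more complete one. One small wording nit: the signed volume changes sign under inversion as a matter of algebra for \emph{any} configuration; what asymmetry buys you is only that this pseudoscalar is nonzero (non-coplanarity) and that the four reference nodes can be selected in a permutation-equivariant way via the coloring.
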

For symmetric graphs, the situation is more complex, making the construction of a full-rank basis set considerably difficult. An potentially feasible approach could be using a tensor-product-based model, such as TFN~\citep{thomas2018tensor} for basis set construction, which will be detailed in the appendix~\cref{sec:symmetric_graph}.

\subsection{Implementation of Complete Models}
\label{sec:implementation}

The last subsection demonstrated how to achieve theoretical completeness. Here, we present a practical implementation of complete models, using EGNN~\citep{satorras2021en} as an example due to its simplicity and universality. 
In \cref{exp:basis-set}, we point out that EGNN could be incomplete since its basis set $\sB^{(1)}$ is possibly reduced to the sum of all edges $\{\gvx_{ij}\}$. However, the edge set itself $\{\gvx_{ij}\}$ is full-rank and could be consider as the full-rank basis set $\smV^{(1)}$ in \cref{theo:scalar_product_basis}. To make EGNN complete, the remaining issue is how to implement the complete scalar function $\ssw_c^{(0)}$. Inspired by \cref{algo:faster_canonical_form}, we derive the $\mathrm{E}(3)$-equivariant function $\zeta$ in \cref{theo:faster_canonical_form} using FastEGNN~\citep{zhang2024improving,zhang2025fast}, along with the pre-coloring process and several other modifications.  

Essentially, we first color the geometric graph and then construct the reference virtual nodes through FastEGNN in the following form:  
\begin{equation}
    \textstyle\gmZ=\bm{1}_{4}\gvx_c+\frac{1}{N}\sum_{i=1}^N\varphi_{\gmZ}(\vh_i)\cdot(\gvx_i-\gvx_c),\qquad \gmM_i=\bm{1}_{4}\gvx_i-\gmZ,
\end{equation}
where $\gmZ\in\sR^{4\times 3}$ denotes the coordinates of four reference virtual nodes, $\gmM_i\in\sR^{4\times 3}$ computes coordinate difference between node $i$ and all virtual nodes, and $\gvx_c=\frac{1}{N}\textstyle\sum_{i=1}^N\gvx_i$ signifies the coordinate center. In \cref{algo:faster_canonical_form}, the key point is to convert the coordinate information into a distance vector to the reference virtual node. To further refine the information, we employ the inner product form from GMN~\citep{huang2022equivariant}, as an alternative to distance vectors in \cref{algo:faster_canonical_form}. The node features are updated as follows:
\begin{equation}
    \vh_i=\vh_i+\varphi_{h}(\vm_{i}), \qquad \vm_{i}=\gmM_i^\top\gmM_i/\|\gmM_i^\top\gmM_i\|_{\text{F}}.
\end{equation}
Afterward, global operations (\emph{e.g.}, pooling) aggregate the information from all points and edges, mimicking DeepSet~\citep{zaheer2017deep} calculations to obtain the desired canonical form. Strictly speaking, the virtual nodes forming the tetrahedron should also be included in \cref{algo:faster_canonical_form}, but we omit this term as it degrades performance in our experiments.

If we consider a single-layer EGNN, it becomes the following form:
\begin{equation}\label{eq:egnn}
    \textstyle\texttt{EGNN}_{\text{cpl}}(\gG)=\frac{1}{N}\sum_{i=1}^N\gvx'_{i}=\gvx_c+\sum_{\langle i,j\rangle\in\mathcal{E}}\varphi_{\gvx}(\vm_{ij,\text{cpl}})\cdot \gvx_{ij},
\end{equation}
where the message $\vm_{ij,\text{cpl}}=\varphi_{\vm}(\vh_i,\vh_j,\|\gvx_{ij}\|^2,\ve_{ij})$ has been a complete scalar function since $\vh_i$ already contains the information in the canonical form. Through \cref{theo:scalar_product_basis}, we verify that such single-layer EGNN is a complete model.

The above approach can also be extended to tensor-product-based models, and we have implemented corresponding improvements in TFN~\citep{thomas2018tensor}. Since virtual nodes (\emph{i.e.} $\gmZ$) are generated globally across the entire graph, we refer to these models as EGNN/TFN$_{\text{cpl}}$-global. To better capture local information, virtual nodes can also be generated for individual nodes, resulting in the EGNN/TFN$_{\text{cpl}}$-local variants. Detailed model architectures are provided in \cref{tab:globalmodel,tab:localmodel}.

\section{Experiment}
\label{sec:Experiment}
In this section, we conduct two categories of experiments, totaling seven in number, to validate our theory and methods: a) three toy datasets in \cref{sec:exp_expressivity} to assess the expressivity of our models; b) the remaining four in \cref{sec:exp_performance} evaluate the actual performance of our models. Here, we provide a brief overview of the experiments. For further details, please refer to \cref{sec:exp_details}.

\input{Table/ToyDataset}
\begin{table}[!t]
\caption{MSE loss for predicting the coordinates of the Monge point and twelve-point center and incenter of a tetrahedron. Models with a superscript $*$ indicate that they utilize only half of the hidden dimension of the standard node embeddings. The optimal value is denoted in \textbf{bold}, while the suboptimal value is indicated with an \underline{underline}.}
\label{tab:tetrahedron}
\resizebox{\textwidth}{!}{
\begin{tabular}{lcccccccccccc}
\toprule
    & \multicolumn{4}{c}{\textbf{Monge Point}} & \multicolumn{4}{c}{\textbf{Twelve-point Center} ($\times 10^{-1}$)} & \multicolumn{4}{c}{\textbf{Incenter} ($\times 10^{-2}$)} \\
    & $1$-layer & $2$-layer & $3$-layer & $4$-layer & $1$-layer & $2$-layer & $3$-layer & $4$-layer & $1$-layer & $2$-layer & $3$-layer & $4$-layer \\
\midrule
    EGNN                   
    & 1.188   & 0.374  & 0.185  & 0.131  
    & 1.320  & 0.413  & 0.265 & 0.161 
    & 32.54 & 0.527 & 0.141 & 0.036 \\
    FastEGNN                   
    & 1.187 & 0.374 & 0.149 & 0.979
    & 1.320 & 0.417 & 0.194 & 0.107
    & 32.40 & 0.421 & 0.188 & 0.055 \\
    HEGNN                  
    & 1.188   & 0.356  & 0.157  & 0.110  
    & 1.320  & 0.317  & 0.192 & 0.159 
    & 32.54 & 0.386 & 0.078 & 0.058 \\
    TFN                    
    & 1.188   & 0.467  & 0.289  & 0.221  
    & 1.320  & 0.519  & 0.315 & 0.246 
    & 32.54 & 2.715 & 0.151 & 0.060 \\
    MACE                   
    & 1.188   & 0.468  & 0.288  & 0.223  
    & 1.320  & 0.521  & 0.316 & 0.246 
    & 32.54 & 2.707 & 0.153 & 0.090 \\
    Equiformer
    & 0.416 & 0.296 & 0.099 & 0.116
    & 0.456 & 0.187 & 0.122 & 0.147
    & 0.736 & 0.077 & 0.055 & 0.054\\\midrule
    EGNN$_{\text{cpl}}$-global
    & \textbf{0.108}   & \textbf{0.086}  & {0.097}  & 0.104  
    & \textbf{0.112}  & \textbf{0.097}  &  {0.099} &  {0.088} 
    & \textbf{0.025} & \textbf{0.018} & \textbf{0.015} & \textbf{0.012} \\
    EGNN$_{\text{cpl}}^*$-global
    & \underline{0.175}   & \underline{0.116}  & 0.148  & 0.111  
    &  \underline{0.155}  &  \underline{0.144} & 0.174 & 0.152 
    &  \underline{0.046} &  \underline{0.025} &  0.059 &  {0.026} \\
    EGNN$_{\text{cpl}}$-local
    & 0.460   & 0.210  & \underline{0.083}  & \textbf{0.052}  
    & 0.510  & 0.232  & \textbf{0.095} & \textbf{0.067} 
    & 0.388 & 0.258 & 0.031 & 0.029 \\
    EGNN$_{\text{cpl}}^*$-local
    & 0.460   & 0.252  & 0.138  & {0.091}  
    & 0.508  & 0.250  & 0.182 & 0.102 
    & 0.395 & 0.320 & 0.084 & 0.090 \\
    TFN$_{\text{cpl}}$-global
    & 0.468 & 0.162 & 0.233 & 0.084 
    & 0.516 & 0.319 & 0.187 & 0.099 
    & 8.983 & 0.398 & 0.077 & 0.051\\
    TFN$_{\text{cpl}}^*$-global
    & 0.473 & 0.291 & 0.158 & 0.096 
    & 0.525 & 0.317 & 0.181 & 0.105 
    & 7.858 & 0.354 & 0.085 & 0.054\\
    TFN$_{\text{cpl}}$-local    
    & 0.460 & 0.210 & 0.089 & 0.088
    & 0.518 & 0.237 & \underline{0.097} & \underline{0.078}
    & 0.676 & 0.048 & 0.029 & \underline{0.021}\\
    TFN$_{\text{cpl}}^*$-local  
    & 0.473 & 0.244 & \textbf{0.082} & \underline{0.073}
    & 0.519 & 0.271 & 0.122 & 0.085
    & 0.693 & 0.056 & 0.039 & 0.024\\
\bottomrule
\end{tabular}
}
\vspace{-0.5cm}
\end{table}

\subsection{Expressivity}
\label{sec:exp_expressivity}
\textbf{Dataset setup. }
The \textbf{completeness test} is derived from the GWL-test~\citep{dym2021on} and the IASR-test~\citep{pozdnyakov2020incompleteness}, with the objective of distinguishing between two geometrically non-isomorphic graphs. We here focus on the \emph{$k$-chain} test and 2-body/3-body tasks for testing. The \textbf{chirality test} is an experiment designed by ourselves, which includes three tasks as illustrated in \cref{fig:chiral}. This study aims to address two objectives: a) assess the model's ability to distinguish geometric graphs under reflection transformations; (2) investigate whether distinct coloring strategies generate different virtual node coordinates, as validated by determinant calculations serving as mathematical indicators for reflection determination. 

\textbf{Models. } 
For the completeness test, we selected several baseline models, including SchNet~\citep{schutt2018schnet}, EGNN~\citep{satorras2021en}, GVP-GNN~\citep{jing2021learning}, TFN~\citep{thomas2018tensor}, and MACE~\citep{batatia2022mace}. Furthermore, models utilizing our canonical form are denoted as \emph{Model}$_{\text{cpl}}$, with \emph{Basic}$_{\text{cpl}}$ specifically using the canonical form as the graph embedding directly. For the chirality test, we provide two options to identify reflections: a) Determinant $\det([\gvz_2, \gvz_3, \gvz_4]-\gvz_1\bm{1}_{1\times 3})$; b) Tensor product, as described in \cref{eg:color_tp}.

\textbf{Results. }
Results are presented in \cref{tab:completeness} (the 2-body/3-body of completeness test), \cref{tab:chiral} (the chirality test) and \cref{tab:k-chain} (the \emph{$k$-chain} test of completeness test). As shown in \cref{tab:completeness}, all models with canonical form, \emph{i.e.} Model$_{\text{cpl}}$, reach an $100\%$ accuracy, no matter whether the original model could distinguish the geometric graphs. As shown in \cref{tab:chiral}, the model utilizing the determinant is highly dependent on the coloring scheme. In the special case illustrated in \cref{fig:chiralcounterexample}, only the model employing the tensor product can effectively distinguish between the geometric graphs, which demonstrates the importance of choosing an appropriate staining strategy.

\subsection{Performance on Physical Systems}
\label{sec:exp_performance}
\textbf{Dataset setup. }
We conducted four experiments to evaluate our model's performance across different scenarios: a) \textbf{tetrahedron center prediction}, which involves predicting the Monge point, twelve-point center, and incenter of a tetrahedron based on its four points (with identical node features); b) \textbf{$5$-body system}~\citep{kipf2018neural}, c) \textbf{$100$-body system}~\citep{zhang2024improving}, d) \textbf{MD17 dataset}~\citep{huang2022equivariant}, and e) \textbf{Water-3D mini}~\citep{zhang2024improving,zhang2025fast}, where predictions are made based on initial coordinates and velocities. Specifically, for the $5$-body, $100$-body, and Water-3D mini systems, we predict the future positions of charged particles or fluid particles, while for the MD17 dataset, we predict future atomic trajectories within eight different molecules.

These experiments provide a comprehensive evaluation of our model. We assess its performance on graph-level targets (a) and node-level targets (b-e), explore the influence of model architecture layers (a-b), and test its scalability in large systems (c,e) as well as its applicability to real-world datasets (d-e). Detailed information on each dataset can be found in \cref{tab:dataset_details}.

\textbf{Models. } 
In both experiments, we selected the following baseline models: EGNN~\citep{satorras2021en}, FastEGNN~\citep{zhang2024improving,zhang2025fast}, HEGNN~\citep{cen2024high}, TFN~\citep{thomas2018tensor}, MACE~\citep{batatia2022mace}, and Equiformer~\citep{liao_equiformer_2023}. We compared these baseline models with our proposed models, specifically EGNN/TFN$_{\text{cpl}}$-local and EGNN/TFN$_{\text{cpl}}$-global. More detailed information is provided in \cref{tab:para_tet,tab:para_nbody}.

\begin{table}[!t]
\vspace{-0.5cm}
\begin{minipage}[c]{\textwidth}
\vspace{-0.3cm}
\begin{minipage}[c]{0.48\textwidth}
\vspace{-0.1cm}
\begin{figure}[H]
    \centering
    \input{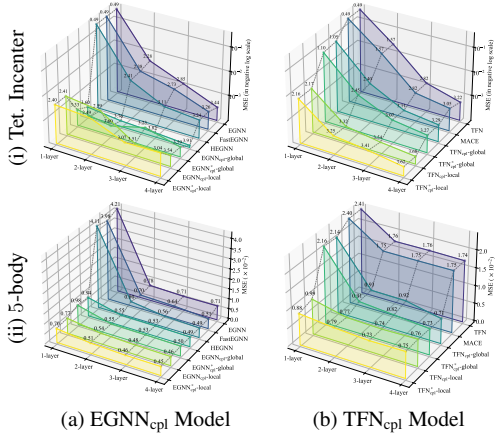}
    \vspace{-0.3cm}
    \caption{Visualization of MSE loss.}
    \label{fig:loss_tet_nbody}
\end{figure}
\end{minipage}\quad
\begin{minipage}[c]{0.48\textwidth}
\makeatletter\def\@captype{table}
\caption{MSE loss on $5$-body system.}
\label{tab:nbody}
\resizebox{\linewidth}{!}{
\begin{tabular}{lcccc}
\toprule
    & \multicolumn{4}{c}{\textbf{MSE Loss on $5$-body system} ($\times 10^{-2}$)}\\
    & $1$-layer & $2$-layer & $3$-layer & $4$-layer \\
\midrule
    EGNN                  & 4.214 & 0.780 & 0.710 & 0.712\\
    FastEGNN              & 3.983 & 0.705 & 0.640 & 0.509\\
    HEGNN                 & 4.114 & 0.801 & 0.561 & 0.489\\
    TFN                   & 2.411 & 1.758 & 1.758 & 1.739\\
    MACE                  & 2.403 & 1.754 & 1.746 & 1.746\\
    Equiformer            & 0.805 & 0.682 & \underline{0.465} & 0.657\\\midrule
    EGNN$_{\text{cpl}}$-global   & 0.943 & 0.546 & 0.530 & 0.492\\
    EGNN$_{\text{cpl}}^*$-global & 0.985 & 0.554 & 0.533 & 0.498\\
    EGNN$_{\text{cpl}}$-local & \underline{0.768} & \underline{0.537} & 0.481 & \underline{0.458}\\
    EGNN$_{\text{cpl}}^*$-local & \textbf{0.703} & \textbf{0.513} & \textbf{0.455} & \textbf{0.450}\\
    TFN$_{\text{cpl}}$-global   & 2.144 & 0.934 & 0.916 & 0.708\\
    TFN$_{\text{cpl}}^*$-global & 2.163 & 0.910 & 0.825 & 0.734\\
    TFN$_{\text{cpl}}$-local    & 0.988 & 0.766 & 0.738 & 0.761 \\
    TFN$_{\text{cpl}}^*$-local  & 0.883 & 0.792 & 0.734 & 0.746 \\
\bottomrule
\end{tabular}}
\end{minipage}
\vspace{-0.2cm}
\end{minipage}
\end{table}

\input{Table/100bodyMD17}

\textbf{Results. }
We evaluated the performance of each model across varying numbers of layers in the first two experiment. For specific values, please refer to \cref{tab:tetrahedron} and \cref{tab:nbody}. The experimental results are visualized in \cref{fig:loss_tet_nbody} to facilitate comparison. Our model demonstrates superior performance across all tasks, often achieving results comparable to those of other models with multiple layers—sometimes even with just a single layer. In addition, our improved model has also achieved significant improvements on large-scale datasets and real-world datasets, see~\cref{tab:100_body,tab:md17,tab:water-3d-mini}, where our model is in the leading position in the former and in the latter, where we achieve performance leadership in more than half of the tasks.

\textbf{Additional findings. }
Furthermore, the baselines in \cref{fig:loss_tet_nbody} demonstrate that losses are equal when a single layer is used, confirming the degradation predicted in \cref{eg:degeneration_uncolored} and highlighting the necessity of node coloring. While TFN and MACE outperform EGNN and HEGNN with a single layer, their advantage decreases with two or more layers. This is likely because EGNN/HEGNN may initially lack necessary basis functions, whereas tensor products used in TFN/MACE in deeper layers produce redundant terms. The introduction of the dynamic method significantly enhances the performance of both EGNN and TFN, showing its ability to construct the missing basis initially and suppress redundant terms in tensor products, thus achieving \emph{adaptive} basis construction. From this perspective, exploring how to encode invariant features with more powerful architectures (\emph{e.g.}, advancing from GNNs to Graph Transformers~\citep{yuan2025survey}) represents a promising direction for future research. This perspective also helps explain the performance gains observed in models such as GotenNet~\citep{aykent2025gotennet} compared with HEGNN~\citep{cen2024high}.

\begin{table}[!t]
\centering
\vspace{-0.8cm}
\caption{Results on Water-3D-mini.}
\label{tab:water-3d-mini}
\begin{tabular}{lcccc}
\toprule
    & \multicolumn{4}{c}{\textbf{MSE Loss on Water-3D-mini} ($\times 10^{-4}$)}\\
    & $1$-layer & $2$-layer & $3$-layer & $4$-layer \\
\midrule
    EGNN & 4.904 & 4.323 & 3.649 & 3.338\\
    FastEGNN & 4.885 & 4.332 & 3.782 & 3.259\\
    HEGNN & 4.885 & 4.138 & 3.519 & 3.287\\
    EGNN$_\text{cpl}$-global & 4.368 & 3.711 & 3.294 & 3.248\\
    EGNN$_\text{cpl}$-local & \textbf{3.611} & \textbf{3.320} & \textbf{2.803} & \textbf{2.495}\\
\bottomrule
\end{tabular}
\vspace{-0.5cm}
\end{table}

\section{Conclusion}
\label{sec:Conclusion}
We propose a novel framework called Uni-EGNN, which is a dynamic method for constructing complete equivariant GNNs. In contrast to previous models, which required stacking multiple layers, increasing body order, and enhancing the degree of steerable features to achieve completeness, our dynamic method simplifies this process by requiring only two essential components: a) A canonical form (a complete scalar function) of a geometric graph; b) A full-rank steerable basis set. Additionally, we have developed an efficient implementation of dynamic method leveraging a polynomial algorithm for geometric isomorphism problems. Experimental results demonstrate that our dynamic method excels in both expressiveness and practical performance.

\section{Limitations}
\label{sec:limitations}
For symmetric graphs, constructing a full-rank basis set remains an open problem. To date, it has only been established that tensor-product-based models (\emph{e.g.}, TFN~\citep{thomas2018tensor}) can construct such a basis, while for scalar methods (\emph{e.g.}, HEGNN~\citep{cen2024high}) this remains uncertain—and we even conjecture that it may not be feasible. Another limitation of this work lies in the experimental evaluation: due to the lack of tasks with objective functions of degree $l \geq 2$, we are unable to empirically validate the effectiveness of the proposed method in predicting high-degree steerable objective functions.

\section*{Acknowledgement}
This work was jointly supported by the following projects and institutions: 
the National Natural Science Foundation of China (No. 62376276, No. 62172422); 
Beijing Nova Program (No. 20230484278); 
the Fundamental Research Funds for the Central Universities;
the Research Funds of Renmin University of China (23XNKJ19); 
Public Computing Cloud, Renmin University of China;
Damo Academy (Hupan Laboratory) through Damo Academy (Hupan Laboratory) Innovative Research Program and Damo Academy Research Intern Program.

\bibliography{Reference}

\begin{thebibliography}{100}

\bibitem{bronstein2021geometric}
Michael~M. Bronstein, Joan Bruna, Taco Cohen, and Petar Veličković.
\newblock Geometric deep learning: Grids, groups, graphs, geodesics, and gauges, 2021.

\bibitem{huang2026geometric}
Wenbing Huang and Jiacheng Cen.
\newblock Geometric graph learning for drug design.
\newblock {\em Deep Learning in Drug Design}, pages 133--151, 2026.

\bibitem{liu2025rotation}
Wenjie Liu, Yifan Zhu, Ying Zha, Qingshan Wu, Lei Jian, and Zhihao Liu.
\newblock Rotation-and permutation-equivariant quantum graph neural network for 3d graph data.
\newblock {\em IEEE Transactions on Pattern Analysis and Machine Intelligence}, 2025.

\bibitem{wu2023equivariant}
Liming Wu, Zhichao Hou, Jirui Yuan, Yu~Rong, and Wenbing Huang.
\newblock Equivariant spatio-temporal attentive graph networks to simulate physical dynamics.
\newblock {\em Advances in Neural Information Processing Systems}, 36, 2023.

\bibitem{yuan2025nonstationary}
Chaohao Yuan, Maoji Wen, Ercan~Engin Kuruo{\u{g}}lu, Yang Liu, Jia Li, Tingyang Xu, Deli Zhao, Hong Cheng, and Yu~Rong.
\newblock Non-stationary equivariant graph neural networks for physical dynamics simulation.
\newblock In {\em The Thirty-ninth Annual Conference on Neural Information Processing Systems}, 2025.

\bibitem{han2024geometric}
Jiaqi Han, Minkai Xu, Aaron Lou, Haotian Ye, and Stefano Ermon.
\newblock Geometric trajectory diffusion models.
\newblock In {\em The Thirty-eighth Annual Conference on Neural Information Processing Systems}, 2024.

\bibitem{yang2025physics}
Kai Yang, Yuqi Huang, Junheng Tao, Wanyu Wang, and Qitian Wu.
\newblock Physics-inspired all-pair interaction learning for 3d dynamics modeling.
\newblock {\em arXiv preprint arXiv:2510.04233}, 2025.

\bibitem{wang2024ab}
Tong Wang, Xinheng He, Mingyu Li, Yatao Li, Ran Bi, Yusong Wang, Chaoran Cheng, Xiangzhen Shen, Jiawei Meng, He~Zhang, et~al.
\newblock Ab initio characterization of protein molecular dynamics with ai2bmd.
\newblock {\em Nature}, pages 1--9, 2024.

\bibitem{wang2024enhancing}
Yusong Wang, Tong Wang, Shaoning Li, Xinheng He, Mingyu Li, Zun Wang, Nanning Zheng, Bin Shao, and Tie-Yan Liu.
\newblock Enhancing geometric representations for molecules with equivariant vector-scalar interactive message passing.
\newblock {\em Nature Communications}, 15(1):313, 2024.

\bibitem{watson2023novo}
Joseph~L Watson, David Juergens, Nathaniel~R Bennett, Brian~L Trippe, Jason Yim, Helen~E Eisenach, Woody Ahern, Andrew~J Borst, Robert~J Ragotte, Lukas~F Milles, et~al.
\newblock De novo design of protein structure and function with rfdiffusion.
\newblock {\em Nature}, 620(7976):1089--1100, 2023.

\bibitem{ingraham2023illuminating}
John~B Ingraham, Max Baranov, Zak Costello, Karl~W Barber, Wujie Wang, Ahmed Ismail, Vincent Frappier, Dana~M Lord, Christopher Ng-Thow-Hing, Erik~R Van~Vlack, et~al.
\newblock Illuminating protein space with a programmable generative model.
\newblock {\em Nature}, pages 1--9, 2023.

\bibitem{frank2025modeling}
J~Thorben Frank.
\newblock Modeling larger length and time scales in machine learning force fields, 2025.

\bibitem{li2025geometric}
Zian Li, Cai Zhou, Xiyuan Wang, Xingang Peng, and Muhan Zhang.
\newblock Geometric representation condition improves equivariant molecule generation.
\newblock In {\em Forty-second International Conference on Machine Learning}, 2025.

\bibitem{yan2025georecon}
Shaoheng Yan, Zian Li, and Muhan Zhang.
\newblock Georecon: Graph-level representation learning for 3d molecules via reconstruction-based pretraining.
\newblock {\em arXiv preprint arXiv:2506.13174}, 2025.

\bibitem{li2025size}
Zongzhao Li, Jiacheng Cen, Wenbing Huang, Taifeng Wang, and Le~Song.
\newblock Size-generalizable rna structure evaluation by exploring hierarchical geometries.
\newblock In {\em The Thirteenth International Conference on Learning Representations}, 2025.

\bibitem{wang2025polyconf}
Fanmeng Wang, Wentao Guo, Qi~Ou, Hongshuai Wang, Haitao Lin, Hongteng Xu, and Zhifeng Gao.
\newblock Polyconf: Unlocking polymer conformation generation through hierarchical generative models.
\newblock In {\em Forty-second International Conference on Machine Learning}, 2025.

\bibitem{wang2025wgformer}
Fanmeng Wang, Minjie Cheng, and Hongteng Xu.
\newblock {WGF}ormer: An {SE}(3)-transformer driven by wasserstein gradient flows for molecular ground-state conformation prediction.
\newblock In {\em Forty-second International Conference on Machine Learning}, 2025.

\bibitem{liu2025denoisevae}
Yurou Liu, Jiahao Chen, Rui Jiao, Jiangmeng Li, Wenbing Huang, and Bing Su.
\newblock Denoise{VAE}: Learning molecule-adaptive noise distributions for denoising-based 3d molecular pre-training.
\newblock In {\em The Thirteenth International Conference on Learning Representations}, 2025.

\bibitem{xue2025se}
Fanglei Xue, Meihan Zhang, Shuqi Li, Xinyu Gao, James~A Wohlschlegel, Wenbing Huang, Yi~Yang, and Weixian Deng.
\newblock Se (3)-equivariant ternary complex prediction towards target protein degradation.
\newblock {\em Nature Communications}, 16(1):5514, 2025.

\bibitem{wu2025siamese}
Liming Wu, Wenbing Huang, Rui Jiao, Jianxing Huang, Liwei Liu, Yipeng Zhou, Hao Sun, Yang Liu, Fuchun Sun, Yuxiang Ren, et~al.
\newblock Siamese foundation models for crystal structure prediction.
\newblock {\em arXiv preprint arXiv:2503.10471}, 2025.

\bibitem{li2024powder}
Qi~Li, Rui Jiao, Liming Wu, Tiannian Zhu, Wenbing Huang, Shifeng Jin, Yang Liu, Hongming Weng, and Xiaolong Chen.
\newblock Powder diffraction crystal structure determination using generative models.
\newblock {\em arXiv preprint arXiv:2409.04727}, 2024.

\bibitem{lin2025tokenizing}
Haitao Lin, Odin Zhang, Jia Xu, Yunfan Liu, Zheng Cheng, Lirong Wu, Yufei Huang, Zhifeng Gao, and Stan~Z Li.
\newblock Tokenizing electron cloud in protein-ligand interaction learning.
\newblock {\em arXiv preprint arXiv:2505.19014}, 2025.

\bibitem{liu2024equivariant}
Yang Liu, Zinan Zheng, Yu~Rong, and Jia Li.
\newblock Equivariant graph learning for high-density crowd trajectories modeling.
\newblock {\em Transactions on Machine Learning Research}, 2024.

\bibitem{zheng2024relaxing}
Zinan Zheng, Yang Liu, Jia Li, Jianhua Yao, and Yu~Rong.
\newblock Relaxing continuous constraints of equivariant graph neural networks for broad physical dynamics learning.
\newblock In {\em Proceedings of the 30th ACM SIGKDD Conference on Knowledge Discovery and Data Mining}, pages 4548--4558, 2024.

\bibitem{liu2025equivariant}
Yang Liu, Zinan Zheng, Yu~Rong, Deli Zhao, Hong Cheng, and Jia Li.
\newblock Equivariant and invariant message passing for global subseasonal-to-seasonal forecasting.
\newblock In {\em Proceedings of the 31st ACM SIGKDD Conference on Knowledge Discovery and Data Mining V. 2}, pages 1879--1890, 2025.

\bibitem{liu2024segno}
Yang Liu, Jiashun Cheng, Haihong Zhao, Tingyang Xu, Peilin Zhao, Fugee Tsung, Jia Li, and Yu~Rong.
\newblock {SEGNO}: Generalizing equivariant graph neural networks with physical inductive biases.
\newblock In {\em The Twelfth International Conference on Learning Representations}, 2024.

\bibitem{lianyi2025geometric}
Anyi Li, Jiacheng Cen, Songyou Li, Mingze Li, Yang Yu, and Wenbing Huang.
\newblock Geometric mixture models for electrolyte conductivity prediction.
\newblock In {\em The Thirty-ninth Annual Conference on Neural Information Processing Systems}, 2025.

\bibitem{liao2025mofbfn}
Rui Jiao, Hanlin Wu, Wenbing Huang, Yuxuan Song, Yawen Ouyang, Yu~Rong, Tingyang Xu, Pengiu Wang, Hao Zhou, Weiying Ma, Jingjing Liu, and Yang Liu.
\newblock Mof-bfn: Metal-organic frameworks structure prediction via bayesian flow networks.
\newblock In {\em The Thirty-ninth Annual Conference on Neural Information Processing Systems}, 2025.

\bibitem{liu2025anisotropicnoise}
Xixian Liu, Rui Jiao, Zhiyuan Liu, Yurou Liu, Yang Liu, Ziheng Lu, Wenbing Huang, Yang Zhang, and Yixin Cao.
\newblock Learning 3d anisotropic noise distributions improves molecular force fields.
\newblock In {\em The Thirty-ninth Annual Conference on Neural Information Processing Systems}, 2025.

\bibitem{xu2025dualequinet}
Junjie Xu, Jiahao Zhang, Mangal Prakash, Xiang Zhang, and Suhang Wang.
\newblock Dualequinet: A dual-space hierarchical equivariant network for large biomolecules.
\newblock {\em arXiv preprint arXiv:2506.19862}, 2025.

\bibitem{han2024survey}
Jiaqi Han, Jiacheng Cen, Liming Wu, Zongzhao Li, Xiangzhe Kong, Rui Jiao, Ziyang Yu, Tingyang Xu, Fandi Wu, Zihe Wang, et~al.
\newblock A survey of geometric graph neural networks: Data structures, models and applications.
\newblock {\em Frontiers of Computer Science}, 19(11):1911375, 2025.

\bibitem{dym2021on}
Nadav Dym and Haggai Maron.
\newblock On the universality of rotation equivariant point cloud networks.
\newblock In {\em International Conference on Learning Representations}, 2021.

\bibitem{thomas2018tensor}
Nathaniel Thomas, Tess Smidt, Steven Kearnes, Lusann Yang, Li~Li, Kai Kohlhoff, and Patrick Riley.
\newblock Tensor field networks: Rotation-and translation-equivariant neural networks for 3d point clouds.
\newblock {\em arXiv preprint arXiv:1802.08219}, 2018.

\bibitem{brandstetter2021geometric}
Johannes Brandstetter, Rob Hesselink, Elise van~der Pol, Erik~J Bekkers, and Max Welling.
\newblock Geometric and physical quantities improve e (3) equivariant message passing.
\newblock In {\em International Conference on Learning Representations}, 2021.

\bibitem{batatia2022mace}
Ilyes Batatia, David~P Kovacs, Gregor Simm, Christoph Ortner, and G{\'a}bor Cs{\'a}nyi.
\newblock {MACE}: Higher order equivariant message passing neural networks for fast and accurate force fields.
\newblock {\em Annual Conference on Neural Information Processing Systems}, 35:11423--11436, 2022.

\bibitem{joshi2023expressive}
Chaitanya~K Joshi, Cristian Bodnar, Simon~V Mathis, Taco Cohen, and Pietro Lio.
\newblock On the expressive power of geometric graph neural networks.
\newblock In {\em International conference on machine learning}, pages 15330--15355. PMLR, 2023.

\bibitem{hordan2024weisfeiler}
Snir Hordan, Tal Amir, and Nadav Dym.
\newblock Weisfeiler leman for euclidean equivariant machine learning.
\newblock In {\em Proceedings of the 41st International Conference on Machine Learning}, pages 18749--18784, 2024.

\bibitem{delle2023three}
Valentino Delle~Rose, Alexander Kozachinskiy, Crist{\'o}bal Rojas, Mircea Petrache, and Pablo Barcel{\'o}.
\newblock Three iterations of (d- 1)-wl test distinguish non isometric clouds of d-dimensional points.
\newblock {\em Advances in Neural Information Processing Systems}, 36:9556--9573, 2023.

\bibitem{sverdlov2025on}
Yonatan Sverdlov and Nadav Dym.
\newblock On the expressive power of sparse geometric {MPNN}s.
\newblock In {\em The Thirteenth International Conference on Learning Representations}, 2025.

\bibitem{li2023distance}
Zian Li, Xiyuan Wang, Yinan Huang, and Muhan Zhang.
\newblock Is distance matrix enough for geometric deep learning?
\newblock {\em Advances in Neural Information Processing Systems}, 36:37413--37447, 2023.

\bibitem{weisfeiler1968reduction}
Boris Weisfeiler and Andrei Leman.
\newblock The reduction of a graph to canonical form and the algebra which appears therein.
\newblock {\em nti, Series}, 2(9):12--16, 1968.

\bibitem{mckay2012practical}
Brendan~D. Mckay and Adolfo Piperno.
\newblock Practical graph isomorphism, ii.
\newblock {\em J. Symb. Comput.}, 60:94–112, January 2014.

\bibitem{satorras2021en}
V{\i}ctor~Garcia Satorras, Emiel Hoogeboom, and Max Welling.
\newblock E (n) equivariant graph neural networks.
\newblock In {\em International Conference on Machine Learning}. PMLR, 2021.

\bibitem{schutt2021equivariant}
Kristof Sch{\"u}tt, Oliver Unke, and Michael Gastegger.
\newblock Equivariant message passing for the prediction of tensorial properties and molecular spectra.
\newblock In {\em International Conference on Machine Learning}, pages 9377--9388. PMLR, 2021.

\bibitem{an2025equivariant}
Junyi An, Xinyu Lu, Chao Qu, Yunfei Shi, Peijia Lin, Qianwei Tang, Licheng Xu, Fenglei Cao, and Yuan Qi.
\newblock Equivariant spherical transformer for efficient molecular modeling.
\newblock {\em arXiv preprint arXiv:2505.23086}, 2025.

\bibitem{li2025e2former}
Yunyang Li, Lin Huang, Zhihao Ding, Chu Wang, Xinran Wei, Han Yang, Zun Wang, Chang Liu, Yu~Shi, Peiran Jin, et~al.
\newblock E2former: An efficient and equivariant transformer with linear-scaling tensor products.
\newblock In {\em The Thirty-ninth Annual Conference on Neural Information Processing Systems}, 2025.

\bibitem{xie2025price}
YuQing Xie, Ameya Daigavane, Mit Kotak, and Tess Smidt.
\newblock The price of freedom: Exploring expressivity and runtime tradeoffs in equivariant tensor products.
\newblock In {\em The Thirty-ninth Annual Conference on Neural Information Processing Systems}, 2025.

\bibitem{frank2024euclidean}
J~Thorben Frank, Oliver~T Unke, Klaus-Robert M{\"u}ller, and Stefan Chmiela.
\newblock A euclidean transformer for fast and stable machine learned force fields.
\newblock {\em Nature Communications}, 15(1):6539, 2024.

\bibitem{cen2024high}
Jiacheng Cen, Anyi Li, Ning Lin, Yuxiang Ren, Zihe Wang, and Wenbing Huang.
\newblock Are high-degree representations really unnecessary in equivariant graph neural networks?
\newblock In {\em Annual Conference on Neural Information Processing Systems}, 2024.

\bibitem{aykent2025gotennet}
Sarp Aykent and Tian Xia.
\newblock {GotenNet: Rethinking Efficient 3D Equivariant Graph Neural Networks}.
\newblock In {\em The Thirteenth International Conference on LearningRepresentations}, 2025.

\bibitem{villar2021scalars}
Soledad Villar, David~W Hogg, Kate Storey-Fisher, Weichi Yao, and Ben Blum-Smith.
\newblock Scalars are universal: Equivariant machine learning, structured like classical physics.
\newblock {\em Advances in Neural Information Processing Systems}, 34:28848--28863, 2021.

\bibitem{li2025completeness}
Zian Li, Xiyuan Wang, Shijia Kang, and Muhan Zhang.
\newblock On the completeness of invariant geometric deep learning models.
\newblock In {\em The Thirteenth International Conference on Learning Representations}, 2025.

\bibitem{gasteiger2020directional}
Johannes Gasteiger, Janek Groß, and Stephan Günnemann.
\newblock Directional message passing for molecular graphs.
\newblock In {\em International Conference on Learning Representations}, 2020.

\bibitem{klicpera2021gemnet}
Johannes Gasteiger, Florian Becker, and Stephan Gunnemann.
\newblock Gemnet: Universal directional graph neural networks for molecules.
\newblock {\em Advances in Neural Information Processing Systems}, 34:6790--6802, 2021.

\bibitem{liu2022spherical}
Yi~Liu, Limei Wang, Meng Liu, Yuchao Lin, Xuan Zhang, Bora Oztekin, and Shuiwang Ji.
\newblock Spherical message passing for 3d molecular graphs.
\newblock In {\em International Conference on Learning Representations}, 2022.

\bibitem{puny2022frame}
Omri Puny, Matan Atzmon, Edward~J. Smith, Ishan Misra, Aditya Grover, Heli Ben-Hamu, and Yaron Lipman.
\newblock Frame averaging for invariant and equivariant network design.
\newblock In {\em International Conference on Learning Representations}, 2022.

\bibitem{kaba2023equivariance}
S{\'e}kou-Oumar Kaba, Arnab~Kumar Mondal, Yan Zhang, Yoshua Bengio, and Siamak Ravanbakhsh.
\newblock Equivariance with learned canonicalization functions.
\newblock In {\em International Conference on Machine Learning}, pages 15546--15566. PMLR, 2023.

\bibitem{baker2024an}
Justin Baker, Shih-Hsin Wang, Tommaso de~Fernex, and Bao Wang.
\newblock An explicit frame construction for normalizing 3d point clouds.
\newblock In {\em Forty-first International Conference on Machine Learning}, 2024.

\bibitem{wang2024rethinking}
Shih-Hsin Wang, Yung-Chang Hsu, Justin Baker, Andrea Bertozzi, Jack Xin, and Bao Wang.
\newblock Rethinking the benefits of steerable features in 3d equivariant graph neural networks.
\newblock In {\em The Twelfth International Conference on Learning Representations}, 2024.

\bibitem{weiler20183d}
Maurice Weiler, Mario Geiger, Max Welling, Wouter Boomsma, and Taco~S Cohen.
\newblock 3d steerable cnns: Learning rotationally equivariant features in volumetric data.
\newblock {\em Advances in Neural Information Processing Systems}, 31, 2018.

\bibitem{geiger2022e3nn}
Mario Geiger and Tess Smidt.
\newblock e3nn: Euclidean neural networks.
\newblock {\em arXiv preprint arXiv:2207.09453}, 2022.

\bibitem{dusson2019atomicce}
Genevi{\`e}ve Dusson, Markus Bachmayr, G{\'a}bor Cs{\'a}nyi, Ralf Drautz, Simon Etter, Cas van~der Oord, and Christoph Ortner.
\newblock Atomic cluster expansion: Completeness, efficiency and stability.
\newblock {\em J. Comput. Phys.}, 454:110946, 2019.

\bibitem{babai2016graph}
L{\'a}szl{\'o} Babai.
\newblock Graph isomorphism in quasipolynomial time.
\newblock In {\em Proceedings of the forty-eighth annual ACM symposium on Theory of Computing}, pages 684--697, 2016.

\bibitem{evdokimov1997on}
Sergei Evdokimov and Ilia~N. Ponomarenko.
\newblock On the geometric graph isomorphism problem.
\newblock {\em Journal of Pure and Applied Algebra}, pages 253--276, 1997.

\bibitem{zaheer2017deep}
Manzil Zaheer, Satwik Kottur, Siamak Ravanbakhsh, Barnabas Poczos, Russ~R Salakhutdinov, and Alexander~J Smola.
\newblock Deep sets.
\newblock {\em Advances in Neural Information Processing Systems}, 30, 2017.

\bibitem{zhang2024improving}
Yuelin Zhang, Jiacheng Cen, Jiaqi Han, Zhiqiang Zhang, JUN ZHOU, and Wenbing Huang.
\newblock Improving equivariant graph neural networks on large geometric graphs via virtual nodes learning.
\newblock In {\em Forty-first International Conference on Machine Learning}, 2024.

\bibitem{zhang2025fast}
Yuelin Zhang, Jiacheng Cen, Jiaqi Han, and Wenbing Huang.
\newblock Fast and distributed equivariant graph neural networks by virtual node learning.
\newblock {\em arXiv preprint arXiv:2506.19482}, 2025.

\bibitem{huang2022equivariant}
Wenbing Huang, Jiaqi Han, Yu~Rong, Tingyang Xu, Fuchun Sun, and Junzhou Huang.
\newblock Equivariant graph mechanics networks with constraints.
\newblock In {\em International Conference on Learning Representations}, 2022.

\bibitem{pozdnyakov2020incompleteness}
Sergey~N Pozdnyakov, Michael~J Willatt, Albert~P Bart{\'o}k, Christoph Ortner, G{\'a}bor Cs{\'a}nyi, and Michele Ceriotti.
\newblock Incompleteness of atomic structure representations.
\newblock {\em Physical Review Letters}, 125(16):166001, 2020.

\bibitem{schutt2018schnet}
Kristof~T Sch{\"u}tt, Huziel~E Sauceda, P-J Kindermans, Alexandre Tkatchenko, and K-R M{\"u}ller.
\newblock Schnet--a deep learning architecture for molecules and materials.
\newblock {\em The Journal of Chemical Physics}, 148(24), 2018.

\bibitem{jing2021learning}
Bowen Jing, Stephan Eismann, Patricia Suriana, Raphael John~Lamarre Townshend, and Ron Dror.
\newblock Learning from protein structure with geometric vector perceptrons.
\newblock In {\em International Conference on Learning Representations}, 2021.

\bibitem{kipf2018neural}
Thomas Kipf, Ethan Fetaya, Kuan-Chieh Wang, Max Welling, and Richard Zemel.
\newblock Neural relational inference for interacting systems.
\newblock In {\em International Conference on Machine Learning}, pages 2688--2697. PMLR, 2018.

\bibitem{liao_equiformer_2023}
Yi-Lun Liao and Tess Smidt.
\newblock Equiformer: {Equivariant} {Graph} {Attention} {Transformer} for {3D} {Atomistic} {Graphs}.
\newblock In {\em The {Eleventh} {International} {Conference} on {Learning} {Representations}}, 2023.

\bibitem{yuan2025survey}
Chaohao Yuan, Kangfei Zhao, Ercan~Engin Kuruoglu, Liang Wang, Tingyang Xu, Wenbing Huang, Deli Zhao, Hong Cheng, and Yu~Rong.
\newblock A survey of graph transformers: Architectures, theories and applications.
\newblock {\em arXiv preprint arXiv:2502.16533}, 2025.

\bibitem{xia2014persistent}
Kelin Xia and Guo-Wei Wei.
\newblock Persistent homology analysis of protein structure, flexibility, and folding.
\newblock {\em International journal for numerical methods in biomedical engineering}, 30(8):814--844, 2014.

\bibitem{eijkelboom2023n}
Floor Eijkelboom, Rob Hesselink, and Erik~J Bekkers.
\newblock E(n) equivariant message passing simplicial networks.
\newblock In {\em International Conference on Machine Learning}, pages 9071--9081. PMLR, 2023.

\bibitem{battiloro2025etnn}
Claudio Battiloro, Ege Karaismailo{\u{g}}lu, Mauricio Tec, George Dasoulas, Michelle Audirac, and Francesca Dominici.
\newblock E(n) equivariant topological neural networks.
\newblock In {\em The Thirteenth International Conference on Learning Representations}, 2025.

\bibitem{drautz2019atomic}
Ralf Drautz.
\newblock Atomic cluster expansion for accurate and transferable interatomic potentials.
\newblock {\em Physical Review B}, 99(1):014104, 2019.

\bibitem{dusson2022atomic}
Genevieve Dusson, Markus Bachmayr, G{\'a}bor Cs{\'a}nyi, Ralf Drautz, Simon Etter, Cas van Der~Oord, and Christoph Ortner.
\newblock Atomic cluster expansion: Completeness, efficiency and stability.
\newblock {\em Journal of Computational Physics}, 454:110946, 2022.

\bibitem{kong2022molecule}
Xiangzhe Kong, Wenbing Huang, Zhixing Tan, and Yang Liu.
\newblock Molecule generation by principal subgraph mining and assembling.
\newblock {\em Advances in Neural Information Processing Systems}, 35:2550--2563, 2022.

\bibitem{kong2023generalist}
Xiangzhe Kong, Wenbing Huang, and Yang Liu.
\newblock Generalist equivariant transformer towards 3d molecular interaction learning.
\newblock In {\em Forty-first International Conference on Machine Learning}, 2023.

\bibitem{kong2024full}
Xiangzhe Kong, Wenbing Huang, and Yang Liu.
\newblock Full-atom peptide design with geometric latent diffusion.
\newblock {\em Advances in Neural Information Processing Systems}, 2024.

\bibitem{kofinas2021roto}
Miltiadis Kofinas, Naveen Nagaraja, and Efstratios Gavves.
\newblock Roto-translated local coordinate frames for interacting dynamical systems.
\newblock {\em Advances in Neural Information Processing Systems}, 34:6417--6429, 2021.

\bibitem{duval2023faenet}
Alexandre~Agm Duval, Victor Schmidt, Alex Hern{\'a}ndez-Garc{\i}a, Santiago Miret, Fragkiskos~D Malliaros, Yoshua Bengio, and David Rolnick.
\newblock Faenet: Frame averaging equivariant gnn for materials modeling.
\newblock In {\em International Conference on Machine Learning}, pages 9013--9033. PMLR, 2023.

\bibitem{liaoequiformerv2}
Yi-Lun Liao, Brandon~M Wood, Abhishek Das, and Tess Smidt.
\newblock Equiformerv2: Improved equivariant transformer for scaling to higher-degree representations.
\newblock In {\em The Twelfth International Conference on Learning Representations}, 2024.

\bibitem{towards24meng}
Ziqiao Meng, Liang Zeng, Zixing Song, Tingyang Xu, Peilin Zhao, and Irwin King.
\newblock Towards geometric normalization techniques in se(3) equivariant graph neural networks for physical dynamics simulations.
\newblock In {\em Proceedings of the Thirty-Third International Joint Conference on Artificial Intelligence (IJCAI)}, pages 5981--5989, 2024.

\bibitem{li2025large}
Zongzhao Li, Jiacheng Cen, Bing Su, Tingyang Xu, Yu~Rong, Deli Zhao, and Wenbing Huang.
\newblock Large language-geometry model: When {LLM} meets equivariance.
\newblock In {\em Forty-second International Conference on Machine Learning}, 2025.

\bibitem{zhou2023uni}
Gengmo Zhou, Zhifeng Gao, Qiankun Ding, Hang Zheng, Hongteng Xu, Zhewei Wei, Linfeng Zhang, and Guolin Ke.
\newblock Uni-mol: A universal 3d molecular representation learning framework.
\newblock In {\em The Eleventh International Conference on Learning Representations}, 2023.

\bibitem{luo2024bridging}
Weiliang Luo, Gengmo Zhou, Zhengdan Zhu, Yannan Yuan, Guolin Ke, Zhewei Wei, Zhifeng Gao, and Hang Zheng.
\newblock Bridging machine learning and thermodynamics for accurate p k a prediction.
\newblock {\em JACS Au}, 2024.

\bibitem{zhou2024smolsearch}
Gengmo Zhou, Zhen Wang, Feng Yu, Guolin Ke, Zhewei Wei, and Zhifeng Gao.
\newblock S-molsearch: 3d semi-supervised contrastive learning for bioactive molecule search.
\newblock In {\em The Thirty-eighth Annual Conference on Neural Information Processing Systems}, 2024.

\bibitem{wang2023mperformer}
Fanmeng Wang, Hongteng Xu, Xi~Chen, Shuqi Lu, Yuqing Deng, and Wenbing Huang.
\newblock Mperformer: An se (3) transformer-based molecular perceptron.
\newblock In {\em Proceedings of the 32nd ACM International Conference on Information and Knowledge Management}, pages 2512--2522, 2023.

\bibitem{wang2024mmpolymer}
Fanmeng Wang, Wentao Guo, Minjie Cheng, Shen Yuan, Hongteng Xu, and Zhifeng Gao.
\newblock Mmpolymer: A multimodal multitask pretraining framework for polymer property prediction.
\newblock In {\em Proceedings of the 33rd ACM International Conference on Information and Knowledge Management}, CIKM '24, 2024.

\bibitem{Klicpera2020Directional}
Johannes Klicpera, Janek Groß, and Stephan Günnemann.
\newblock Directional message passing for molecular graphs.
\newblock In {\em International Conference on Learning Representations}, 2020.

\bibitem{wang2022comenet}
Limei Wang, Yi~Liu, Yuchao Lin, Haoran Liu, and Shuiwang Ji.
\newblock Comenet: Towards complete and efficient message passing for 3d molecular graphs.
\newblock {\em Advances in Neural Information Processing Systems}, 35:650--664, 2022.

\bibitem{yue2024plug}
Angxiao Yue, Dixin Luo, and Hongteng Xu.
\newblock A plug-and-play quaternion message-passing module for molecular conformation representation.
\newblock In {\em Proceedings of the AAAI Conference on Artificial Intelligence}, pages 16633--16641, 2024.

\bibitem{yue2025reqflow}
Angxiao Yue, Zichong Wang, and Hongteng Xu.
\newblock Reqflow: Rectified quaternion flow for efficient and high-quality protein backbone generation.
\newblock In {\em Forty-second International Conference on Machine Learning}, 2025.

\bibitem{zhang2024equipocket}
Yang Zhang, Wenbing Huang, Zhewei Wei, Ye~Yuan, and Zhaohan Ding.
\newblock Equipocket: an e (3)-equivariant geometric graph neural network for ligand binding site prediction.
\newblock In {\em Forty-first International Conference on Machine Learning}, 2024.

\bibitem{jiao2023energy}
Rui Jiao, Jiaqi Han, Wenbing Huang, Yu~Rong, and Yang Liu.
\newblock Energy-motivated equivariant pretraining for 3d molecular graphs.
\newblock In {\em Proceedings of the AAAI Conference on Artificial Intelligence}, pages 8096--8104, 2023.

\bibitem{yu2024force}
Ziyang Yu, Wenbing Huang, and Yang Liu.
\newblock Force-guided bridge matching for full-atom time-coarsened dynamics of peptides.
\newblock {\em arXiv preprint arXiv:2408.15126}, 2024.

\bibitem{jiao2024crystal}
Rui Jiao, Wenbing Huang, Peijia Lin, Jiaqi Han, Pin Chen, Yutong Lu, and Yang Liu.
\newblock Crystal structure prediction by joint equivariant diffusion.
\newblock {\em Advances in Neural Information Processing Systems}, 36, 2024.

\bibitem{jiao2024structure}
Rui jiao, Xiangzhe Kong, Wenbing Huang, and Yang Liu.
\newblock 3d structure prediction of atomic systems with flow-based direct preference optimization.
\newblock In {\em The Thirty-eighth Annual Conference on Neural Information Processing Systems}, 2024.

\bibitem{kong2023conditional}
Xiangzhe Kong, Wenbing Huang, and Yang Liu.
\newblock Conditional antibody design as 3d equivariant graph translation.
\newblock In {\em The Eleventh International Conference on Learning Representations}, 2023.

\bibitem{kong2023end}
Xiangzhe Kong, Wenbing Huang, and Yang Liu.
\newblock End-to-end full-atom antibody design.
\newblock In {\em Proceedings of the 40th International Conference on Machine Learning}, pages 17409--17429, 2023.

\bibitem{widdowson2023recognizing}
Daniel Widdowson and Vitaliy Kurlin.
\newblock Recognizing rigid patterns of unlabeled point clouds by complete and continuous isometry invariants with no false negatives and no false positives.
\newblock In {\em Proceedings of the IEEE/CVF Conference on Computer Vision and Pattern Recognition}, pages 1275--1284, 2023.

\bibitem{court1934notes}
NA~Court.
\newblock Notes on the orthocentric tetrahedron.
\newblock {\em The American Mathematical Monthly}, 41(8):499--502, 1934.

\bibitem{stein2011fourier}
Elias~M Stein and Rami Shakarchi.
\newblock {\em Fourier analysis: an introduction}, volume~1.
\newblock Princeton University Press, 2011.

\bibitem{cohen2018spherical}
Taco~S. Cohen, Mario Geiger, Jonas Köhler, and Max Welling.
\newblock Spherical {CNN}s.
\newblock In {\em International Conference on Learning Representations}, 2018.

\bibitem{aykent2023savenet}
Sarp Aykent and Tian Xia.
\newblock Savenet: A scalable vector network for enhanced molecular representation learning.
\newblock In {\em Thirty-seventh Conference on Neural Information Processing Systems}, 2023.

\end{thebibliography}
\bibliographystyle{unsrt}
\clearpage
\appendix
\tableofappendix
\section{Theoretical Analysis and Proofs}
\label{sec:theoretical_analysis}

\subsection{Detailed Preliminaries}
\label{sec:Preliminaries_appendix}
This section introduces the equivariant representation and the CG tensor product with inversion, along with the symmetric graph structure and its related properties.

\subsubsection{Equivariance}
Let $\sX$ and $\sY$ be the input and output vector spaces, respectively. A function $\phi: \sX\to\sY$ is called \emph{equivariant} with respect to group $\mathfrak{G}$ if
\begin{equation}
    \forall \Gg\in \GG, \phi(\rho_{\sX}(\Gg) \vx) = \rho_{\sY}(\Gg)\phi(\vx),
\end{equation}
where $\rho_{\sX}$ and $\rho_{\sY}$ are the group representations in the input and output spaces, respectively.

Since we can permit translation invariance by simply translating the center of all coordinates to the origin, we only discuss equivariance with respect to $\mathrm{O}(3)$ in our paper. Specifically, the group $\mathrm{O}(3)$ consists of rotation and inversion, implying $\mathrm{O}(3)=\mathrm{SO}(3)\times C_i$, where $\mathrm{SO}(3)$ is the rotation group and $C_i=\{\mathfrak{e}, \mathfrak{i}\}$ denotes the inversion group with $\mathfrak{e}$ representing the identity and $\mathfrak{i}$ representing the inversion. In the current literature, researchers typically utilize irreducible representations (\emph{i.e.} Wigner-D matrix $\mD^{(l)}(\Gr)\in\sR^{(2l+1)\times (2l+1)}$ with $l$ denoting the degree) to represent $\mathrm{SO}(3)$ and parities $p\in\{\pm 1\}$ to represent the inversion group $C_i$ as follows:
\begin{equation}
    \rho^{(l,p)}(\Gr\Gm)\coloneqq \sigma^{(p)}(\Gm)\mD^{(l)}(\Gr),  
\end{equation}
where $\sigma^{(p)}(\Gm)=p$ if $\Gm=\Gi$, while $\sigma^{(p)}(\Gm)=1$ if $\Gm=\Ge$. When considering only rotations, a $(2l + 1)$-dimensional variable that can be rotated by the $l$th-degree Wigner-D matrices is referred to as an $l$th-degree steerable feature. When further involving the concept of inversion, we adopt the notation presented in e3nn~\citep{geiger2022e3nn}. Specifically, an inversion-invariant $l$th-degree steerable feature (\emph{i.e.}, $p=1$) is referred to as an $l$\texttt{e}-type, while an inversion-equivariant feature (\emph{i.e.}, $p=-1$) is designated as an $l$\texttt{o}-type. These designations correspond to \emph{even parity} and \emph{odd parity}, respectively.

\begin{example}[Spherical Harmonics]\label{eg:sh}
Spherical harmonics $Y^{(l)}(\gvx)=[Y_m^{(l)}(\vec\vx)]_{m=-l}^{l}$ are $l$th-degree steerable features, \emph{i.e.} $Y^{(l)}(\mR_{\Gr}\gvx)=\mD^{(l)}(\Gr)Y^{(l)}(\gvx)$, and the parity is given as $p=(-1)^{l}$. It means spherical harmonics are $l\texttt{o}$-type when $l$ is odd and $l\texttt{e}$-type when $l$ is even. According to \cite{weiler20183d}, spherical harmonics offer a \emph{complete} set of function bases of $\mathrm{SO}(3)$-equivariant functions.
\end{example}
Another example about common physical quantities is presented in \cref{eg:Common_Physical_Quantities}.
\begin{example}[Common Physical Quantities]\label{eg:Common_Physical_Quantities}
Scalars (\emph{e.g.} mass, charge and energy) are \texttt{0e}-type (invariant to both rotation and inversion). Pseudo-scalar (\emph{e.g.} magnetic charge and magnetic flux) are \texttt{0o}-type (invariant to rotation but change signs under inversion). Vectors (\emph{e.g.} coordinate and velocity) are \texttt{1o}-type, while pseudo-vectors (\emph{e.g.} torque and angular momentum) are \texttt{1e}-type.
\end{example}
\begin{example}[Common Invariant features]\label{eg:Common_Invariant_features}
Due to the various advantages of scalars, there are a large number of construction methods, which we briefly list here:
\begin{enumerate}
    \item Purely mathematical representation: tensor product (spherical-scalarization)~\cite{frank2024euclidean,cen2024high,aykent2025gotennet}, topological characteristics~\cite{xia2014persistent,eijkelboom2023n,battiloro2025etnn}, cluster expansion basis~\cite{drautz2019atomic,dusson2022atomic}, subgraph blocks~\cite{kong2022molecule,kong2023generalist,kong2024full}, frames~\cite{kofinas2021roto,puny2022frame,duval2023faenet}, normalization operators~\cite{liaoequiformerv2, towards24meng}, LLM-based information~\cite{li2025large};
    \item Features based on physical and biochemical prior knowledge: distance matrix~\cite{li2023distance,zhou2023uni,luo2024bridging,zhou2024smolsearch,wang2023mperformer,wang2024mmpolymer}, chemical bond length, angle and dihedral angle~\cite{Klicpera2020Directional,wang2022comenet,yue2024plug,yue2025reqflow,zhang2024equipocket}, force~\cite{jiao2023energy,yu2024force}, fractional coordinates~\cite{li2024powder,jiao2024crystal,jiao2024structure}, canonical ordering~\cite{kong2023conditional,kong2023end}.
\end{enumerate}
\end{example}

\subsubsection{Clebsch–Gordan (CG) tensor product}
Tensor product $\otimes(\cdot,\cdot)$ is a common operator to describe interaction between steerable features. Given an $l_1$th-degree steerable feature $\svv^{(l_1)}$ and an $l_2$th-degree steerable feature $\svv^{(l_2)}$, calculating their tensor product yields a new feature $\svv^{(l_1)}\otimes \svv^{(l_2)}$ with its group representation as $\mD^{(l_1)}(\Gr)\otimes \mD^{(l_2)}(\Gr)$. The Clebsch-Gordan rule reveals that any tensor product of irreducible representations can be represented as the direct sum of the irreducible representations. Let $\sV^{(l_1)}$ and $\sV^{(l_2)}$ denote the irreducible representation space of $l_1$th-degree and $l_2$th-degree, respectively. Then, we have the following relation:
\begin{equation}
    \sV^{(l_1)}\otimes \sV^{(l_2)} = \textstyle\bigoplus_{l=|l_1-l_2|}^{l_1+l_2}\sV^{(l)}.
\end{equation}
This further demonstrates that the group representation matrix $\mD^{(l_1)}(\Gr)\otimes \mD^{(l_2)}(\Gr)$ is similar to the matrix $\bigoplus_{l=|l_1-l_2|}^{l_1+l_2}\mD^{(l)}(\Gr)$. In general, the $l$th-degree component from the tensor product $\svv^{(l_1)}\otimes \svv^{(l_2)}$ can be expressed as
\begin{equation}
    v_m^{(l)}=\sum_{m_1=-l_1}^{l_1}\sum_{m_2=-l_2}^{l_2}Q_{(l_1,m_1),(l_2,m_2)}^{(l,m)}v_{m_1}^{(l_1)}v_{m_2}^{(l_2)},
\end{equation}
where $Q_{(l_1,m_1),(l_2,m_2)}^{(l,m)}$ is called the \emph{Clebsch–Gordan coefficients}. To simplify writing, when specifying the degree of the input and output, we directly abbreviate the above formula to $\svv^{(l)}=\svv^{(l_1)}\otimes_{\text{cg}}\svv^{(l_2)}$. We introduce the decomposition of the moment of inertia in \cref{eg:Decomposition_of_Reducible_Representations} to enhance understanding.

It is straightforward to extend the tensor product to incorporate inversion. When performing the tensor product of two steerable features—one exhibiting odd parity and the other exhibiting even parity—the resulting feature will necessarily exhibit odd parity. Conversely, if both features share the same parity, the result will be of even parity. For simplicity, we will omit discussions related to inversion in the subsequent analyses.
\begin{example}[Decomposition of Reducible Representations]\label{eg:Decomposition_of_Reducible_Representations}
For the moment of inertia $\operatorname{vec}(\overleftrightarrow\mI)=\sum_i (\|\gvx_i\|^2\cdot\bm{1}_{9}-\gvx_i\otimes \gvx_i)\in\sR^{9}$. The first item is a scalar (\texttt{0e}) of $9$ channels denoted as \texttt{9x0e}, while the corresponding group representation of the second item $\sum_i \gvx_i\otimes \gvx_i$ is $\mR_{\Gr}\otimes\mR_{\Gr}$ (the inversion are canceled). According to the Clebsch–Gordan rule, this representation is similar to $\bigoplus_{l=0}^{2}\mD^{(l)}({\Gr})$, which can be transformed through a specific coefficient matrix, and $\sum_i \gvx_i\otimes \gvx_i$ can be decomposed into irreducible features of three types: \texttt{0e}, \texttt{1e}, \texttt{2e}.\footnote{Note that since it is a symmetric tensor, the components of \texttt{1e} are 
actually zero.}
\end{example}

\subsubsection{Discussion on Symmetric Graphs}
\label{sec:symmetric_graph}
A symmetric graph is a special kind of geometric graph. \cite{cen2024high} points out that in a symmetric graph, steerable features of a particular degree will degenerate, so we list this case separately for analysis.
\begin{definition}[Symmetric Graph]\label{def:symmetric_graph}
A geometric graph $\gG$ is called a symmetric graph, if there exists a finite and nontrivial subgroup $\GH\leq \mathrm{O}(3), \GH\neq\{\Ge\}$, satisfying that  $\forall \Gh\in\GH, \Gh\cdot\gG = \gG$. All subgroups making $\gG$ symmetric yields a set $\sH(\gG)$, and all geometric graphs that are symmetric \emph{w.r.t.} $\mathfrak{H}$ constitute a set denoted as $\sG(\GH)$. Moreover, all other graphs are referred to as \emph{asymmetric (geometric) graphs}.
\end{definition}
It is straightforward to conclude that an asymmetric geometric graph must contain non-coplanar nodes; otherwise, if all the nodes lie in the same plane, the graph would coincide with itself after reflection across that plane. Therefore, the presence of non-coplanar nodes is a necessary condition for the asymmetry of the geometric graph.
\begin{lemma}[Image Space of Functions on Symmetric Geometric Graphs]
Given a symmetric graph $\gG$, $\svf^{(l)}(\gG)\in\bigcap_{\GH\in\sH(\gG)}[\mI_{2l+1}-\rho^{(l)}(\GH)]^{\perp}$, where $\rho^{(l)}(\GH)=\frac{1}{|\GH|}\textstyle\sum_{\Gh\in\GH}\rho^{(l)}(\Gh)$ denotes the group averaging.
\end{lemma}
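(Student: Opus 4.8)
The plan is to exploit the symmetry constraint directly: for a symmetric graph $\gG$ with subgroup $\GH\in\sH(\gG)$, any steerable function must be invariant under the action induced by $\GH$, and this forces its value into the fixed subspace of the averaged representation. First I would fix an arbitrary $\GH\in\sH(\gG)$ and recall that by \cref{def:symmetric_graph} we have $\Gh\cdot\gG=\gG$ for every $\Gh\in\GH$. Applying the equivariance property of $\svf^{(l)}$ (as in \cref{eq:equ_with_rep} with output representation $\rho^{(l)}$) gives $\rho^{(l)}(\Gh)\,\svf^{(l)}(\gG)=\svf^{(l)}(\Gh\cdot\gG)=\svf^{(l)}(\gG)$ for all $\Gh\in\GH$. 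Thus $\svf^{(l)}(\gG)$ is fixed by every $\rho^{(l)}(\Gh)$.

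Next I would average this identity over the group: summing $\rho^{(l)}(\Gh)\,\svf^{(l)}(\gG)=\svf^{(l)}(\gG)$ over all $\Gh\in\GH$ and dividing by $|\GH|$ yields $\rho^{(l)}(\GH)\,\svf^{(l)}(\gG)=\svf^{(l)}(\gG)$, i.e. $[\mI_{2l+1}-\rho^{(l)}(\GH)]\,\svf^{(l)}(\gG)=\bm{0}$. This says $\svf^{(l)}(\gG)$ lies in the kernel of $\mI_{2l+1}-\rho^{(l)}(\GH)$. Since $\rho^{(l)}(\GH)$ is an average of orthogonal (unitary) matrices representing a group, it is an orthogonal projection onto the $\GH$-fixed subspace; hence its kernel equals the orthogonal complement of its image, which is exactly what the notation $[\mI_{2l+1}-\rho^{(l)}(\GH)]^{\perp}$ denotes. (If one prefers to avoid the projection fact, one can argue directly: $\ker(\mI-P)=\operatorname{im}(P)$ for any idempotent $P$, and idempotency of $\rho^{(l)}(\GH)$ follows from the rearrangement $\rho^{(l)}(\GH)^2=\frac{1}{|\GH|^2}\sum_{\Gh,\Gh'}\rho^{(l)}(\Gh\Gh')=\rho^{(l)}(\GH)$ using that $\Gh\mapsto \Gh\Gh'$ permutes $\GH$.)

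Finally, since the chosen $\GH$ was arbitrary in $\sH(\gG)$, the membership holds simultaneously for every such subgroup, so $\svf^{(l)}(\gG)\in\bigcap_{\GH\in\sH(\gG)}[\mI_{2l+1}-\rho^{(l)}(\GH)]^{\perp}$, as claimed. The only mildly delicate point—and the one I would be careful to spell out—is the identification $\ker(\mI_{2l+1}-\rho^{(l)}(\GH)) = [\mI_{2l+1}-\rho^{(l)}(\GH)]^{\perp}$, i.e. that the group-averaging operator is a self-adjoint idempotent; everything else is a direct unwinding of the equivariance definition and the symmetry hypothesis. No serious obstacle is anticipated beyond making this projection/idempotency argument precise and noting that it relies on the representation being orthogonal (which holds for Wigner-D matrices and their parity-extended versions used throughout the paper).
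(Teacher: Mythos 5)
Your proof is correct. The paper actually states this lemma without any proof (it is presented only as a strengthening of the observation in HEGNN), so your argument supplies exactly the missing reasoning: symmetry plus equivariance gives $\rho^{(l)}(\Gh)\svf^{(l)}(\gG)=\svf^{(l)}(\gG)$ for all $\Gh\in\GH$, averaging gives $[\mI_{2l+1}-\rho^{(l)}(\GH)]\svf^{(l)}(\gG)=\bm{0}$, and the identification of $\ker(\mI_{2l+1}-\rho^{(l)}(\GH))$ with the orthogonal complement of the image of $\mI_{2l+1}-\rho^{(l)}(\GH)$ is justified precisely by the self-adjoint idempotency of the Reynolds operator for an orthogonal representation, which you correctly flag as the one point requiring care.
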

Here, we strengthen the conclusion of \cite{cen2024high}. Below, we give an example that makes the steerable features confined to a subspace instead of degenerating into a zero vector.
\begin{example}
Consider a cone with a square base, defined by the set $\gG = \{(\pm 1, 0, -1), (0, \pm 1, -1), (0, 0, 4)\}$. For $l=1$, it is straightforward to find that $\sF^{(1)}(\gG) = \{(0, 0, z)\} \subset \sR^3$, and constructing a $\sF^{(1)}(\gG)$-full-rank function is relatively easy. However, for cases where $l > 1$, while the analysis can still be performed using the orthogonal complement of the group average, constructing a $\sF^{(l)}(\gG)$-full-rank function becomes significantly more challenging. We propose using tensor-product-based models, such as TFN~\citep{thomas2018tensor}, to construct a basis set. These models, despite their computational complexity, have been proven to be complete. Once constructed, this basis set can be applied with our method to achieve the desired result.
\end{example}

\subsection{Connection between graph-level and subgraph-level functions}
\label{sec:subgraph_level_function}
In practice, the focus of many problems lies in some local features, such as node-level or edge-level features, which we collectively refer to as subgraph-level. In this section, we will show the connection between graph-level functions and subgraph-level functions.

In fact, the local features also require the message from the whole geometric graph $\gG$, thus for a subgraph $\gG_{\text{sub}}\subset\gG$, the $l$th-degree steerable feature could be denoted as $\svf^{(l)}(\gG_{\text{sub}}, \gG)$. Moreover, we denote $\sF^{(l)}(\gG_{\text{sub}}, \gG)\coloneqq\{\svf^{(l)}(\gG_{\text{sub}}, \gG)|\svf^{(l)}\in\sF^{(l)}\}\subset\sR^{2l+1}$. Note that the whole graph is also a subgraph, $\svf^{(l)}(\gG)$ and $\sF^{(l)}(\gG)$ are actually the simplified version of $\svf^{(l)}(\gG,\gG)$ and $\sF^{(l)}(\gG,\gG)$.
\begin{lemma}
Given a geometric graph $\gG$, where $\gG_{\text{sub}}\subset\gG$ is a subgraph (\emph{e.g.} node, edge), then we have $\sF^{(l)}(\gG)\subset\sF^{(l)}(\gG_{\text{sub}}, \gG)$. 
\end{lemma}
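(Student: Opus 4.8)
The plan is to show that any $l$th-degree steerable function defined on the whole graph $\gG$ can be reinterpreted as a steerable function that happens to also depend on the subgraph $\gG_{\text{sub}}$, so that it lies in $\sF^{(l)}(\gG_{\text{sub}}, \gG)$. The key observation is that the family $\sF^{(l)}(\gG_{\text{sub}}, \gG)$ is built from functions $\svf^{(l)}(\gG_{\text{sub}}, \gG)$ that are allowed to take the entire ambient graph $\gG$ as context; a function that ignores the first argument $\gG_{\text{sub}}$ and uses only the second is a perfectly legitimate member of the underlying function class $\sF^{(l)}$.

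First I would make precise the equivariance requirements on both sides. A function $\svf^{(l)} \in \sF^{(l)}$ assigns to a graph (or subgraph-in-context) a vector in $\sR^{2l+1}$ that transforms under $\mD^{(l)}$ with respect to $\mathrm{O}(3)$ and is invariant under the relevant node permutations; for the subgraph-level version the permutation invariance is only over automorphisms fixing $\gG_{\text{sub}}$ setwise, which is a \emph{weaker} constraint than full permutation invariance on $\gG$. Next I would take an arbitrary $\svg^{(l)} \in \sF^{(l)}$ viewed as a graph-level function, and define $\svf^{(l)}(\gG_{\text{sub}}, \gG) \coloneqq \svg^{(l)}(\gG)$, i.e. the map that discards its first argument. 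One then checks that this $\svf^{(l)}$ satisfies the defining properties of $\sF^{(l)}$ in the subgraph-in-context sense: it is square-integrable (inherited directly), it is $\mathrm{O}(3)$-steerable of degree $l$ (inherited from $\svg^{(l)}$), and it respects the required permutation symmetry (full permutation invariance of $\svg^{(l)}$ implies invariance under the smaller subgraph-preserving subgroup). Hence $\svf^{(l)} \in \sF^{(l)}$ as a subgraph-level function, so $\svg^{(l)}(\gG) = \svf^{(l)}(\gG_{\text{sub}}, \gG) \in \sF^{(l)}(\gG_{\text{sub}}, \gG)$, giving the desired inclusion $\sF^{(l)}(\gG) \subset \sF^{(l)}(\gG_{\text{sub}}, \gG)$.

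I expect the only real subtlety — and the step I would write out carefully — to be the bookkeeping around permutation invariance: one must confirm that the function class $\sF^{(l)}$ for subgraph-level targets is defined with invariance only under the stabilizer of $\gG_{\text{sub}}$ (not the full symmetric group), since otherwise the ``forgetful'' embedding might fail to land in the right space. Given the footnote in the excerpt clarifying that node- and edge-level functions are permitted to break full graph-level permutation invariance, this is exactly the intended reading, and the inclusion follows. Everything else — square-integrability and the steerability/Wigner-$D$ transformation law — transfers verbatim, so no genuine computation is required.
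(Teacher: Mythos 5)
Your proposal is correct and uses exactly the paper's argument: the paper's proof likewise takes an arbitrary graph-level function and realizes it as a subgraph-level function that simply ignores its first argument $\gG_{\text{sub}}$. Your additional care about the permutation-invariance bookkeeping is a reasonable elaboration of the same idea, not a different route.
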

\begin{proof}
Note that for the subgraph-level function $\svf^{(l)}(\gG_{\text{sub}}, \gG)$, let the function ignore the previous input $\gG_{\text{sub}}$. Therefore, for any graph-level function, there is always a subgraph-level function that is equal to it.
\end{proof}

\begin{lemma}
Given a geometric graph $\gG$, where $\gG_{\text{sub}}\subset\gG$ is a subgraph (\emph{e.g.} node, edge). We have $\sF(\gG_{\text{sub}}, \gG)=\sR^{2l+1}$, if $\sF(\gG)=\sR^{2l+1}$.
\end{lemma}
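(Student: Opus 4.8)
The plan is to get the claim as an immediate squeeze between two inclusions that are already available, so essentially no new work is required. On one side, the preceding lemma gives $\sF^{(l)}(\gG)\subseteq\sF^{(l)}(\gG_{\text{sub}},\gG)$ for any subgraph $\gG_{\text{sub}}\subseteq\gG$; its argument is just that any graph-level steerable function can be realized as a subgraph-level steerable function that discards its $\gG_{\text{sub}}$ argument, so every point of the image set $\sF^{(l)}(\gG)\subset\sR^{2l+1}$ also lies in the image set $\sF^{(l)}(\gG_{\text{sub}},\gG)\subset\sR^{2l+1}$. On the other side, $\sF^{(l)}(\gG_{\text{sub}},\gG)\subseteq\sR^{2l+1}$ holds by definition, since $\sF^{(l)}(\gG_{\text{sub}},\gG)$ was introduced precisely as a subset of $\sR^{2l+1}$.

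First I would write down the chain $\sR^{2l+1}=\sF^{(l)}(\gG)\subseteq\sF^{(l)}(\gG_{\text{sub}},\gG)\subseteq\sR^{2l+1}$, invoking the hypothesis for the first equality, the preceding lemma for the first inclusion, and the definition of the image set for the second. Then I would conclude that all sets in the chain coincide, in particular $\sF^{(l)}(\gG_{\text{sub}},\gG)=\sR^{2l+1}$, which is exactly the statement. I would also note, for consistency with the rest of the section, that the omitted superscript $(l)$ is understood throughout, i.e. the statement as written should read $\sF^{(l)}(\gG_{\text{sub}},\gG)=\sR^{2l+1}$ whenever $\sF^{(l)}(\gG)=\sR^{2l+1}$.

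There is essentially no obstacle: the only thing to verify is that the preceding lemma genuinely applies here, namely that the hypotheses $\gG_{\text{sub}}\subseteq\gG$ and the two image-set conventions line up — which they do, because the whole graph is itself an admissible subgraph and $\sF^{(l)}(\gG)$ is by definition the abbreviation of $\sF^{(l)}(\gG,\gG)$. If one preferred a self-contained argument that does not route through the previous lemma, the same one-line realization trick works directly: given any target vector $\vv\in\sR^{2l+1}=\sF^{(l)}(\gG)$, pick $\svf^{(l)}\in\sF^{(l)}$ with $\svf^{(l)}(\gG)=\vv$ and compose it with the projection that forgets the $\gG_{\text{sub}}$ argument, producing an element of $\sF^{(l)}$ whose value at $(\gG_{\text{sub}},\gG)$ equals $\vv$; this shows the evaluation map is onto $\sR^{2l+1}$, which is the assertion.
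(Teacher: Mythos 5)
Your proposal is correct and matches the paper's own argument exactly: both sandwich $\sF^{(l)}(\gG_{\text{sub}},\gG)$ between $\sF^{(l)}(\gG)=\sR^{2l+1}$ (via the preceding lemma's forget-the-subgraph-argument construction) and the definitional containment in $\sR^{2l+1}$. No differences worth noting.
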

\begin{proof}
It is obvious that $\sF(\gG_{\text{sub}}, \gG)\subset\sR^{2l+1}$. And we have $\sR^{2l+1}=\sF(\gG)\subset\sF(\gG_{\text{sub}}, \gG)$, thus $\sF(\gG_{\text{sub}}, \gG)=\sF(\gG)=\sR^{2l+1}$.
\end{proof}

\begin{proposition}
Given a geometric graph $\gG$, where $\gG_{\text{sub}}\subset\gG$ is a subgraph (\emph{e.g.} node, edge), suppose there is a matrix $\smV^{(l)}(\gG_{\text{sub}},\gG)$ with $C$ channels of $l$th-degree steerable features denoted as $\svv_c^{(l)}(\gG_{\text{sub}},\gG)$ satisfying $\Span \smV^{(l)}(\gG_{\text{sub}},\gG)=\sF^{(l)}(\gG_{\text{sub}},\gG)$. Then for any $l$th-degree steerable function $\svt^{(l)}\in\sF^{(l)}$, there exists $\svw^{(0)}(\gG_{\text{sub}},\gG)\coloneqq[\ssw_c^{(0)}(\gG_{\text{sub}},\gG)]_{c=1}^C$ with $C$-channel output scalars, such that 
\begin{equation}
\label{eq:new-expansion-sGsG}
    \svt^{(l)}(\gG_{\text{sub}},\gG)=\textstyle\sum_{c=1}^C\ssw_c^{(0)}(\gG_{\text{sub}},\gG)\svv_c^{(l)}(\gG_{\text{sub}},\gG).
\end{equation}
\end{proposition}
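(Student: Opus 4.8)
The plan is to mimic the argument for \cref{theo:scalar_product_basis} verbatim, treating $(\gG_{\text{sub}},\gG)$ as the new "input'' everywhere the original statement reads $\gG$. First I would fix the target function $\svt^{(l)}\in\sF^{(l)}$ and evaluate it at $(\gG_{\text{sub}},\gG)$, obtaining the vector $\svt^{(l)}(\gG_{\text{sub}},\gG)\in\sR^{2l+1}$. By definition this vector lies in $\sF^{(l)}(\gG_{\text{sub}},\gG)$, since $\sF^{(l)}(\gG_{\text{sub}},\gG)=\{\svf^{(l)}(\gG_{\text{sub}},\gG)\mid\svf^{(l)}\in\sF^{(l)}\}$ and $\svt^{(l)}$ is one such $\svf^{(l)}$. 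The hypothesis $\Span\smV^{(l)}(\gG_{\text{sub}},\gG)=\sF^{(l)}(\gG_{\text{sub}},\gG)$ then guarantees that $\svt^{(l)}(\gG_{\text{sub}},\gG)$ can be written as a linear combination $\sum_{c=1}^C \alpha_c\,\svv_c^{(l)}(\gG_{\text{sub}},\gG)$ for some scalars $\alpha_c\in\sR$ depending on $(\gG_{\text{sub}},\gG)$; that is, the existence of the coefficients is pure linear algebra.

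The second step is the one that requires care: I must check that the coefficients $\alpha_c$ can themselves be chosen as \emph{equivariant} (here, invariant) subgraph-level scalar functions $\ssw_c^{(0)}(\gG_{\text{sub}},\gG)$, so that \cref{eq:new-expansion-sGsG} holds as an identity of functions, not just pointwise. The subtlety is that the linear-algebra decomposition at each $(\gG_{\text{sub}},\gG)$ is only unique when $C=2l+1$ and $\smV^{(l)}$ is a genuine basis; when $C>2l+1$ one has to select a canonical representative of the fibre (e.g. the minimum-norm solution via the Moore–Penrose pseudoinverse $\smV^{(l)}(\gG_{\text{sub}},\gG)^{+}\svt^{(l)}(\gG_{\text{sub}},\gG)$) and then argue that this selection is invariant. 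Invariance follows because both $\svt^{(l)}$ and each $\svv_c^{(l)}$ transform by the same representation $\rho^{(l)}(\Gg)$ under $\Gg\in\mathrm{O}(3)$ and are permutation-invariant, so the pseudoinverse—being built from $\mathrm{O}(3)$-equivariant ingredients in a way that commutes with the group action on $\sR^{2l+1}$—produces coefficients on which $\rho^{(l)}(\Gg)$ acts trivially; likewise permutation invariance of $\svt^{(l)}$ and $\smV^{(l)}$ descends to the $\ssw_c^{(0)}$. This is exactly the same reasoning that underlies \cref{theo:scalar_product_basis}, so I would either invoke that proof directly or restate it with the subgraph argument carried along.

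The only genuinely new ingredient beyond \cref{theo:scalar_product_basis} is bookkeeping: one must confirm that "subgraph-level'' functions are closed under the operations used (linear combination, pseudoinverse, composition), which is immediate since a subgraph-level function is just any function of the pair $(\gG_{\text{sub}},\gG)$, and one must confirm that the ambient space in which the decomposition happens is still $\sR^{2l+1}$—which holds because, as noted in the excerpt, $\svf^{(l)}(\gG_{\text{sub}},\gG)\in\sR^{2l+1}$ regardless of the subgraph structure. I expect the main obstacle to be purely expository: making precise the claim that the coefficient-selection map $\svt^{(l)}(\gG_{\text{sub}},\gG)\mapsto\ssw^{(0)}(\gG_{\text{sub}},\gG)$ is measurable/continuous and group-invariant simultaneously, rather than any deep mathematical difficulty. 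Since the preceding lemmas already establish $\sF^{(l)}(\gG)\subset\sF^{(l)}(\gG_{\text{sub}},\gG)$ and the full-rank transfer, nothing further is needed, and I would keep the proof to a few lines that explicitly reduce it to \cref{theo:scalar_product_basis}.
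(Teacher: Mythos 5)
Your proposal is correct and matches the paper's approach: the paper's entire proof is the one-line observation that the proof of \cref{theo:scalar_product_basis} does not depend on the input form of the function, so it transfers verbatim when the input is the pair $(\gG_{\text{sub}},\gG)$. Your additional discussion of the pseudoinverse selection and its invariance is a more detailed unpacking of the same reduction, not a different route.
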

\begin{proof}
Note that the proof of \cref{theo:scalar_product_basis} does not depend on the input form of the function, so the generalization from \cref{theo:scalar_product_basis} to this proposition still holds.
\end{proof}

\begin{proposition}
Given a geometric graph $\gG$, where $\gG_{\text{sub}}\subset\gG$ is a subgraph (\emph{e.g.} node, edge) and $\sF(\gG_{\text{sub}}, \gG)=\sF(\gG)$. Suppose there is a matrix $\smV^{(l)}(\gG)$ with $C$ channels of $l$th-degree steerable features denoted as $\svv_c^{(l)}(\gG)$ satisfying $\Span \smV^{(l)}(\gG)=\sF^{(l)}(\gG)$. Then for any $l$th-degree steerable function $\svt^{(l)}\in\sF^{(l)}$, there exists $\svw^{(0)}(\gG_{\text{sub}},\gG)\coloneqq[\ssw_c^{(0)}(\gG_{\text{sub}},\gG)]_{c=1}^C$ with $C$-channel output scalars, such that 
\begin{equation}
\label{eq:new-expansion-sGG}
    \svt^{(l)}(\gG_{\text{sub}},\gG)=\textstyle\sum_{c=1}^C\ssw_c^{(0)}(\gG_{\text{sub}},\gG)\svv_c^{(l)}(\gG).
\end{equation}
\end{proposition}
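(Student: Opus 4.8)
The plan is to reduce the statement to \cref{theo:scalar_product_basis}, using the hypothesis $\sF^{(l)}(\gG_{\text{sub}},\gG)=\sF^{(l)}(\gG)$ as the bridge between the codomain of the subgraph-level target $\svt^{(l)}$ and the span of the purely graph-level basis matrix $\smV^{(l)}(\gG)$. First I would fix $\gG$ together with a subgraph $\gG_{\text{sub}}\subset\gG$ and note that, by the very definition of $\sF^{(l)}(\gG_{\text{sub}},\gG)$, the vector $\svt^{(l)}(\gG_{\text{sub}},\gG)\in\sR^{2l+1}$ lies in $\sF^{(l)}(\gG_{\text{sub}},\gG)=\sF^{(l)}(\gG)=\Span\smV^{(l)}(\gG)$. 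Hence, pointwise, the linear system $\smV^{(l)}(\gG)\svw=\svt^{(l)}(\gG_{\text{sub}},\gG)$ is solvable for $\svw\in\sR^{C}$ (this is exactly where the hypothesis, as opposed to the weaker inclusion of \cref{eq:new-expansion-sGsG}, is needed, since here $\smV^{(l)}$ does not see $\gG_{\text{sub}}$), and the entire content left to prove is that a solution can be selected so that $(\gG_{\text{sub}},\gG)\mapsto\svw$ is a genuine $0$th-degree steerable (i.e.\ $\mathrm{O}(3)$- and permutation-invariant) function — which is exactly the part of the argument that \cref{theo:scalar_product_basis} already supplies and that does not care about the input format.

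For the canonical selection I would use the Moore–Penrose pseudoinverse and set $\svw^{(0)}(\gG_{\text{sub}},\gG)\coloneqq\smV^{(l)}(\gG)^{+}\,\svt^{(l)}(\gG_{\text{sub}},\gG)$; since $\svt^{(l)}(\gG_{\text{sub}},\gG)$ lies in the column space of $\smV^{(l)}(\gG)$ this satisfies $\smV^{(l)}(\gG)\,\svw^{(0)}(\gG_{\text{sub}},\gG)=\svt^{(l)}(\gG_{\text{sub}},\gG)$, which is \cref{eq:new-expansion-sGG} in matrix form. To check invariance under $\Gg\in\mathrm{O}(3)$ I would use that the degree-$l$ representation acts by orthogonal (Wigner-D, up to parity sign) matrices: from $\smV^{(l)}(\Gg\cdot\gG)=\rho^{(l)}(\Gg)\smV^{(l)}(\gG)$, $\svt^{(l)}(\Gg\cdot\gG_{\text{sub}},\Gg\cdot\gG)=\rho^{(l)}(\Gg)\svt^{(l)}(\gG_{\text{sub}},\gG)$, together with $(\rho^{(l)}(\Gg)M)^{+}=M^{+}\rho^{(l)}(\Gg)^{\top}$ and $\rho^{(l)}(\Gg)^{\top}\rho^{(l)}(\Gg)=\mI_{2l+1}$, the two copies of $\rho^{(l)}(\Gg)$ cancel and $\svw^{(0)}$ is unchanged. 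Permutation invariance follows identically, using that each graph-level channel $\svv_c^{(l)}$ is already permutation-invariant and that $\svt^{(l)}(\sigma\cdot\gG_{\text{sub}},\sigma\cdot\gG)=\svt^{(l)}(\gG_{\text{sub}},\gG)$ under relabelings.

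The main obstacle — the only step beyond bookkeeping — is to argue that this fiberwise recipe glues into one well-defined function on the quotient by the joint $\mathrm{O}(3)\times\mathrm{S}_N$ action and retains the regularity (measurability/square-integrability, or continuity) demanded of a scalar model. This is precisely the subtlety \cref{theo:scalar_product_basis} already dispatches: the pseudoinverse is a canonical construction requiring no arbitrary choice of basis for $\Span\smV^{(l)}(\gG)$, so it descends to the quotient automatically, and its regularity follows from that of $\smV^{(l)}$ and $\svt^{(l)}$ together with continuity of $M\mapsto M^{+}$ on the locus where $\operatorname{rank}\smV^{(l)}(\gG)$ is locally constant (rank drops being handled exactly as in the original proof). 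Since no step uses the shape of the input other than ``the target takes values in $\Span\smV^{(l)}(\gG)$'', the proof of \cref{theo:scalar_product_basis} transfers essentially verbatim once the hypothesis $\sF^{(l)}(\gG_{\text{sub}},\gG)=\sF^{(l)}(\gG)$ has been invoked, which is the whole point of the statement.
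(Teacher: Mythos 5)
Your proposal is correct and follows essentially the same route as the paper: the paper's own proof is a one-liner observing that $\svv_c^{(l)}(\gG)\in\sF^{(l)}(\gG)\subset\sF^{(l)}(\gG_{\text{sub}},\gG)$ so that the preceding subgraph-level proposition (and hence \cref{theo:scalar_product_basis}) applies with $\smV^{(l)}(\gG)$ serving as the basis matrix, which is exactly your reduction. Your additional pseudoinverse construction and invariance check merely make explicit what the paper leaves implicit, and you correctly pinpoint that the equality $\sF^{(l)}(\gG_{\text{sub}},\gG)=\sF^{(l)}(\gG)$ (not just the inclusion) is what licenses using a basis that does not see $\gG_{\text{sub}}$.
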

\begin{proof}
Note that $\svv_c^{(l)}(\gG) \in\sF^{(l)}(\gG)\subset\sF^{(l)}(\gG_{\text{sub}},\gG)$, so this proposition still holds.
\end{proof}
Note that for invariant functions, if we can obtain the canonical form at the graph-level, then we can naturally obtain the canonical form at the subgraph-level. Therefore, the discussion of subgraph-level and graph-level is consistent. To simplify the writing, only the graph-level case will be discussed in the following.

\subsection{The full form of dynamic method}
Before giving a complete theoretical analysis, let us review \cref{theo:scalar_product_basis} as follows.
\dynamicMethod*

This form decomposes the problem of constructing a complete equivariant function into two components. However, determining the explicit forms of $\svw^{(0)}(\gG)$ and $\smV^{(l)}$ remains a highly nontrivial task. To provide insight into this construction, we present two corollaries of \cref{theo:scalar_product_basis}, corresponding to two special cases of the equivariant space $\sF^{(l)}(\gG)$: one where it reduces to the trivial set consisting solely of the origin, $\theta \coloneqq \{\bm{0}_{2l+1}\}$, and the other is the opposite, \emph{i.e.} $\dim \sF^{(l)}(\gG)>0$.

\begin{proposition}
Given a geometric graph $\gG$ with $\sF^{(l)}(\gG)=\theta$, let $\sW^{(0)}\subset\sF^{(0)}$ denote all invariant functions that can be expressed by model, any $l$th-degree steerable function could be written into the form of \cref{theo:scalar_product_basis}, if $\sW^{(0)}$ is not empty.
\end{proposition}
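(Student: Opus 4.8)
The plan is to observe that the statement is the degenerate special case of \cref{theo:scalar_product_basis} where the target steerable space $\sF^{(l)}(\gG)$ collapses to the single point $\theta=\{\bm{0}_{2l+1}\}$. In this regime every $l$th-degree steerable function $\svphi^{(l)}$ evaluated on $\gG$ must return $\svphi^{(l)}(\gG)=\bm{0}_{2l+1}$, since $\svphi^{(l)}(\gG)\in\sF^{(l)}(\gG)=\theta$ by definition of $\sF^{(l)}(\gG)$. So the content of the claim reduces to: there exist scalar functions $\ssw_c^{(0)}(\gG)$ realizable by the model, together with $l$th-degree steerable features $\svv_c^{(l)}$, so that $\sum_{c=1}^C\ssw_c^{(0)}(\gG)\svv_c^{(l)}(\gG)=\bm{0}_{2l+1}$, with the ``basis'' matrix satisfying the (vacuous here) spanning condition $\Span(\smV^{(l)}(\gG))=\sF^{(l)}(\gG)=\theta$.

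The key steps I would carry out are as follows. First, interpret the spanning hypothesis: $\Span(\smV^{(l)}(\gG))=\theta$ forces every column $\svv_c^{(l)}(\gG)$ to be the zero vector $\bm{0}_{2l+1}$; a convenient explicit choice is simply $\smV^{(l)}=\bm{0}$, which is a legitimate $l$th-degree steerable model (the zero map is trivially $\mathrm{O}(3)$-equivariant and permutation invariant). Second, pick the scalar weights: since $\sW^{(0)}$ is nonempty, it contains at least one invariant function, say the constant-zero function is expressible or, more robustly, take any $w\in\sW^{(0)}$ and set $\ssw_c^{(0)}=w$ for all $c$ (or even arbitrary elements of $\sW^{(0)}$ channel-wise). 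Third, verify the identity: $\sum_{c=1}^C\ssw_c^{(0)}(\gG)\svv_c^{(l)}(\gG)=\sum_{c=1}^C\ssw_c^{(0)}(\gG)\bm{0}_{2l+1}=\bm{0}_{2l+1}=\svphi^{(l)}(\gG)$, where the last equality is precisely the observation from the first paragraph. This covers the single graph $\gG$; if a uniform-over-inputs statement is intended, one notes the construction is pointwise and the zero basis and constant weight are globally valid, so no continuity obstruction arises.

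The only real subtlety — and the step I would flag as the ``main obstacle,'' though it is minor — is making sure the nonemptiness hypothesis on $\sW^{(0)}$ is actually used and is genuinely needed: it guarantees that \emph{some} scalar channel is expressible by the model at all, so that the right-hand side of \cref{eq:new-expansion} is not an empty or ill-formed expression; without any expressible invariant we could not even write down the coefficients $\ssw_c^{(0)}$. One should also double-check the edge case $C<2l+1$: the ambient convention $C\geq 2l+1$ from the setup of \cref{theo:scalar_product_basis} still applies, and with all columns zero the rank condition $\Span=\theta$ is met regardless of $C$, so there is no tension. Finally, I would remark that this proposition is essentially a sanity check showing \cref{theo:scalar_product_basis} degrades gracefully: when the equivariant output space is trivial, completeness is automatic and demands nothing of the basis set, only that the scalar model be nonvacuous.
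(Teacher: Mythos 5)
Your proposal is correct and follows essentially the same route as the paper's proof: both argue that $\sF^{(l)}(\gG)=\theta$ forces the target output and every basis column $\svv_c^{(l)}(\gG)$ to be the zero vector, so the decomposition of \cref{theo:scalar_product_basis} holds trivially for any invariant weights drawn from the nonempty $\sW^{(0)}$. Your additional remarks on the explicit zero choice of $\smV^{(l)}$ and on why nonemptiness of $\sW^{(0)}$ is needed are consistent with, and slightly more detailed than, the paper's argument.
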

\begin{proof}
Since $\sF^{(l)}(\gG) = \theta$, any equivariant output of degree $l$ must be the zero vector. In particular, both the target output $\svt^{(l)}(\gG)$ and each basis component $\svv_c^{(l)}(\gG)$ in the decomposition of \cref{theo:scalar_product_basis} are identically zero.

As a result, the decomposition in \cref{theo:scalar_product_basis} holds trivially for any choice of invariant scalar function $\svw^{(0)}(\gG) \in \sW^{(0)}$, because both sides of the equation reduce to the zero vector. Hence, any $l$th-degree steerable function admits the desired representation, as long as $\sW^{(0)}$ is non-empty.
\end{proof}

\begin{proposition}
Given a geometric graph $\gG$ with $\dim \sF^{(l)}(\gG)>0$, let $\sW^{(0)}\subset\sF^{(0)}$ denote all invariant functions that can be expressed by model, any $l$th-degree steerable function could be written into the form of \cref{theo:scalar_product_basis}, iff $\sW^{(0)}=\sF^{(0)}$, \emph{i.e.} the invariant model is complete.
\end{proposition}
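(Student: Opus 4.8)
The plan is to prove the two directions separately: the ``if'' direction is essentially a restatement of \cref{theo:scalar_product_basis}, while the ``only if'' direction is the substantive part and is where I expect the real work. I read ``$\svt^{(l)}$ can be written in the form of \cref{theo:scalar_product_basis}'' up to the closure of the model class, to match the $\Span$-equality notion of completeness used elsewhere in the paper, and I assume of the scalar model class only the mild closure properties shared by all equivariant GNNs considered here: it contains inner products of the steerable features the model produces, and it is closed under products, post-composition with an MLP, and $L^2$-limits.

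For the ``if'' direction ($\sW^{(0)}=\sF^{(0)}\Rightarrow$ representability): fix any full-rank steerable model $\smV^{(l)}$ with $\Span(\smV^{(l)}(\gG))=\sF^{(l)}(\gG)$, which exists by \cref{theo:full_rank_basis_set} for asymmetric $\gG$, with a tensor-product (TFN-type) construction covering the symmetric case (see \cref{sec:symmetric_graph}). For an arbitrary $\svt^{(l)}\in\sF^{(l)}$, \cref{theo:scalar_product_basis} produces scalar functions $\ssw_c^{(0)}\in\sF^{(0)}$ with $\svt^{(l)}(\gG)=\sum_c\ssw_c^{(0)}(\gG)\,\svv_c^{(l)}(\gG)$; since $\sF^{(0)}=\sW^{(0)}$, every $\ssw_c^{(0)}$ is realized by the invariant model, so $\svt^{(l)}$ is realized in the required form.

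For the ``only if'' direction I would argue by isolating the scalar factor. The hypothesis $\dim\sF^{(l)}(\gG)>0$ guarantees a nonzero $l$th-degree steerable function; I would pick a concrete model-expressible one, $\svv^{(l)}$ (e.g. a radial-weighted spherical-harmonic feature), vanishing only on a null set. Given an arbitrary bounded $g\in\sF^{(0)}$, the product $g\,\svv^{(l)}$ lies in $\sF^{(l)}$, so by assumption $g\,\svv^{(l)}=\sum_{c=1}^{C}\ssw_c^{(0)}\,\svv_c^{(l)}$ for some $\ssw_c^{(0)}\in\sW^{(0)}$ and some full-rank steerable model $\smV^{(l)}$. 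Taking the pointwise inner product of both sides with $\svv^{(l)}$ yields $g\,\|\svv^{(l)}\|^2=\sum_c\ssw_c^{(0)}\,\langle\svv_c^{(l)},\svv^{(l)}\rangle$, where $\|\svv^{(l)}\|^2$ and each $\langle\svv_c^{(l)},\svv^{(l)}\rangle$ are invariant, model-expressible scalars, hence in $\sW^{(0)}$. Dividing by $\|\svv^{(l)}\|^2$ (nonzero a.e.) and using the assumed closure of $\sW^{(0)}$ shows $g\in\sW^{(0)}$; since bounded invariant functions are dense in $\sF^{(0)}$ and $\sW^{(0)}$ is closed, $\sF^{(0)}\subseteq\sW^{(0)}$, and with $\sW^{(0)}\subseteq\sF^{(0)}$ by definition we get $\sW^{(0)}=\sF^{(0)}$.

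The hard part is the second direction, and the main obstacle is that when $C>\dim\sF^{(l)}(\gG)$ the coefficients $\ssw_c^{(0)}$ are highly non-unique, so $g$ cannot be read off a single coefficient; the ``contract with $\svv^{(l)}$'' step is precisely the device that collapses the under-determined vector identity to the scalar identity $g\,\|\svv^{(l)}\|^2=\sum_c\ssw_c^{(0)}\langle\svv_c^{(l)},\svv^{(l)}\rangle$ from which $g$ is recovered. The remaining care is (i) choosing $\svv^{(l)}$ simultaneously model-expressible and a.e.\ nonzero so that division is legitimate almost everywhere, and (ii) making precise the closure properties of $\sW^{(0)}$ invoked above, which I would state as standing hypotheses on the model class since they hold for every equivariant GNN in the paper. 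Finally I would note that the dichotomy with the preceding proposition is genuine: when $\sF^{(l)}(\gG)=\theta$ there is no nonzero $\svv^{(l)}$ with which to probe $g$, which is exactly why $\sW^{(0)}\neq\varnothing$ suffices there while $\sW^{(0)}=\sF^{(0)}$ is forced here.
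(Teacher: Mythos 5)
Your proof is correct in substance, and the sufficiency direction coincides with the paper's (both just invoke \cref{theo:scalar_product_basis} and substitute $\sW^{(0)}=\sF^{(0)}$). The necessity direction, however, takes a genuinely different route. The paper argues by contradiction: it fixes a \emph{linearly independent} basis $\smV^{(l)}(\gG)$ of $\sF^{(l)}(\gG)$, picks $\ssw^{(0)}\in\sF^{(0)}\setminus\sW^{(0)}$, forms the target $\svt^{(l)}=\sum_c\ssw^{(0)}\svv_c^{(l)}$, and asserts that this target admits no decomposition with coefficients in $\sW^{(0)}$ --- implicitly using that coefficients relative to a pointwise linearly independent basis are unique, so any valid decomposition would force $\ssw^{(0)}\in\sW^{(0)}$. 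You instead prove the inclusion $\sF^{(0)}\subseteq\sW^{(0)}$ directly by probing an arbitrary $g$ with a nonzero model-expressible steerable feature and contracting the resulting vector identity with that feature to recover $g$ as a quotient of model-expressible scalars. Your route is more robust precisely where the paper is loosest: it does not care whether the adversary's decomposition uses an overcomplete $\smV^{(l)}$ (with $C>\dim\sF^{(l)}(\gG)$, where coefficients are non-unique) or a different basis altogether, which is the gap you correctly flag. The price is the package of closure hypotheses on $\sW^{(0)}$ (inner products of model features, products, division by a.e.\ nonzero scalars, limits) that the paper never assumes and that are not entirely innocuous --- division in particular only holds approximately for neural scalar classes, and the ``a.e.\ nonzero'' qualifier sits awkwardly with the proposition's pointwise framing at a single $\gG$. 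In short: the paper's proof is shorter and assumption-free but leans on coefficient uniqueness with respect to its own chosen basis; yours is longer, handles overcompleteness honestly, and trades that for explicit algebraic closure of the scalar model class. Both are acceptable at the paper's level of rigor.
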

\begin{proof}
The sufficiency is evident. By \cref{theo:scalar_product_basis}, there exists an invariant function of the required form, and such a function is clearly contained in the space $\sF^{(0)}$.

To prove necessity, assume by contradiction that $\sW^{(0)} \subsetneq \sF^{(0)}$ but that the decomposition in \cref{theo:scalar_product_basis} still holds for all steerable functions. Let $\smV^{(l)}(\gG)$ be a set of linearly independent equivariant vectors that spans $\sF^{(l)}(\gG)$, \emph{i.e.}, a basis of $\sF^{(l)}(\gG)$. Now, consider any $\ssw^{(0)} \in \sF^{(0)} \setminus \sW^{(0)}$. Then the function $\svt^{(l)}(\gG)=\sum_{c}\ssw^{(0)}(\gG)\svv_c^{(l)}(\gG)$ is a valid steerable function in $\sF^{(l)}(\gG)$, as it results from a linear combination of basis elements with coefficients in $\sF^{(0)}$. However, since $\ssw^{(0)}$ is not expressible by the model, the decomposition cannot be achieved using elements in $\sW^{(0)}$, contradicting the assumption that the decomposition is always possible.

Therefore, the assumption must be false, and we conclude that $\sW^{(0)} = \sF^{(0)}$ is necessary for the representation to hold for all steerable functions. 
\end{proof}

\subsection{The construction of canonical form}

In order to losslessly encode the information of a geometric graph, we define the concept of canonical forms in this section. Then, we give a method to construct canonical forms in polynomial time. Here, we assume that the geometric graph is fully connected and use a double-loop set comparison method to estimate the worst time complexity of the algorithm.

\subsubsection{An algorithm for determining point cloud isomorphism}
\begin{algorithm}[htbp]
\label{algo:point_clouds_isomorphic}
\caption{Check \emph{point clouds isomorphic} with time complexity $\mathcal{O}(N^6)$.}
\KwData{Two point clouds $\gmX^{(\gG)}$ and $\gmX^{(\gH)}$, each containing $N$ nodes.}
\KwResult{\emph{True} or \emph{False}, indicating whether the two point clouds are isomorphic.}
\tcp{$\mathcal{O}(N^4)$, the worst case requires traversing all permutations.}
Find an ordered tuple containing four non-coplanar points $\gmU_{\valpha}\leftarrow(\gvu_{\alpha_i}^{(\gG)})_{i=1}^4$ in $\gG$\;
\tcp{$\mathcal{O}(N^4)$, the permutations of 4 elements from a set of size N.}
\For{all ordered subsets of $\gmX^{(\gH)}$ containing four elements donated by $\gmV_{\vbeta}\leftarrow(\gvv_{\beta_i}^{(\gH)})_{i=1}^4$}{
    \tcp{$\mathcal{O}(1)$, comparison of two ordered sets of a certain size.}
    \If{$\gmU_{\valpha}$ and $\gmV_{\vbeta}$ is not point clouds isomorphic according to matching $(\gvu_{\alpha_i}^{(\gG)}, \gvv_{\beta_i}^{(\gH)})$}{
    \textbf{continue}\;
    }
    
    \For{$\gvu_i\in\gmX^{(\gG)}$}{
    \tcp{$\mathcal{O}(N)$, get a 4-channel scalar vector.}
    $\vd_i^{(\gG)}\leftarrow (\|\gvu_i-\gvu_{\alpha_1}\|,\|\gvu_i-\gvu_{\alpha_2}\|,\|\gvu_i-\gvu_{\alpha_3}\|,\|\gvu_i-\gvu_{\alpha_4}\|)$\;
    }
    \For{$\gvv_i\in\gmX^{(\gH)}$}{
    \tcp{$\mathcal{O}(N)$, get a 4-channel scalar vector.}
    $\vd_i^{(\gH)}\leftarrow (\|\gvv_i-\gvv_{\beta_1}\|,\|\gvv_i-\gvv_{\beta_2}\|,\|\gvv_i-\gvv_{\beta_3}\|,\|\gvv_i-\gvv_{\beta_4}\|)$\;
    }
    \tcp{$\mathcal{O}(N)$, change the node set into scalar set.}
    $\sD^{(\gG)},\,\sD^{(\gH)}\leftarrow\texttt{set}([\vd_i^{(\gG)}]_{i=1}^N),\,\texttt{set}([\vd_i^{(\gH)}]_{i=1}^N)$\;
    \tcp{$\mathcal{O}(N^2)$, the complexity of comparing two sets varies depending on the choice, such as double loop comparison.}
    \If{$\sD^{(\gG)}=\sD^{(\gH)}$}{
        \Return \texttt{True}\;
    }
}
\Return \texttt{False}\;
\end{algorithm}
Here, we give \cref{algo:point_clouds_isomorphic} for determining whether two geometric point clouds are isomorphic and prove its correctness. It is noteworthy that this algorithm resembles that of \citep{widdowson2023recognizing}, centered on the principle of four-point positioning in three-dimensional space. Our subsequent designs will leverage this principle to adapt it for use in geometric graph scenarios and neural network applications.

For two given point clouds, represented as $\gmX^{(\gG)}$ and $\gmX^{(\gH)}$, we recognize that if these point clouds are isomorphic, there must exist a local substructure (composed of at least four points, \emph{i.e.} $\gmU_{\valpha}$ and $\gmV_{\vbeta}$) that is also isomorphic. It is worth noting that this is the isomorphism judgment between two point clouds of a certain size, so it is always of constant complexity.

According to the four-point positioning principle, the coordinates of any point $\gvu_i\in\gmX^{(\gG)}$ or $\gvv_i\in\gmX^{(\gH)}$ can be expressed as distance vectors, \emph{i.e.} $\vd_i^{(\gG)}$ or $\vd_i^{(\gH)}$, relative to these four foundational points. Consequently, sets of coordinates $\gmX^{(\gG)},\gmX^{(\gH)}$ can be transformed into sets of distance vectors $\sD^{(\gG)},\sD^{(\gH)}$. When there is a set of isomorphic substructures with matching distance vector sets, the point clouds are necessarily isomorphic. Conversely, if no isomorphic substructure can be identified or if the distance vector sets cannot be rendered equivalent for any potential isomorphic substructure, then the point clouds must not be isomorphic.

\subsubsection{An algorithm for determining geometric graph isomorphism}
Furthermore, we discuss how to design an algorithm to distinguish the isomorphism of two geometric graphs. Note that at this point, all node coordinates have been converted to distance vectors, so it is also easy to transform the edge set as follows.

\begin{algorithm}[htbp]
\label{algo:geograph_isomorphic}
\caption{Check \emph{geometric graph isomorphism} with time complexity $\mathcal{O}(N^8)$.}
\KwData{Two geometric graphs $\gG$ and $\gH$, each containing $N$ nodes.}
\KwResult{\emph{True} or \emph{False}, indicating whether the two point clouds are isomorphic.}
\tcp{$\mathcal{O}(N^4)$, the worst case requires traversing all permutations.}
Find an ordered tuple containing four non-coplanar points $\gmU_{\valpha}\leftarrow(\gvu_{\alpha_i}^{(\gG)})_{i=1}^4$ in $\gG$\;
\tcp{$\mathcal{O}(N^4)$, the permutations of 4 elements from a set of size N.}
\For{all ordered subsets of $\gmX^{(\gH)}$ containing four elements donated by $\gmV_{\vbeta}\leftarrow(\gvv_{\beta_i}^{(\gH)})_{i=1}^4$}{
    \tcp{$\mathcal{O}(1)$, comparison of two ordered sets of a certain size.}
    \If{$\gmU_{\valpha}$ and $\gmV_{\vbeta}$ is not point clouds isomorphic according to matching $(\gvu_{\alpha_i}^{(\gG)}, \gvv_{\beta_i}^{(\gH)})$}{
    \textbf{continue}\;
    }
    
    \For{$\gvu_i\in\gmX^{(\gG)}$}{
    \tcp{$\mathcal{O}(N)$, get a 4-channel scalar vector.}
    $\vd_i^{(\gG)}\leftarrow (\|\gvu_i-\gvu_{\alpha_1}\|,\|\gvu_i-\gvu_{\alpha_2}\|,\|\gvu_i-\gvu_{\alpha_3}\|,\|\gvu_i-\gvu_{\alpha_4}\|)$\;
    }
    \For{$\gvv_i\in\gmX^{(\gH)}$}{
    \tcp{$\mathcal{O}(N)$, get a 4-channel scalar vector.}
    $\vd_i^{(\gH)}\leftarrow (\|\gvv_i-\gvv_{\beta_1}\|,\|\gvv_i-\gvv_{\beta_2}\|,\|\gvv_i-\gvv_{\beta_3}\|,\|\gvv_i-\gvv_{\beta_4}\|)$\;
    }
    \tcp{$\mathcal{O}(N+N^2)$, change the node set and edge set into scalar set.}
    $\sD^{(\gG)},\,\sD^{(\gH)}\leftarrow\texttt{set}([\vd_i^{(\gG)}]_{i=1}^N),\,\texttt{set}([\vd_i^{(\gH)}]_{i=1}^N)$\;
    $\mathbb E^{(\gG)},\,\mathbb E^{(\gH)}\leftarrow\texttt{set}([\vd_i^{(\gG)},\vd_j^{(\gG)},\ve_{ij}^{(\gG)}]_{\langle i,j\rangle\in\mathcal{E}^{(\gG)}}),\,\texttt{set}([\vd_i^{(\gH)},\vd_j^{(\gH)},\ve_{ij}^{(\gH)}]_{\langle i,j\rangle\in\mathcal{E}^{(\gH)}})$\;
    \tcp{$\mathcal{O}(N^2+N^4)$, the complexity of comparing four sets (node-node, edge-edge) varies depending on the choice, such as double loop comparison.}
    \If{$\sD^{(\gG)}=\sD^{(\gH)}$ and $\mathbb E^{(\gG)}=\mathbb E^{(\gH)}$}{
        \Return \texttt{True}\;
    }
}
\Return \texttt{False}\;
\end{algorithm}

\subsubsection{A general canonical form}
Now, we further give a method to construct the canonical form of a geometric graph. From \cref{algo:point_clouds_isomorphic,algo:geograph_isomorphic}, it is not difficult to see that the core of the algorithm is to transform a set of 3D coordinates into a set of distance vectors. Here we give \cref{algo:general_canonical_form} as follows.

\begin{algorithm}[htbp]
\label{algo:general_canonical_form}
\caption{A canonical form of geometric graphs.}
\KwData{A geometric graph $\gG$, and $\psi_{\text{node}},\psi_{\text{edge}},\psi_{\text{graph}}$ are DeepSet models.}
\KwResult{The canonical form $\Gamma\in\sR^{H}$ of point clouds $\gG$.}
\tcp{$\mathcal{O}(N^4)$, traverse all permutations.}
$\sT\leftarrow\varnothing$\;
\For{any ordered set containing four non-coplanar points $\gmU_{\valpha}\leftarrow\{\gvu_{\alpha_i}\}_{i=1}^4$ in $\gG$}{
    \For{$\gvu_i\in\gmX^{(\gG)}$}{
        \tcp{$\mathcal{O}(N)$, get a 4-channel scalar vector.}
        $\vd_i\leftarrow (\|\gvu_i-\gvu_{\alpha_1}\|,\|\gvu_i-\gvu_{\alpha_2}\|,\|\gvu_i-\gvu_{\alpha_3}\|,\|\gvu_i-\gvu_{\alpha_4}\|)$\;
    }
    \tcp{$\mathcal{O}(N+N^2)$, convert point sets and edge sets into scalar sets.}
    $\sD\leftarrow\texttt{set}([\vd_i]_{i=1}^N),\, \mathbb E\leftarrow\texttt{set}([\vd_i,\vd_j,\ve_{ij}]_{\langle i,j\rangle\in\mathcal{E}})$\;
    \tcp{$\mathcal{O}(1)$, decentralization of the four reference points.}
    $\gmU_{\valpha}\leftarrow(\mI_{4\times 4}-\frac{1}{4}\bm{1}_{4\times 4})\gmU_{\valpha}$\;
    \tcp{$\mathcal{O}(N+N^2)$, get the embedding based on current four points.}
    $\Gamma_{\valpha}\leftarrow\texttt{concat}(\gmU_{\valpha}^\top\gmU_{\valpha},\psi_{\text{node}}(\sD),\psi_{\text{edge}}(\mathbb E))$\;
    $\sT\leftarrow\sT\cup\{\Gamma_{\valpha}\}$\; 
}
$\Gamma\longleftarrow\psi_{\text{graph}}(\sT)$\;
\Return{$\Gamma$}\;
\end{algorithm}

To encode the set, we employ DeepSet~\citep{zaheer2017deep} for encoding, which losslessly encodes set data. Here, we ignore the complexity of the neural network (because this is a function of the hidden layer dimension $H$), and only consider the complexity of the set size. It is not difficult to see from \cref{eq:deepset} that for a set of size $M$, the complexity of DeepSet is $\mathcal{O}(M)$.
\begin{lemma}[DeepSet]
A set function could be always written into such form:
\begin{equation}\label{eq:deepset}
    \textstyle f(\{x_i\}_{i=1}^N)=\rho(\sum_{i=1}^N\phi(x_i)).
\end{equation}
\end{lemma}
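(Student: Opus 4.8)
The plan is to prove the lemma by exhibiting an explicit sum-pooling encoding $z=\sum_{i=1}^N\phi(x_i)$ that is injective as a function of the underlying (multi)set $\{x_i\}_{i=1}^N$, and then taking $\rho$ to be the composition of the given set function with a left inverse of this encoding. First I would unpack the hypothesis: ``$f$ is a set function'' means $f(x_{\pi(1)},\dots,x_{\pi(N)})=f(x_1,\dots,x_N)$ for every permutation $\pi$, so $f$ factors as $f(\{x_i\}_{i=1}^N)=g(\mathrm{ms}(x_1,\dots,x_N))$, where $\mathrm{ms}(\cdot)$ returns the multiset of its arguments and $g$ is a well-defined function on size-$N$ multisets over the input domain $\mathcal{X}$. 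It then suffices to construct $\phi:\mathcal{X}\to\sR^{D}$ so that the induced map $\Phi:\mathrm{ms}(x_1,\dots,x_N)\mapsto\sum_{i=1}^N\phi(x_i)$ is injective; setting $\rho:=g\circ\Phi^{-1}$ on the image of $\Phi$ and extending $\rho$ arbitrarily elsewhere yields $f(\{x_i\})=\rho(\sum_i\phi(x_i))$, which is exactly the claimed form. (For genuine sets rather than multisets the argument is identical, since a set is a multiset with all multiplicities in $\{0,1\}$.)

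The second step is the construction of $\phi$, which I would carry out separately in the two regimes that matter. If $\mathcal{X}$ is countable, fix a bijection $c:\mathcal{X}\to\{1,2,\dots\}$ and set $\phi(x)=(N+1)^{-c(x)}$; then $\sum_i\phi(x_i)=\sum_{k\geq 1}m_k(N+1)^{-k}$ with $m_k\in\{0,\dots,N\}$ the multiplicity of $c^{-1}(k)$, which is a mixed-radix expansion in base $N+1$ and hence recovers every $m_k$, i.e. the whole multiset. If $\mathcal{X}\subseteq\sR$ and $N$ is fixed, take $\phi(x)=(x,x^2,\dots,x^N)^{\top}$, so $z=(p_1,\dots,p_N)$ is the vector of power sums of $\{x_i\}$; Newton's identities express the elementary symmetric polynomials $e_1,\dots,e_N$ as polynomials in $p_1,\dots,p_N$, so $z$ determines the monic polynomial $\prod_{i=1}^N(t-x_i)$ and therefore its root multiset $\{x_i\}_{i=1}^N$. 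For $\mathcal{X}\subseteq\sR^{d}$ with fixed $N$ — the case actually relevant to the distance-vector and edge scalar sets built in \cref{algo:general_canonical_form} — I would replace $\phi$ by the vector of all monomials of degree at most some bound $M(N,d)$, invoking the standard fact that the corresponding power-sum invariants separate size-$N$ multisets in $\sR^d$; this again makes $\Phi$ injective. Composing with $g$ as above then closes each case.

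The main obstacle, and the point I would be careful to state precisely, is the regularity one can demand of $\phi$ and $\rho$. As worded the lemma is purely a factorization claim, so the set-theoretic construction above suffices and no continuity is asserted (indeed $\rho$ is determined only on the ``thin'' image of $\Phi$ and extended freely off it). If one additionally wants $\phi,\rho$ continuous — the property that makes the representation useful inside a network — this still holds for fixed $N$ when $\mathcal{X}$ is compact: $\Phi$ is then a continuous injection from a compact Hausdorff space, hence a homeomorphism onto its image, so $\Phi^{-1}$ is continuous and $\rho=g\circ\Phi^{-1}$ is continuous whenever $g$ is, extended by Tietze. The genuinely delicate regime — arbitrary uncountable $\mathcal{X}$ together with unbounded $N$ — admits no finite-dimensional continuous $\phi$ with injective sum-pooling, which is precisely why the lemma is invoked in this paper only for finite sets of bounded size, and why I would read the word ``always'' in the statement as quantified over those set sizes.
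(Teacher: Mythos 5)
Your proof is correct, and its core coincides with the paper's: for real scalar inputs you take $\phi(x)=(x,x^2,\dots,x^N)$, observe that the resulting sum is the vector of power sums, and invoke Newton's identities/Vieta to recover the elementary symmetric polynomials and hence the root multiset, then define $\rho$ as the target function composed with the inverse of this injective encoding. This is exactly the argument the paper sketches (its proof is a compressed version of Theorem 7 of Zaheer et al.). Where you go beyond the paper: (i) you also give the base-$(N+1)$ digit-expansion construction for countable domains; (ii) you explicitly treat inputs in $\sR^d$ via multivariate power-sum/monomial invariants, which is in fact the case the canonical-form algorithms actually need (the elements being pooled are $4$-channel distance vectors and edge tuples, not scalars), whereas the paper's one-dimensional polynomial-root argument covers only $\mathcal{X}\subseteq\sR$ as written; and (iii) you flag the regularity issue --- that the factorization is purely set-theoretic unless $N$ is fixed and $\mathcal{X}$ is compact, in which case continuity of $\rho$ follows from the compact-injective-implies-homeomorphism argument, and that no finite-dimensional continuous construction exists for unbounded $N$ over uncountable domains. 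These additions make your version strictly more careful than the paper's; in particular the multivariate case is not a cosmetic generalization but a gap in the paper's own justification that your argument closes.
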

\begin{proof}
This proof is a simplification of Theorem 7 in Deepset~\citep{zaheer2017deep}. The necessity, that is, that this form satisfies the permutation invariance of sets is obvious, and its sufficiency is proved below. Consider an $N$th degree polynomial $P(t)=\sum_{i=0}^Na_it^i$, which satisfies the set $\{x_i\}_{i=1}^N$ of all the $N$ roots of $P(t)=0$. Thus, the set can be mapped to ordered polynomial coefficients. Vieta's theorem and Newton's identity show that the space of polynomial coefficients is consistent with the space of sum-of-power functions, so the universal approximation theorem of MLP can be used to prove that such a mapping can be found.
\end{proof}

\GeneralCanonicalForm*
\begin{proof}
Due to the completeness of Deepset's encoding, \cref{algo:general_canonical_form} can naturally distinguish arbitrary geometric graphs. For a detailed time complexity analysis, see the comments in the algorithm block.
\end{proof}

\subsubsection{A faster method to construct canonical form}

In practical applications, it is impractical to traverse all ordered four-point sets because the complexity is too high. In order to reduce the complexity, a simple and intuitive way is to reduce the number of four-point sets. Here, a very clever change of mind is that we change the ordered four-point set from being selected from a geometric graph to being generated from a geometric graph. We call these generated ordered four-point sets virtual nodes. In order to ensure the correctness of the algorithm, we need to use a $\mathrm{E}(3)$-equivariant method to implement the generation process. We call this generating function $\zeta$, thereby simplifying \cref{algo:general_canonical_form} to \cref{algo:faster_canonical_form}.

\begin{algorithm}[htbp]
\label{algo:faster_canonical_form}
\caption{A faster method to construct canonical form.}
\KwData{A geometric graph $\gG$, and $\psi_{\text{node}},\psi_{\text{edge}}$ are DeepSet models.}
\KwResult{The canonical form $\Gamma\in\sR^{H}$ of point clouds $\gG$.}
\tcp{Get four non-coplanar reference points via generation.}
$\gmV\leftarrow\zeta(\gG)$\;
\For{$\gvu_i\in\gmX^{(\gG)}$}{
    \tcp{$\mathcal{O}(N)$, get a 4-channel scalar vector.}
    $\vd_i\leftarrow (\|\gvu_i-\gvv_1\|,\|\gvu_i-\gvv_2\|,\|\gvu_i-\gvv_3\|,\|\gvu_i-\gvv_4\|)$\;
}
\tcp{$\mathcal{O}(N+N^2)$, convert point sets and edge sets into scalar sets.}
$\sD\leftarrow\texttt{set}([\vd_i]_{i=1}^N),\, \mathbb E\leftarrow\texttt{set}([\vd_i,\vd_j,\ve_{ij}]_{\langle i,j\rangle\in\mathcal{E}})$\;
\tcp{$\mathcal{O}(1)$, decentralization of the four reference points.}
$\gmV\leftarrow(\mI_{4\times 4}-\frac{1}{4}\bm{1}_{4\times 4})\gmV$\;
\tcp{$\mathcal{O}(N+N^2)$, get the embedding based on current four points.}
$\Gamma\leftarrow\texttt{concat}(\gmV^\top\gmV,\psi_{\text{node}}(\sD),\psi_{\text{edge}}(\mathbb E))$\;
\Return{$\Gamma$}\;
\end{algorithm}
It is worth noting that \cref{algo:faster_canonical_form} is applicable only when the four virtual nodes generated by $\zeta$ are not coplanar. We summarize it as:
\FasterCanonicalForm*
\begin{proof}
Let $\Gamma$ be the function of \cref{algo:faster_canonical_form}, we only need to prove that $\gG\cong\gH\iff \Gamma(\gG)=\Gamma(\gH)$. We first prove sufficiency, let $\gG=\Gg\cdot\gH$ with $\Gg=(\mO,\gvt)\in\mathrm{E}(3)$ be a group element. Since $\zeta$ is an $\mathrm{E}(3)$-equivariant function, we have $\gmV^{(\gG)}=\zeta(\gG)=\zeta(\Gg\cdot\gH)=\mO\gmV^{(\gG)}+\gvt=\gmV^{(\gH)}$. After decentralization, we have $\mO\gmV^{(\gG)}=\gmV^{(\gH)}\implies(\gmV^{(\gG)})^\top\gmV^{(\gG)}=(\gmV^{(\gH)})^\top\gmV^{(\gH)}$. In addition, the encoding of nodes and edges depends on the relative distance to the virtual node and is therefore invariant, \emph{i.e.} $\sD^{(\gG)}=\sD^{(\gH)},\mathbb E^{(\gG)}=\mathbb E^{(\gH)}$. Thus, we get $\Gamma(\gG)=\Gamma(\gH)$. Next, we prove the necessity. Note that when we take the reference four points as $\zeta(\gG)$ and $\zeta(\gH)$, we can directly get $\gG\cong\gH$ in \cref{algo:geograph_isomorphic}, thus completing the proof.
\end{proof}

\begin{table}[H]
\caption{Comparison of time complexity.}
\label{tab:time_complexity}
\resizebox{\textwidth}{!}{
\begin{tabular}{
p{0.25\linewidth}<{\raggedright}
p{0.4\linewidth}<{\raggedright}
p{0.25\linewidth}<{\raggedright}
p{0.4\linewidth}<{\raggedright}
p{0.25\linewidth}<{\raggedright}
}
\toprule
& \multicolumn{2}{c}{\textbf{General Method (\cref{algo:general_canonical_form})}} & \multicolumn{2}{c}{\textbf{Faster Method}}\\
& \makecell[c]{Algorithm Details} & Time Complexity & \makecell[c]{Algorithm Details} & Time Complexity\\
\midrule
Possible chosen of VNs & Traverse all possible quadruplets ($\binom{N}{4}$ types). & $\mathcal{O}(N^4)$ & Generate directly. & $\mathcal{O}(1)$\\
Construct VNs & Use real nodes as virtual nodes. & $\mathcal{O}(1)$ & Aggregate by all nodes. & $\mathcal{O}(N)$\\
\midrule
Get distance vector & Calculate $N$ nodes to possible VNs. & $\mathcal{O}(N^4\cdot N)$ & Calculate $N$ nodes to possible VNs. & $\mathcal{O}(1\cdot N)$\\
Set of distance vector & Each chosen of VN prodive a set. & $\mathcal{O}(N^4)$ & Each chosen of VN prodive a set. & $\mathcal{O}(1)$\\
Embed set with Deepset & $N$ for node and $N^2$ for edge & $\mathcal{O}(N^4\cdot (N+N^2))$ & $N$ for node and $N^2$ for edge & $\mathcal{O}(1\cdot (N + N^2))$\\
\midrule
Total & \multicolumn{2}{c}{$\mathcal{O}(N^6)$} & \multicolumn{2}{c}{$\mathcal{O}(N^2)$}\\
\bottomrule
\end{tabular}
}
\vspace{-0.2cm}
\end{table}

\subsection{Construction of full-rank basis set}
In this section, we discuss the full-rank basis set, including its existence and construction method on asymmetric graphs.

\colorExist*
\begin{proof}
The theorem can be described in the following formal language, given an asymmetric point cloud $\gmX\in\sR^{3\times N}$, and all function here discussed are node-level scalar function. This theorem means
\begin{equation}\label{eq:asym_color_target}
    \exists f:\gmX\mapsto \vy\in\sN_+^{N},\forall i\neq j, \vy_i\neq \vy_j.
\end{equation}
First, we prove that it can be expressed as an equivalent proposition, which is
\begin{equation}\label{eq:asym_color_target_iff}
    \forall i\neq j,\exists f:\gmX\mapsto \vy\in\sN_+^{N}, \vy_i\neq\vy_j.
\end{equation}
According to the logical relationship, it is obvious that \cref{eq:asym_color_target} leads to \cref{eq:asym_color_target_iff}. We will prove below that \cref{eq:asym_color_target_iff} can also lead to \cref{eq:asym_color_target}. Let $f^{ij}:\gmX\mapsto \vy^{ij}$ denote a function let $\vy^{ij}_i\neq\vy^{ij}_j$, then consider map $g:\gmX\mapsto \mY\coloneqq\oplus_{i,j}\vy^{ij}$. We can find that $\mY_i\neq\mY_j$, and notice each $f^{ij}$ only uses finite colors, which means we can further assign a new color to each possible output of $g$, which is the mapping we want in \cref{eq:asym_color_target_iff}.

Then we prove it by contradiction \cref{eq:asym_color_target_iff}, assuming that the following proposition is true:
\begin{equation}\label{eq:asym_color_target_contradiction}
    \exists i\neq j, \forall f:\gmX\mapsto \vy\in\sN_+^{N}, \vy_i=\vy_j.
\end{equation}
We now construct a contradiction, hoping to prove that \cref{eq:asym_color_target_contradiction} will lead to point cloud symmetry. Let the permutation matrix $\mP\in S_N$ represent the exchange between node $i$ and node $j$, so we can get:
\begin{equation}\label{eq:asym_color_target_contradiction_func_eq}
    f(\mP\gmX)=\mP\vy=\vy=f(\gmX).
\end{equation}
Since $f$ is an $\mathrm{E}(3)$-invarinat function, we guess $\mP\gmX$ will fall in set $\mathrm{E}(3)\cdot \gmX$. And now we prove that $\mP\gmX$ must fall in such set. And the canonical form $\Gamma$ is obvious an $\mathrm{E}(3)$-invarinat function and could distinguish whether two point clouds are isomorphic, which means $\mP\gmX$ must be point cloud isomorphic with $\gmX$. And notice that $f$ is a node-level function, which means 
\begin{equation*}
    \gvx_i=\Gg\cdot\gvx_j, \gvx_j=\Gg\cdot\gvx_i.
\end{equation*}
And we have assumed there are no overlapping points, \emph{i.e.} $\gvx_i\neq\gvx_j$, so $\Gg\neq\Ge$. And it shows that such a point cloud is symmetric, which constitutes a contradiction and the original proposition is proved.
\end{proof}

\fullRankBasisSetExist*
\begin{proof}
We give the proof via a constructive method. According to \cref{theo:asym_color}, there exists a mapping let all node be colored differently, and without loss of generality, we denote the color of node $i$ as $c_i$. Now we construct function:
\begin{equation}
    f_k(\gmX)=\sum_{i=1}^{N}\delta_{k,c_i}\cdot \gvx_i=\gvx_k,
\end{equation}
It is obviously that such a funtion is $\mathrm{E}$(3)-equvariant, and we and further combine $f_k$ of different $k$, which can construct a full-rank $\sV^{(1)}$ since the input point cloud is non-coplanar.
\end{proof}
We just give an existence proof here, which is still affected by the structure of the model itself and the form of message passing. We give an example in \cref{eg:color_tp}.

\subsection{Comparison of different coloring methods}
Node coloring necessitates updating node features. Here, we propose two alternative solutions:
\begin{enumerate}
    \item \textbf{Distance to Center}: A simple preliminary coloring can be achieved by calculating the distance $\|\gvx_i - \gvx_c\|$ from each node to the coordinate center. While this method is straightforward, it lacks practical significance and may not perform well in real-world problems. Additionally, there are counterexamples, such as in \cref{eg:color_tp}, where nodes cannot be distinguished.
    
    \item \textbf{Tensor Product}: This method extends the previous approach. First, we construct a global feature $\sum_{i=1}^N Y^{(l)}(\gvx_i - \gvx_c)$, which is then used to create the coloring through moments. While this method is more complex and incurs higher computational costs, it enhances expressive power.
\end{enumerate}
\begin{table}[H]
\vspace{-0.3cm}
\caption{Different Color Methods. For the tensor product method, the set $\sL=\{0,\dots,L\}$ denotes all degrees and the learnable weights of tensor product are omitted.}
\label{tab:colormethod}
\resizebox{\textwidth}{!}{
\begin{tabular}{p{0.45\linewidth}<{\centering}p{0.45\linewidth}<{\centering}p{0.45\linewidth}<{\centering}}
\toprule
\textbf{Nothing ($\varnothing$)} & \textbf{Distance to Center ($\oplus$)} & \textbf{Tensor Product ($\otimes$)}\\
\midrule
\addlinespace[-0.1pt]
\rowcolor{gray!10}\multicolumn{3}{c}{Calculation Formula}\\
\addlinespace[-2pt]
\midrule
$\varnothing$
&
$\vh_i\leftarrow \vh_i+\sigma(\|\gvx_{ic}\|)$
& 
$\begin{aligned}
&\textstyle\svh_{\text{global}}^{(\sL)}\leftarrow \sum_{i=1}^{N}\sigma(\|\gvx_{ic}\|)\cdot Y^{(\sL)}\left(\gvx_{ic}\right)\\
&\textstyle\vh_i\leftarrow \vh_i+\svh_{\text{global}}^{(\sL)}\otimes_{\text{cg}}\cdots\otimes_{\text{cg}}Y^{(\sL)}\left(\gvx_{ic}\right)
\end{aligned}$
\\
\midrule
\addlinespace[-0.1pt]
\rowcolor{gray!10}\multicolumn{3}{c}{Scenarios to Avoid}\\
\addlinespace[-2pt]
\midrule
Sparse Graphs & Counterexample like \cref{fig:chiralcounterexample} & Not Founded\\
\bottomrule
\end{tabular}
}
\vspace{-0.2cm}
\end{table}

Below, we present two examples to illustrate potential issues without coloring.
\begin{example}[\cref{fig:chiralcounterexample}]\label{eg:color_tp}
We use the center pooling method to construct such an example. First, we ensure that the center of gravity of the image is at the origin, and secondly, make it asymmetric. The construction here requires a basic point cloud:
\begin{equation*}
    \gmX_0=\left\{(-1,0,0),\left(\frac{1}{3},\pm\frac{2\sqrt{2}}{3}, 0\right),\left(\frac{1}{6},\pm\frac{\sqrt{35}}{6},0\right)\right\}.
\end{equation*}
All nodes in $\gmX_0$ are in the unit circle, which means the distance between each node and the center of mass is the same. And the final point could is 
\begin{equation*}
    \gmX=\{\gmX_0\mR_i\},
\end{equation*}
where we sample some random rotation matrix $\mR_i$. And then it is obviously only using global pooling will lead all virtual node in the center of mass if we do not use any edge to color this nodes.

However, it could also be solve by high-degree steerable feature. The global steerable features can construct different virtual nodes with tensor product method in \cref{tab:colormethod}. 
\end{example}
\begin{example}[Degeneration in Uncolored Models]\label{eg:degeneration_uncolored}
There are specific scenarios in which message passing on uncolored geometric graphs can result in degenerate phenomena. A particularly illustrative example is found in geometric graphs where all node features are identical and all edges are  bidirectional. Considering a single-layer EGNN as an example, it may degenerate because the condition $\vh_i=\vh_j$ implies $\vm_{ij}=\vm_{ji}$. Consequently, we have $\textstyle\sum_{\langle i,j\rangle\in\mathcal{E}}\varphi_{\gvx}(\vm_{ij})\cdot \gvx_{ij}=\vec{\bm{0}}_3$. As a result, \cref{eq:egnn} can only yield the coordinate center $\gvx_c$. This phenomenon can be observed in the tetrahedron center experiment in \cref{sec:Experiment}, where besides EGNN, advanced models such as HEGNN~\citep{cen2024high}, TFN~\citep{thomas2018tensor}, and MACE~\citep{batatia2022mace} also degenerate.
\end{example}

\section{Experiment Details}
\label{sec:exp_details}
Our code is available at \url{https://github.com/GLAD-RUC/Uni-EGNN}.
\subsection{Expressivity Tests}
\subsubsection{Completeness Test}
\label{sec:detail_com_chiral}
In this section, we provide additional details that were omitted in \cref{sec:exp_expressivity}, focusing on the following points:
\begin{enumerate}
    \item The 4-body non-chiral counterexample implemented by the GWL-test (Fig. 2(f) in IASR-test) and the 4-body chiral counterexample (Fig. 2(e) in IASR-test) exhibit a significant issue: the geometric graphs presented by \cite{dym2021on} are geometrically isomorphic.
    \item Details regarding data construction and model design for the chirality test, which primarily include the coloring strategy and specifics of the construction of \texttt{0o}-type features.
\end{enumerate}
\textbf{Metric.}
Referring to the settings of the GWL-test~\citep{dym2021on}, we utilize the average accuracy across ten tests as our evaluation metric. The detailed experimental design is based on the implementation of the GWL-test~\citep{dym2021on}, available in the code repository\footnote{\url{https://github.com/chaitjo/geometric-gnn-dojo}.}. In this setup, a single GNN layer is employed to encode the geometric graph, followed by a simple classifier that predicts the label. If a GNN fails to distinguish between two geometric graphs, the output embeddings will be identical, resulting in one graph being classified correctly while the other is misclassified, leading to an accuracy of 50\%. Conversely, an accuracy rate exceeding 50\% indicates that the GNN can generate distinct embeddings and successfully differentiate between the two geometric graphs.

Additionally, the accuracy of the Basic model in \cref{tab:chiral} does not reach 100\% due to a limited number of training epochs for the classifier (set to 100). Increasing the number of epochs (e.g., to 200) can achieve 100\% classification accuracy. Notably, the EGNN model, with the same settings, also achieves 100\% accuracy. We speculate that EGNN improves the embedding of geometric graphs during the message passing process.

\textbf{Results of $k$-chain test. }
This experiment comes from GWL-test. The canonical form introduced by our model can easily obtain global information, so only one layer is needed to identify the geometric graphs differentiation task that other models require multiple layers.
\begin{table}[H]
\centering
\caption{$k$-chain Test.}
\label{tab:k-chain}
\begin{tabular}{lccccc}
\toprule
($k=\mathbf{4}$-chains) & \multicolumn{5}{c}{\textbf{Number of layers}} \\
\textbf{GNN Layer} & $\lfloor \frac{k}{2} \rfloor$ & \cellcolor{gray!10} $\lfloor \frac{k}{2} \rfloor + 1 = \mathbf{3}$ & $\lfloor \frac{k}{2} \rfloor + 2$ & $\lfloor \frac{k}{2} \rfloor + 3$ & $\lfloor \frac{k}{2} \rfloor + 4$ \\
\midrule
EGNN & \commonunable & \unable & \unable & \unable & \able \\
GVP-GNN & \commonunable & \able & \able & \able & \able \\
TFN & \commonunable & \unable & \unable & \able & \able\\
MACE & \commonunable & \cellcolor{green!10} \able & \able & \able & \able \\
\midrule
Basic$_{\text{cpl}}$ & \able & \able & \able & \able & \able \\
SchNet$_{\text{cpl}}$ & \able & \able & \able & \able & \able \\
EGNN$_{\text{cpl}}$ & \able & \able & \able & \able & \able \\
GVP-GNN$_{\text{cpl}}$ & \able & \able & \able & \able & \able \\
TFN$_{\text{cpl}}$ & \able & \able & \able & \able & \able \\
\bottomrule
\end{tabular}
\end{table}

\textbf{Geometrical isomorphism of 4-body counterexample. }
In the open source notebook provided by GWL-test~\citep{dym2021on}, four tasks are presented, the first two of which are the 2-body and 3-body tasks tested in \cref{tab:completeness}. The other two tasks are termed the \emph{4-body non-chiral counterexample} and the \emph{4-body chiral counterexample}, both originating from IASR-test~\cite{pozdnyakov2020incompleteness}. According to the experimental results provided in GWL-test, only MACE$_{\text{5-body}}$ successfully passed the 4-body non-chiral counterexample; moreover, the results for the 4-body chiral counterexample were not presented. 

We successfully reproduced and cited the experimental results of the 2-body and 3-body from GWL-test~\citep{dym2021on}. However, we were unable to reproduce the results reported in the article despite multiple testing attempts, consistently achieving an accuracy of only 50\%, indicating that MACE$_{\text{5-body}}$ failed this task. Furthermore, recent literature~\citep{battiloro2025etnn} also reported similar failures in the test. Upon thorough investigation, we discovered that the geometric graphs used in the two tests were geometrically isomorphic. We suspect that there may be a minor mistake in the GWL-test.

Specifically, the two graphs $\gG_1,\gG_2$ of the 4-body non-chiral counterexample are constructed as follows:
\begin{enumerate}
    \item Consider three sub-graphs: $\gH_1=\{(3, 2, -4), (0, 2, 5), (-3, 2, -4)\}$, $\gH_2=\{(3, -2, -4), (0, -2, 5), (-3, -2, -4)\}$, and $\gH_3=\{(0, 5, 0)\}$. $\mR_y$ is a random matrix for rotation around the $Oy$-axis. Let $\mM_x, \mM_y$ denote the reflection about $yOz$-plane and $zOx$-plane.
    \item $\gG_1=\gH_1\cup (\mR_y\cdot\gH_2)\cup \gH_3$ and $\gG_2=\gH_1\cup (\mR_y\cdot\gH_2)\cup (\mM_y\cdot\gH_3)$.
\end{enumerate}
Now we show $\gG_1\cong\gH_2$, notice there are serval relation:
\begin{itemize}
    \item Internal symmetry of geometric graph: $\gH_i=\mM_x\cdot \gH_i (i=1,2)$, $\gH_3=\mR_y\gH_3$;
    \item Symmetry between geometric graphs: $\gH_1=\mM_y\cdot \gH_2$;
    \item Properties between geometric transformations: $\mM_x^2=\mM_y^2=I$, $\mR_y\mM_x=\mM_x\mR_y$, $\mR_y\mM_y=\mM_y\mR_y^\top$.
\end{itemize}
Then we have:
\begin{equation*}
\begin{aligned}
    \gG_1=&\gH_1\cup (\mR_y\cdot\gH_2)\cup \gH_3
    =(\mM_y\cdot \gH_2)\cup (\mR_y\mM_y \cdot \gH_1)\cup(\mM_y \cdot \gH_3)\\
    =&(\mR_y\mM_y \cdot \gH_1)\cup(\mM_y\cdot \gH_2)\cup(\mM_y \cdot \gH_3)\\
    =&(\mM_y\mR_y^\top \cdot \gH_1)\cup(\mM_y\cdot \gH_2)\cup(\mM_y \cdot \gH_3)\\
    =&(\mM_y\mR_y^\top)\cdot[\gH_1\cup \mR_y\gH_2 \cup \mR_y\gH_3]\\
    =&(\mM_y\mR_y^\top)\cdot[\gH_1\cup \mR_y\gH_2 \cup \gH_3]\\
    =&(\mM_y\mR_y^\top \mM_x)\cdot[\mM_x\gH_1\cup \mM_x\mR_y\gH_2 \cup \mM_x\gH_3]\\
    =&(\mM_y\mR_y^\top \mM_x)\cdot[\gH_1\cup \mR_y\mM_x\gH_2 \cup \mM_x\gH_3]\\
    =&(\mM_y\mR_y^\top \mM_x)\cdot[\gH_1\cup \mR_y\gH_2 \cup \mM_x\gH_3]\\
    =&(\mM_y\mR_y^\top \mM_x)\cdot\gG_2\\
    =&(\mM_x\mR_y \mM_y)^\top\cdot\gG_2.
\end{aligned}
\end{equation*}

Moreover, the 4-body chiral counterexample also contains two geometric graphs $\gG_3,\gG_4$, where $\gG_3\cong\gG_4$:
\begin{enumerate}
    \item Consider two sub-graphs: $\gH_4=\{(3, 0, -4), (0, 0, 5), (-3, 0, -4)\}$ and $\gH_5=\{(0, 5, 0)\}$. $\mM_x, \mM_y$ denote the reflection about $yOz$-plane and $zOx$-plane.
    \item $\gG_3=\gH_4\cup \gH_5$ and $\gG_4=\gH_4\cup (\mM_y \gH_5)$.
\end{enumerate}
Now we show $\gG_3\cong\gH_3$, notice there are serval relation:
\begin{itemize}
    \item Internal symmetry of geometric graph: $\gH_3=\mM_x\cdot \gH_3 $, $\gH_3=\mM_y\gH_3$ and $\gH_4=\mM_x\gH_4$;
    \item Properties between geometric transformations: $\mM_x^2=\mM_y^2=I$.
\end{itemize}
Then we have:
\begin{equation*}
    \gG_3=\gH_4\cup \gH_5=(M_xM_y)\cdot[\gH_4\cup \mM_y\gH_5]=(M_xM_y)\cdot\gG_4.
\end{equation*}

\subsubsection{Chirality Test}
\textbf{Details of the chirality test. }
\begin{enumerate}
    \item \cref{fig:inversion}: Choose $\gG_1$ from the 4-body non-chiral counterexample, and another geometric graph is $-\gG_1$.
    \item \cref{fig:mirror}: Since $\gG_3,\gG_4$ are symmetric, we modify it to $\gG_5=\{(3, 0, -4), \mR_y\cdot(0, 0, 5), (-3, 0, -4), (0, 5, 0)\}$, where $\mR_y$ is a random matirx for rotation around the $Oy$-axis.
    \item \cref{fig:chiralcounterexample}: Similar to \cref{eg:color_tp}.
\end{enumerate}

\begin{figure}[htbp]
\vspace{-0.2cm}
\subfigure[Inversion]{\label{fig:inversion}\includegraphics[width=0.32\linewidth]{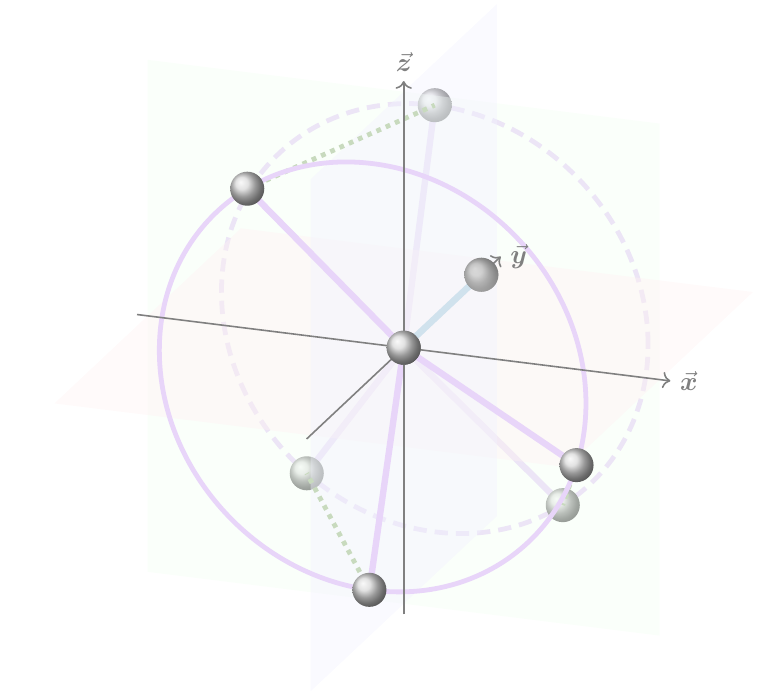}}
\subfigure[Mirror]{\label{fig:mirror}\includegraphics[width=0.32\linewidth]{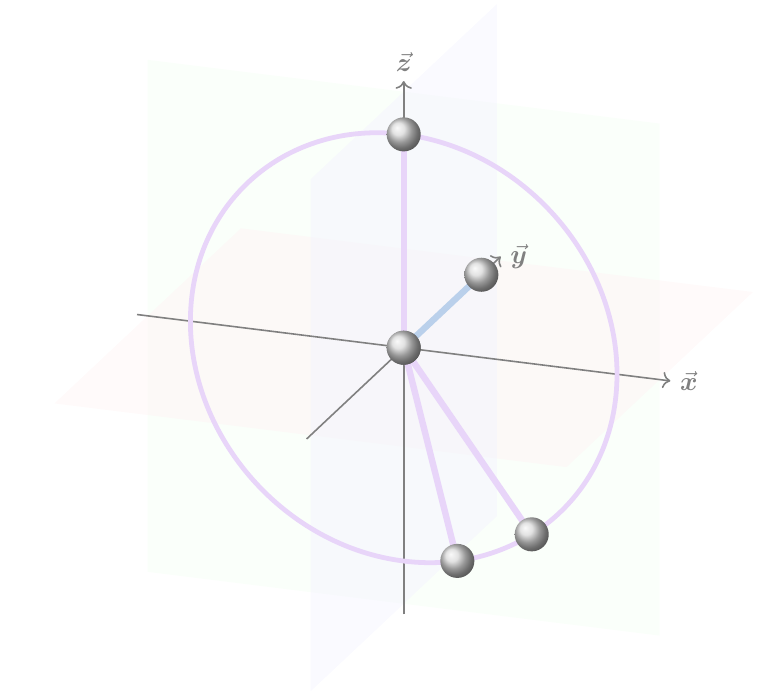}}
\subfigure[Counterexample]{\label{fig:chiralcounterexample}\includegraphics[width=0.32\linewidth]{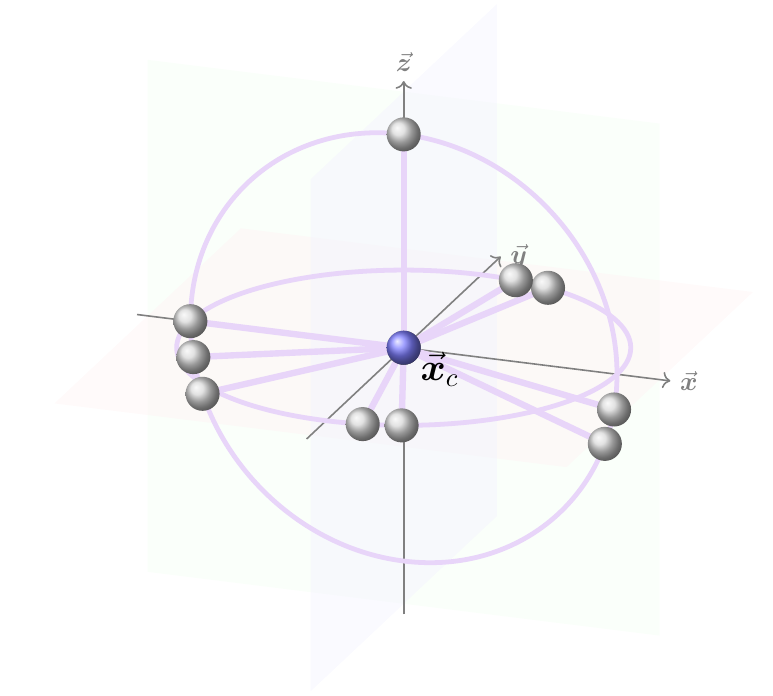}}
\caption{Examples to test whether the model can recognize chirality. \cref{fig:inversion,fig:mirror} are modified from GWL-test's implementation~\cite{joshi2023expressive} of IASR-test~\cite{pozdnyakov2020incompleteness}, while \cref{fig:chiralcounterexample} is designed by ourselves. \cref{fig:inversion}: Test whether models could distinguish this geometric graph with itself after \emph{inversion}. \cref{fig:mirror}: Test whether models could distinguish this geometric graph with itself after \emph{mirror} about the $xOz$-plane. \cref{fig:chiralcounterexample}: The task is same as \cref{fig:inversion}, while all nodes are on the unit sphere, and the center point coincides with the center of the sphere. Such counterexample cannot produce separable virtual nodes by simple center distance coloring.}
\label{fig:chiral}
\end{figure}

\textbf{Details of construction of \texttt{0o}-type features. }
There are two ways to construct \texttt{0o}-type features:
\begin{enumerate}
    \item \textbf{Determinant}: Calculate $\det([\gvv_2-\gvv_1,\gvv_3-\gvv_1,\gvv_4-\gvv_1])$;
    \item \textbf{Tensor Product}: We still begin by constructing $\sum_{i=1}^N Y^{(l)}(\gvx_i - \gvx_c)$, and employing moments, specifically at least 3rd-order moments, to derive \texttt{0o}-type features. This approach clearly serves as an extension of the previously established method.
\end{enumerate}

\subsection{Performance Tests}
We conducted four experiments: 
\begin{enumerate}
    \item \textbf{Tetrahedron center prediction}, which entails predicting graph-level targets, \emph{i.e.} Monge points, twelve-point centers, and incenters, given the coordinates of four points (with the same node features) of a given tetrahedron.
    \item \textbf{$5$-body system}~\citep{kipf2018neural} involves predicting node-level targets, \emph{i.e.} the coordinates of each charged particle at a specific moment in the future, given its initial coordinates and velocity.
    \item \textbf{$100$-body system}~\citep{zhang2024improving}, a generalization of the $5$ body system, is used to test the performance of the model in large-scale systems.
    \item \textbf{MD17 dataset}~\citep{huang2022equivariant}, containing eight molecules, is used to predict node-level targets, \emph{i.e.} the future coordinates of each atom in the molecular trajectory. This dataset can be used to test the performance of the model on real datasets.
    \item \textbf{Water-3D mini dataset}~\citep{zhang2024improving,zhang2025fast}, is used to test the performance of the model in large-scale systems and real datasets. Due to time constraints, we only evaluate a substantial subset (Water-3D-mini: 3,000/300/300 for train/val/test).
\end{enumerate}
\begin{table}[H]
\caption{Details of each dataset.}
\label{tab:dataset_details}
\resizebox{\textwidth}{!}{
\begin{tabular}{
p{0.15\linewidth}<{\raggedright}
p{0.15\linewidth}<{\raggedright}
p{0.20\linewidth}<{\raggedright}
p{0.30\linewidth}<{\centering}
p{0.20\linewidth}<{\centering}
p{0.30\linewidth}<{\centering}
}
\toprule
\textbf{Dataset} & \textbf{Target} & \textbf{Type} & \textbf{\#sample (train / valid / test)} & \textbf{Node Number} & \textbf{Edge connection}\\
\midrule
Tetrahedron & graph-level & Synthetic Dataset & 500 / 2,000 / 2,000 & 4 & \texttt{fully\_connected}\\
$5$-body & node-level & Synthetic Dataset & 5,000 / 2,000 / 2,000 & 5 & \texttt{fully\_connected}\\
$100$-body & node-level & Synthetic Dataset & 5,000 / 2,000 / 2,000 & 100 & \texttt{fully\_connected}\\
MD17 & node-level & Real-world Dataset & 500 / 2,000 / 2,000 & 5-13 & \texttt{distance\_cutoff}\\
Water-3D mini & node-level & Real-world Dataset & 3,000 / 300 / 300 & $\sim$8,000 & \texttt{distance\_cutoff}\\
\bottomrule
\end{tabular}
}
\vspace{-0.2cm}
\end{table}

\subsubsection{Tetrahedron Center Prediction and N-body System}
We conducted experiments on a single NVIDIA H20 GPU. In both experiments, we configured the batch size to 100, the learning rate to $5 \times 10^{-4}$, and the weight decay to $10^{-12}$. For the tetrahedron center prediction task, the maximum number of training epochs was set to 300. In the case of the $N$-body system task, we implemented early stopping with a patience of 100 steps.

\textbf{Metric. } 
We use Mean Squared Error (MSE) to measure the accuracy of the prediction results in both experiments.

\textbf{Tetrahedron Center Prediction. }
We generated our dataset using the calculation formulas for the Monge point $\gvx_{M}$, twelve-point center $\gvx_{T}$ and incenter $\gvx_{I}$ from \cite{court1934notes}. Given a tetrahedron defined by four points $\gvx_O,\gvx_A,\gvx_B,\gvx_C$, with the vectors $\gva=\gvx_A-\gvx_O,\gvb=\gvx_B-\gvx_O,\gvc=\gvx_C-\gvx_O$, we can compute the desired points as follows:
\begin{equation}
\begin{aligned}
    &\gvx_{M}=\gvx_O+\frac{\gva\cdot(\gvb+\gvc)(\gvb\times \gvc)+\gvb\cdot(\gvc+\gva)(\gvc\times \gva)+\gvc\cdot(\gva+\gvb)(\gva\times \gvb)}{2\gva\cdot(\gvb\times \gvc)},\\
    &\gvx_{T}=\gvx_M+\frac{1}{3}(\gvx_O-\gvx_M),\\
    &\gvx_{I}=\frac{\|\gvb\times \gvc\|\gva+\|\gvc\times \gva\|\gvb+\|\gva\times \gvb\|\gvc}{\|\gvb\times \gvc\|+\|\gvc\times \gva\|+\|\gva\times \gvb\|+\|\gvb\times \gvc+\gvc\times \gva+\gva\times \gvb\|}.\\
\end{aligned}
\end{equation}
We utilize $500$/$2000$/$2000$ samples for training, validation, and testing, respectively. $\epsilon=10^{-6}$ was added to the denominator to avoid division by zero. In addition, to avoid numerical problems caused by the center point being extremely large, we generated the circumscribed sphere of the tetrahedron when generating the tetrahedron, ensuring that the radius of the sphere is less than or equal to $6$.

\textbf{$N$-body system. }
$N$-body system~\citep{kipf2018neural} is a dataset generated from simulations. In our simulations, each system consists of $5$ charged particles with random charges $c_i \in \{\pm 1\}$, whose movements are governed by Coulomb forces. The task is to estimate, a node-level target, \emph{i.e.} the coordinates of the $N$ particles after $1000$ timesteps. The initial node feature is set as the kinetic energy, while the edge features include the distance and the product of the charges. We use $5000$/$2000$/$2000$ samples for training, validation, and testing like the setting in HEGNN~\citep{cen2024high}. Since the particles have different initial velocities, they initially have different characteristics (coloring).

\subsubsection{Implementation of Baselines}
\label{sec:implementation_of_baselines}
We undertook further development based on the codes of EGNN~\citep{satorras2021en}, TFN~\cite{thomas2018tensor}, and MACE~\citep{batatia2022mace} provided in the GWL-test repository. Following our testing, we eliminated the \texttt{Gate} modules from both TFN and MACE, as their inclusion resulted in significantly degraded performance. Additionally, for HEGNN~\citep{cen2024high}, we utilized the code\footnote{\url{https://github.com/GLAD-RUC/HEGNN/}.} from the original paper and opted for the \texttt{inner product normalization} option.

Each model utilizes at most 4 layers, and the dimension of the hidden embedding is fixed at 64. The last three models employ at most 2nd-degree steerable features. Both TFN and MACE utilize 8 channels for steerable features, with the correlation parameter for MACE set to 4.

\subsubsection{Implementation of Our Models}
\label{sec:our_model_implementation}
We provide two complete implementations of EGNN~\citep{satorras2021en} and TFN~\citep{thomas2018tensor} as follows. 

Unlike FastEGNN \citep{zhang2024improving}, the virtual nodes in this approach are not updated following their initialization during the message-passing layers. We define the normalization coefficient as $\alpha_i \coloneqq 1/{|\mathcal{N}(i)|}$. For the $i$th node, we denote its features as $\vh_i$ and its coordinates as $\gvx_i$. Here, $C$ represents the channels of the virtual nodes, and $\gmZ\in\sR^{C\times 3}$ denotes all coordinates. Specifically, for TFN$_{\text{cpl}}$-global, we employ the canonical form to build a new feature $\vw_i$ for each node, thereby rescaling the norm of steerable features in a manner analogous to HEGNN~\citep{cen2024high}, which is consistent with \cref{theo:scalar_product_basis}.

\begin{table}[H]
\vspace{-0.3cm}
\caption{Complete global models. }
\label{tab:globalmodel}
\resizebox{\textwidth}{!}{
\begin{tabular}{p{0.25\linewidth}<{\raggedright}p{0.6\linewidth}<{\raggedright}p{0.6\linewidth}<{\raggedright}}
\toprule
& \multicolumn{1}{c}{\textbf{EGNN$_{\text{cpl}}$-global}}  & \multicolumn{1}{c}{\textbf{TFN$_{\text{cpl}}$-global}}\\
\midrule
Node Coloring & $\vh_i=\texttt{Color}(\vh_i,\texttt{distance\_to\_center})$ & $\svh_i^{(0)}=\texttt{Color}(\svh_i^{(0)},\texttt{distance\_to\_center})$\\
Generate VNs & $\gmZ=\bm{1}_{C}\gvx_c+\alpha_i\sum_{i=1}^N\varphi_{\mZ}(\vh_i)\cdot\gvx_{ic}$ & $\gmZ=\bm{1}_{C}\gvx_c+\alpha_i\sum_{i=1}^N\varphi_{\mZ}(\svh_i^{(0)})\cdot\gvx_{ic}$\\
Create Canonical Form & $\gmM_i=\bm{1}_{C}\gvx_i-\gmZ,\ \vm_{i,\text{cpl}}=\gmM_i^\top\gmM_i/\|\gmM_i^\top\gmM_i\|_{\text{F}}$ & $\gmM_i=\bm{1}_{C}\gvx_i-\gmZ, \vm_{i,\text{cpl}}=\gmM_i^\top\gmM_i/\|\gmM_i^\top\gmM_i\|_{\text{F}}$\\
Update Node Feature & $\vh_i=\vh_i+\varphi_{\text{cpl}}(\vm_{i,\text{cpl}}), \vh_i=\texttt{Message\_Passing}(\vh_i,\gG)$ & $\vw_i=\svh_i^{(0)}+\varphi_{\text{cpl}}(\vm_{i,\text{cpl}})$\\
\midrule
Message Passing ($\times L$) & $\begin{aligned}
    & \vm_{ij}=\varphi_{\vm}(\vh_i,\vh_j, \ve_{ij}, \|\gvx_{ij}\|^{2}),\ \gvm_{ij}=\varphi_{\vec\vx}(\vm_{ij})\cdot\gvx_{ij}\\
    & \vm_{i}=\alpha_i\textstyle\sum_{j\in\mathcal{N}(i)}\vm_{ij},\ \gvm_{i}=\alpha_i\textstyle\sum_{j\in\mathcal{N}(i)}\gvm_{ij}\\
    & \vh_i=\vh_i+\varphi_{\vh}(\vh_i,\vm_i),\ \gvx_i=\gvx_i+\gvm_{i}
\end{aligned}$ & $\begin{aligned}
    & \textstyle\svm_{ij}^{(\sL)}=\svh_i^{(\sL)}\otimes_{\text{cg}}Y^{(\sL)}(\gvx_{ij})\\
    & \textstyle\svh_i^{(\sL)}=\svh_i^{(\sL)}+\alpha_i\sum_{j\in\mathcal{N}(i)}\svm_{ij}^{(\sL)}\\
    & \textstyle\svh_i^{(l)}=\varphi_{\text{cpl}}(\svh_i^{(0)}, \vw_i)\cdot\svh_i^{(l)},\ \vw_i=\vw_i+\varphi_{\vw}(\vw_i,\svh_i^{(0)})
\end{aligned}$\\
\midrule
Readout (node-level) & $\vh_i=\varphi_{\text{out}}(\vh_i),\ \gvx_i=\gvx_i$ & $\vh_i=\varphi_{\text{out}}(\svh_i^{(0)}),\ \gvx_i=\gvx_i+\texttt{e3nn.o3.Linear}(\svh_i^{(1)})$\\
Readout (graph-level) & $\vh_{\gG}=\varphi_{\text{pool}}(\frac{1}{N}\sum_{i=1}^N\vh_i),\ \gvx_{\gG}=\frac{1}{N}\sum_{i=1}^N\gvx_i$ & $\vh_{\gG}=\varphi_{\text{pool}}(\frac{1}{N}\sum_{i=1}^N\vh_i),\ \gvx_{\gG}=\frac{1}{N}\sum_{i=1}^N\gvx_i$\\
\bottomrule
\end{tabular}
}
\vspace{-0.2cm}
\end{table}

 To enhance the model's performance, we permit the virtual nodes to be updated layer by layer, similar to the approach taken in FastEGNN~\citep{zhang2024improving}. We denote the normalization coefficient $\alpha_i\coloneqq {1}/{|\mathcal{N}(i)|}$. We define the normalization coefficient as $\alpha_i \coloneqq 1/{|\mathcal{N}(i)|}$. Notably, we allocate $C$ virtual nodes to each node, denoted as $\gmZ_i\in\sR^{C\times 3}$, and these $C$ virtual nodes share a scalar feature represented by $\vs_i$.
 
\begin{table}[H]
\vspace{-0.3cm}
\caption{Complete local models.}
\label{tab:localmodel}
\resizebox{\textwidth}{!}{
\begin{tabular}{p{0.25\linewidth}<{\raggedright}p{0.6\linewidth}<{\raggedright}p{0.6\linewidth}<{\raggedright}}
\toprule
& \multicolumn{1}{c}{\textbf{EGNN$_{\text{cpl}}$-local}}  & \multicolumn{1}{c}{\textbf{TFN$_{\text{cpl}}$-local}}\\
\midrule
Generate VNs & $\begin{aligned}
    & \textstyle\vm_{ij}=\varphi_{\vm}(\vh_i,\vh_j, \ve_{ij}, \|\gvx_{ij}\|^{2})\\
    & \textstyle\vs_i=\varphi_{\vs}(\varphi_{\text{init}}(\vh_i),\vm_{ij})\\ & \textstyle\gmZ_i=\bm{1}_{C}\gvx_i+\alpha_i\sum_{j\in\mathcal{N}(i)}^N\varphi_{\mZ}(\vh_i)\cdot\gvx_{ij}
\end{aligned}$ & $\begin{aligned}
    & \textstyle\vm_{ij}=\varphi_{\vm}(\svh_i^{(0)},\svh_j^{(0)}, \ve_{ij}, \|\gvx_{ij}\|^{2})\\
    & \textstyle\vs_i=\varphi_{\vs}(\varphi_{\text{init}}(\svh_i^{(0)}),\vm_{ij})\\ & \textstyle\gmZ_i=\bm{1}_{C}\gvx_i+\alpha_i\sum_{j\in\mathcal{N}(i)}^N\varphi_{\mZ}(\vh_i)\cdot\gvx_{ij}
\end{aligned}$\\
\midrule
\multirow{9}{*}{\makecell[l]{ Message Passing ($\times L$)\\[0.25cm]\quad (real \& virtual)}} & $\begin{aligned}
    & \vm_{ij}=\varphi_{\vm}(\vh_i,\vh_j, \ve_{ij}, \|\gvx_{ij}\|^{2}),\ \gvm_{ij}=\varphi_{\vec\vx}(\vm_{ij})\cdot\gvx_{ij}\\
    & \vm_{i}=\alpha_i\textstyle\sum_{j\in\mathcal{N}(i)}\vm_{ij},\ \gvm_{i}=\alpha_i\textstyle\sum_{j\in\mathcal{N}(i)}\gvm_{ij}\\
    & \vh_i=\vh_i+\varphi_{\vh}(\vh_i,\vm_i),\ \gvx_i=\gvx_i+\gvm_{i}
\end{aligned}$ & $\begin{aligned}
    & \textstyle\svm_{ij}^{(\sL)}=\svh_i^{(\sL)}\otimes_{\text{cg}}Y^{(\sL)}(\gvx_{ij})\\
    & \textstyle\svh_i^{(\sL)}=\svh_i^{(\sL)}+\alpha_i\sum_{j\in\mathcal{N}(i)}\svm_{ij}^{(\sL)}\\
    & \textstyle\svh_i^{(l)}=\varphi_{\text{cpl}}(\svm_i^{(0)})\cdot\svh_i^{(l)}
\end{aligned}$\\\cmidrule(lr){2-3}
& $\begin{aligned}
    & \textstyle\gmM_{ij}=\bm{1}_{C}\gvx_j-\gmZ_i, \vm_{ij,\text{cpl}}=\gmM_{ij}^\top\gmM_{ij}/\|\gmM_{ij}^\top\gmM_{ij}\|_{\text{F}}\\
    & \textstyle \vm_{ij,\text{cpl}}=\varphi_{\vm,\text{cpl}}(\vs_i,\vh_j,\vm_{ij,\text{cpl}})\\
    & \textstyle\vh_i=\vh_i+\varphi_{\vh,\text{cpl}}(\vm_{ij,\text{cpl}}),\ \vs_i=\vs_i+\varphi_{\vs,\text{cpl}}(\vm_{ij,\text{cpl}})\\
    & \textstyle\gvx_i=\gvx_i+\alpha_i\sum_{j\in\mathcal{N}(i)}\varphi_{\gvx}^{(v)}(\vm_{ij,\text{cpl}})\cdot \gmM_{ij}\\
    & \textstyle\gmZ_i=\gmZ_i+\alpha_i\sum_{j\in\mathcal{N}(i)}\varphi_{\gmZ}^{(v)}(\vm_{ij,\text{cpl}})\cdot (-\gmM_{ij})\\
\end{aligned}$ & $\begin{aligned}
    & \textstyle\gmM_{ij}=\bm{1}_{C}\gvx_j-\gmZ_i, \vm_{ij,\text{cpl}}=\gmM_{ij}^\top\gmM_{ij}/\|\gmM_{ij}^\top\gmM_{ij}\|_{\text{F}}\\
    & \textstyle \vm_{ij,\text{cpl}}=\varphi_{\vm,\text{cpl}}(\vs_i,\svh_j^{(0)},\vm_{ij,\text{cpl}})\\
    & \textstyle\svh_i^{(0)}=\svh_i^{(0)}+\varphi_{\svh^{(0)},\text{cpl}}(\vm_{ij,\text{cpl}}),\ \vs_i=\vs_i+\varphi_{\vs,\text{cpl}}(\vm_{ij,\text{cpl}})\\
    & \textstyle\gvx_i=\gvx_i+\alpha_i\sum_{j\in\mathcal{N}(i)}\varphi_{\gvx}^{(v)}(\vm_{ij,\text{cpl}})\cdot \gmM_{ij}\\
    & \textstyle\gmZ_i=\gmZ_i+\alpha_i\sum_{j\in\mathcal{N}(i)}\varphi_{\gmZ}^{(v)}(\vm_{ij,\text{cpl}})\cdot (-\gmM_{ij})\\
\end{aligned}$\\
\midrule
Readout (node-level) & $\vh_i=\varphi_{\text{out}}(\vh_i),\ \gvx_i=\gvx_i$ & $\vh_i=\varphi_{\text{out}}(\svh_i^{(0)}),\ \gvx_i=\gvx_i+\texttt{e3nn.o3.Linear}(\svh_i^{(1)})$\\
Readout (graph-level) & $\vh_{\gG}=\varphi_{\text{pool}}(\frac{1}{N}\sum_{i=1}^N\vh_i),\ \gvx_{\gG}=\frac{1}{N}\sum_{i=1}^N\gvx_i$ & $\vh_{\gG}=\varphi_{\text{pool}}(\frac{1}{N}\sum_{i=1}^N\vh_i),\ \gvx_{\gG}=\frac{1}{N}\sum_{i=1}^N\gvx_i$\\
\bottomrule
\end{tabular}
}
\vspace{-0.2cm}
\end{table}

\textbf{Parameter Quantities. }
Here, we present the number of parameters for each model in the tasks shown in \cref{sec:exp_performance}. Since the inputs for the two tasks differ, the number of parameters for models with the same number of layers also varies.
\input{Table/Parameter}

\section{Rethinking the Equivariance and General Functions}
\label{sec:rethinking_equivariance}
In this section, we give a Fourier series-like approach to understanding equivariant GNNs.
\subsection{Rethinking equivariance: A perspective from Fourier expansion.}
\label{sec:general_expansion}
According to \cite{stein2011fourier}, the Fourier expansion of a multivariate function necessitates the simultaneous consideration of the basis functions associated with all variables, as demonstrated below:
\begin{equation}
    f(t)=\sum_{n\in\sZ} f_n\exp(-2\pi j n t)
    \longrightarrow
    f(\vt)=\sum_{\vn\in\sZ^{d}} f_{\vn}\exp(-2\pi j \vn^\top\vt)=f(\vt)=\sum_{\vn\in\sZ^{d}} f_{\vn}\prod_{k=1}^d\exp(-2\pi j  n_k t_k),
\end{equation}
represents a multi-dimensional variable. In fact, any function defined on the unit sphere can be expressed as a summation of spherical harmonics. Let $\hvx$ denote a point on the unit sphere.
\begin{equation}
    f(\hvx)
    =\sum_{l=0}^{\infty}\sum_{m=-l}^lf_m^{(l)}Y_m^{(l)}(\hvx)
    =\sum_{l=0}^{\infty}\langle\va^{(l)},Y^{(l)}(\hvx)\rangle,
\end{equation}
This formulation offers two representations. The fully expanded form aligns more closely with the Fourier series, while the inner product formulation facilitates a connection to equivariance. At this juncture, we introduce a second point $\hvy$ on the unit sphere and consider the expansion of the function with two inputs, $\hvx$ and $\hvy$.
\begin{equation}
    f(\hvx,\hvy)=\sum_{l_1=0}^\infty\sum_{l_2=0}^\infty\sum_{m_1=-l_1}^{l_1}\sum_{m_2=-l_2}^{l_2}f_{m_1,m_2}^{(l_1,l_2)}Y_{m_1}^{(l_1)}(\hvx)Y_{m_2}^{(l_2)}(\hvy).
\end{equation}
We observe that the product $Y_{m_1}^{(l_1)}(\hvx)Y_{m_2}^{(l_2)}(\hvy)$ constructs $(2l_1+1)\cdot(2l_2+1)$ basis functions, \emph{i.e.} elements in $Y^{(l_1)}(\hvx)\otimes Y^{(l_2)}(\hvy)\in\sR^{(2l_1+1)\cdot(2l_2+1)}$. Let $\vf^{(l_1\otimes l_2)}\in\sR^{(2l_1+1)\cdot(2l_2+1)}$ be the coefficients. Consequently, the expansion can also be expressed in an inner product form as follows:
\begin{equation}
    f(\hvx,\hvy)=\sum_{l_1=0}^\infty\sum_{l_2=0}^\infty\langle \vf^{(l_1\otimes l_2)}, Y^{(l_1)}(\hvx)\otimes Y^{(l_2)}(\hvy)\rangle.
\end{equation}
According to the notation of tensor product, we denote the $l$th-degree steerable feature derived from the tensor product $Y^{(l_1)}(\hvx)\otimes Y^{(l_2)}(\hvy)$ as $\svb^{(l_1\otimes l_2\to l)}(\hvx,\hvy)$, which can be expressed as follows:
\begin{equation}
    \bigoplus_{l=|l_1-l_2|}^{l_1+l_2}\svb^{(l_1\otimes l_2\to l)}(\hvx,\hvy)=\mQ^{(l_1\otimes l_2)}\cdot \left(Y^{(l_1)}(\hvx)\otimes Y^{(l_2)}(\hvy)\right).
\end{equation}
Let us define $\textstyle\bigoplus_{l=|l_1-l_2|}^{l_1+l_2}\vw^{(l_1\otimes l_2\to l)}=(\mQ^{(l_1\otimes l_2)})^\top \vf^{(l_1\otimes l_2)}$. This leads us to the following expression:
\begin{equation}\label{eq:two_dim_expansion}
    f(\hvx,\hvy)=\sum_{l=0}^{\infty}\sum_{|l_1-l_2|\leq l\leq l_1+l_2}\langle\vw^{(l_1\otimes l_2\to l)},\svb^{(l_1\otimes l_2\to l)}(\hvx,\hvy)\rangle,
\end{equation}
where a $l$th-degree steerable function $\svt^{(l)}(\hvx,\hvy)$  incorporates only the $l$th-degree components in \cref{eq:two_dim_expansion}, and can be expressed as follows:
\begin{equation}\label{eq:two_dim_steerable_expansion}
    \svt^{(l)}(\hvx,\hvy)=\sum_{|l_1-l_2|\leq l\leq l_1+l_2}w^{(l_1\otimes l_2\to l)}\cdot\svb^{(l_1\otimes l_2\to l)}(\hvx,\hvy),
\end{equation}
\cref{eq:two_dim_steerable_expansion} offers a more concrete understanding of the framework. It is important to note, however, the formula in \cref{eq:basis_set_sh} is invariant to permutations of the inputs, while the inputs in \cref{eq:two_dim_steerable_expansion} are presented with a specific order. Nevertheless, this specificity does not preclude us from conducting an analysis based on this formulation:
\begin{enumerate}
    \item Since $l_1$ and $l_2$ can take any values, there will be infinite basis functions in $\sB^{(l)}$ when the geometric graph accommodates interactions involving more than two bodies.
    \item Not all bases in \cref{eq:two_dim_steerable_expansion} will be meaningful; that is, some bases may not emerge in the objective function, although they satisfy the equivariance constraints.
\end{enumerate}
From this perspective, the advantages and disadvantages of the equivariant model become evident. By incorporating an equivariance prior into the model design, it effectively eliminates components that cannot exist within the framework. However, this constraint necessitates a limited selection of operators in the model design. In the current setup, only the tensor product exhibits strict equivariance (while others can be transformed and expressed using tensor products, \emph{e.g.} tensor decomposition in frame averaging~\cite{puny2022frame,duval2023faenet}), whereas other operators, such as group convolution, require approximations under current floating-point operation systems and can achieve only partial equivariance. This limitation, in turn, constrains the model's capacity for learning.

\subsection{Rethinking General Functions: the Equivariant Decomposition}
Generally, the input of a geometric graph could be denoted as $\{\gvx_i\}_{i=1}^N$, without loss of generality, we assume that these coordinates are decentralized. Although in practice, people generally use edges as input (\emph{i.e.} $\{\gvx_i-\gvx_j\}_{i,j=1}^N$), for the sake of convenience we still discuss the vectors of points here, and it is easy to verify that they are equivalent. Then a normal function $\vf:\sR^{3\times N}\to\sR^{2l+1}$ we want in a $l$-degree problem could be rewritten as:
\begin{equation}
    \vf^{(l)}(\gG)=\sum_{\svb_{\alpha}^{(l)}(\gG)\in \sB^{(l)}}w_{\alpha}\svb_{\alpha}^{(l)}(\gG)+\vf_{\text{else}}^{(l)}.
\end{equation}
To completely give the expansion formula of such $\vf^{(l)}$, we need to analyze each dimension separately. And we denote the $m$th-dimension ($-l\leq m\leq l$) of $\vf^{(l)}$ as $f^{(l,m)}$ (similarly, $b^{(l,m)}$ represents the $m$th dimension in $\svb^{(l)}$), which is:
\begin{equation}
    f^{(l,m)}(\gG)=\sum_{\svb_{\alpha}^{(l)}(\gG)\in \sB^{(l)}}w_{\alpha}^{(l)}\svb_{\alpha}^{(l,m)}(\gG)+\sum_{s=0}^{\infty}\sum_{\svb_{\beta}^{(s)}(\gG)\in \sB^{(s)}}\sum_{t=-s}^{s}w_{\beta}^{(s,t)}\svb_{\beta}^{(s,t)}(\gG).
\end{equation}
And it could find that he $m$th-dimension $f_{\text{else}}^{(l,m)}$ of $\vf_{\text{else}}^{(l)}$ could be donated as:
\begin{equation}
    f_{\text{else}}^{(l,m)}(\gG)=\sum_{s=0}^{\infty}\sum_{\svb_{\beta}^{(s)}(\gG)\in \sB^{(s)}}\sum_{t=-s}^{s}w_{\beta}^{(s,t)}b_{\beta}^{(s,t)}(\gG).
\end{equation}
Here $\alpha$ and $\beta$ are dummy index for enumeration and both $w_{\alpha}^{(l)}$ and $w_{\beta}^{(l,m)}$ are \emph{constant} (or only the function about all radials $\{\|\gvx_i\|\}_{i=1}^N$). Notice that the same basis may appear in both two items in the equation, to ensure uniqueness, we set  
\begin{equation}
    w_{\alpha}^{(l)}=\operatornamewithlimits{argmin}_{w_{\alpha}^{(l)}}\|\vf_{\text{else}}^{(l)}(\gG)\|^2=\operatornamewithlimits{argmin}_{w_{\alpha}^{(l)}}\sum_{\svb_{\beta}^{(t)}(\gG)\in \sB^{(t)}}\sum_{s=-t}^{t}(w_{\beta}^{(s,t)})^2.
\end{equation}
And with definition of equivariance error in Spherical CNNs~\citep{cohen2018spherical}, we could find that
\begin{equation}
\begin{aligned}
    \gL_{\text{equ}}\coloneqq&\mathbb{E}\left[\left\|\rho^{(l)}(\Gg)\cdot \vf^{(l)}(\gG)-\vf^{(l)}(\Gg\cdot\gG)\right\|\right]\\
    =&\mathbb{E}\left[\left\|\rho^{(l)}(\Gg)\cdot \vf_{\text{else}}^{(l)}(\gG)-\vf_{\text{else}}^{(l)}(\Gg\cdot\gG)\right\|\right]\\
    \leq& \mathbb{E}\left[\left\|\rho^{(l)}(\Gg)\cdot \vf_{\text{else}}^{(l)}(\gG)\|+\|\vf_{\text{else}}^{(l)}(\Gg\cdot\gG)\right\|\right]\\
    \leq& 2\cdot \|\vf_{\text{else}}^{(l)}(\gG)\|.
\end{aligned}
\end{equation}
Traditionally, practitioners have opted for data-driven methods, such as data augmentation, to achieve equivariance. This approach aims to constrain the norm $\|\vf_{\text{else}}^{(l)}\|$ by adjusting the parameters $w_{\beta}^{(s,t)}$ throughout training, with the goal of minimizing their values. In contrast, models derived from geometric learning directly incorporate the equivariance constraint into the model architecture, effectively achieving $\|\vf_{\text{else}}^{(l)}\|=0$. A fundamental aspect of these models is the construction of the basis set $\sB^{(l)}$ , allowing them to systematically remove irrelevant or non-existent components from the learning process.

Moreover, for a $\vf^{(l)}(\gG)$, obtaining $\vf_{\text{test}}^{(l)}(\gG)$ or calculating $\|\vf_{\text{test}}^{(l)}(\gG)\|$can be intractable. However, a plausible approach is to utilize another general model whose output is bounded by a constant $M$ to learn the residue. This leads us to the relationship $\|\vf_{\text{test}}^{(l)}(\gG)\|\leq \|\vf^{(l)}(\gG)\|\leq M$. We introduce this concept here, hoping it will inspire future research.

Additionally, it is important to emphasize that we only require the inference model to be equivariant. In practice, we allow the use of non-equivariant methods for training our model. For instance, noise injection in SaVeNet~\citep{aykent2023savenet} and the Huber loss in MEAN~\citep{kong2023conditional} are examples of such methods. These approaches modify the coefficients $w_{\alpha}$, but they do not violate the equivariance constraint.

\end{document}